\PassOptionsToPackage{dvipsnames}{xcolor}
\documentclass[11pt,letterpaper]{article}


\usepackage{amsmath,amssymb,amsthm}
\usepackage{deepthink}

\usepackage[nameinlink]{cleveref}
\usepackage[
  backend=biber,
  style=alphabetic,
  maxbibnames=100,
  minbibnames=100,
  maxcitenames=2,
  mincitenames=2
]{biblatex}
\addbibresource{diffusion.bib}

\newcommand{\norm}[2]{\left\| #1 \right\|_{#2}}
\usepackage[utf8]{inputenc} 
\usepackage[T1]{fontenc}    

\usepackage{url}            
\usepackage{booktabs}       
\usepackage{amsfonts}       
\usepackage{nicefrac}       
\usepackage{microtype}      
\usepackage{pifont}
\usepackage{enumitem}
\usepackage{graphics}

\usepackage{pifont}
\usepackage{newunicodechar}
\newunicodechar{✓}{\ding{51}}
\newunicodechar{✗}{\ding{55}}

\usepackage{tikz}
\newcommand*\circled[1]{\tikz[baseline=(char.base)]{
            \node[shape=circle,draw,inner sep=2pt] (char) {#1};}}

\usepackage{amssymb}
\usepackage{amsmath,amsfonts,amsthm}
\usepackage{latexsym}
\usepackage{mathrsfs}
\usepackage{color}
\usepackage{bbm}
\usepackage{algorithm}
\usepackage{algpseudocode}
\usepackage{subcaption}
\usepackage[mathscr]{euscript}
\usepackage{dsfont}
\usepackage[overload]{empheq}

\usepackage{hyperref}       
\usepackage{cleveref}
\usepackage{url}            
\usepackage{booktabs}       
\usepackage{bm}
\usepackage{graphicx}
\usepackage{makecell}
\usepackage{amsmath}

\newcommand{\beqa}{\begin{eqnarray}}
\newcommand{\eeqa}{\end{eqnarray}}
\newcommand{\beqas}{\begin{eqnarray*}}
\newcommand{\eeqas}{\end{eqnarray*}}
\newcommand{\ba}{\begin{array}}
\newcommand{\ea}{\end{array}}
\newcommand{\bi}{\begin{itemize}}
\newcommand{\ei}{\end{itemize}}




\DeclareMathOperator{\diag}{diag}

\newcommand{\rank}{\texttt{rank}}
\newcommand{\DDIM}{\texttt{DDIM}}
\newcommand{\DDIMInv}{\texttt{DDIM-Inv}}


\newcommand{\E}[1]{{\mathbb E}\left[ #1 \right]}


\newcommand{ \Brac }[1]{\left\lbrace #1 \right\rbrace}
\newcommand{ \brac }[1]{\left[ #1 \right]}
\newcommand{ \paren }[1]{ \left( #1 \right) }

\newtheorem{lemma}{Lemma}
\newtheorem{thm}{Theorem}

\newtheorem{assumption}{Assumption}

\newcounter{spb}
\setcounter{spb}{1}



\def\b0{\bm{0}}
\def\b1{\bm{1}}
\def\ba{\bm{a}}




\def\E{\mathbb{E}}

\title{Exploring Low-Dimensional Subspaces in Diffusion Models for Controllable Image Editing}

\newcommand{\jointfirst}{\textsuperscript{\dag}}
\newcommand{\corrauth}{\textsuperscript{\ddag}}

\authorblock{
  \href{https://chicychen.github.io/}{\textbf{Siyi Chen}}\jointfirst,
  \href{https://www.huijiezh.com/}{\textbf{Huijie Zhang}}\jointfirst, \href{https://scholar.google.com/citations?user=tPFpWEAAAAAJ\&hl=en}{\textbf{Minzhe Guo}}, \href{https://scholar.google.com/citations?user=ybsmKpsAAAAJ\&hl=en}{\textbf{Yifu Lu}},  \href{https://peng8wang.github.io/}{\textbf{Peng Wang}},
  \href{https://qingqu.engin.umich.edu/}{\textbf{Qing Qu}}\corrauth
}


\affiliation{
  University of Michigan
}

\authornote{
  \jointfirst\ Joint first author \quad
  \corrauth\ Corresponding author
}

\abstracttext{
\noindent Recently, diffusion models have emerged as a powerful class of generative models. 
Despite their success, there is still limited understanding of their semantic spaces. This makes it challenging to achieve precise and disentangled image generation without additional training, especially in an unsupervised way. 
In this work, we improve the understanding of their semantic spaces from intriguing observations: among a certain range of noise levels, (1) the learned posterior mean predictor (PMP) in the diffusion model is locally linear, and (2) the singular vectors of its Jacobian lie in low-dimensional semantic subspaces. We provide a solid theoretical basis to justify the linearity and low-rankness in the PMP. These insights allow us to propose an unsupervised, single-step, training-free \textbf{LO}w-rank \textbf{CO}ntrollable image editing (LOCO Edit) method for precise local editing in diffusion models. LOCO Edit identified editing directions with nice properties: homogeneity, transferability, composability, and linearity. These properties of LOCO Edit benefit greatly from the low-dimensional semantic subspace.
Our method can further be extended to unsupervised or text-supervised editing in various text-to-image diffusion models (T-LOCO Edit). Finally, extensive empirical experiments demonstrate the effectiveness and efficiency of LOCO Edit.
}

\keywords{Diffusion Model, Controllable Generation, Low-rank Subspace}

\date{\today}
\correspondence{ \;\href{mailto:siyich@umich.edu}{siyich@umich.edu}}
\resources{\; \href{https://chicychen.github.io/LOCO}{Project page} \quad $\cdot$ \quad \href{https://github.com/ChicyChen/LOCO-Edit}{Code}}


\headerlogo{Deepthink_landscape_do_not_delete_compressed.png}{https://deepthink-umich.github.io}

\begin{document}

\makeDeepthinkHeader

\begin{figure}[h]
    \centering
    \begin{subfigure}[b]{\textwidth}
         \centering
        \includegraphics[width=\linewidth]{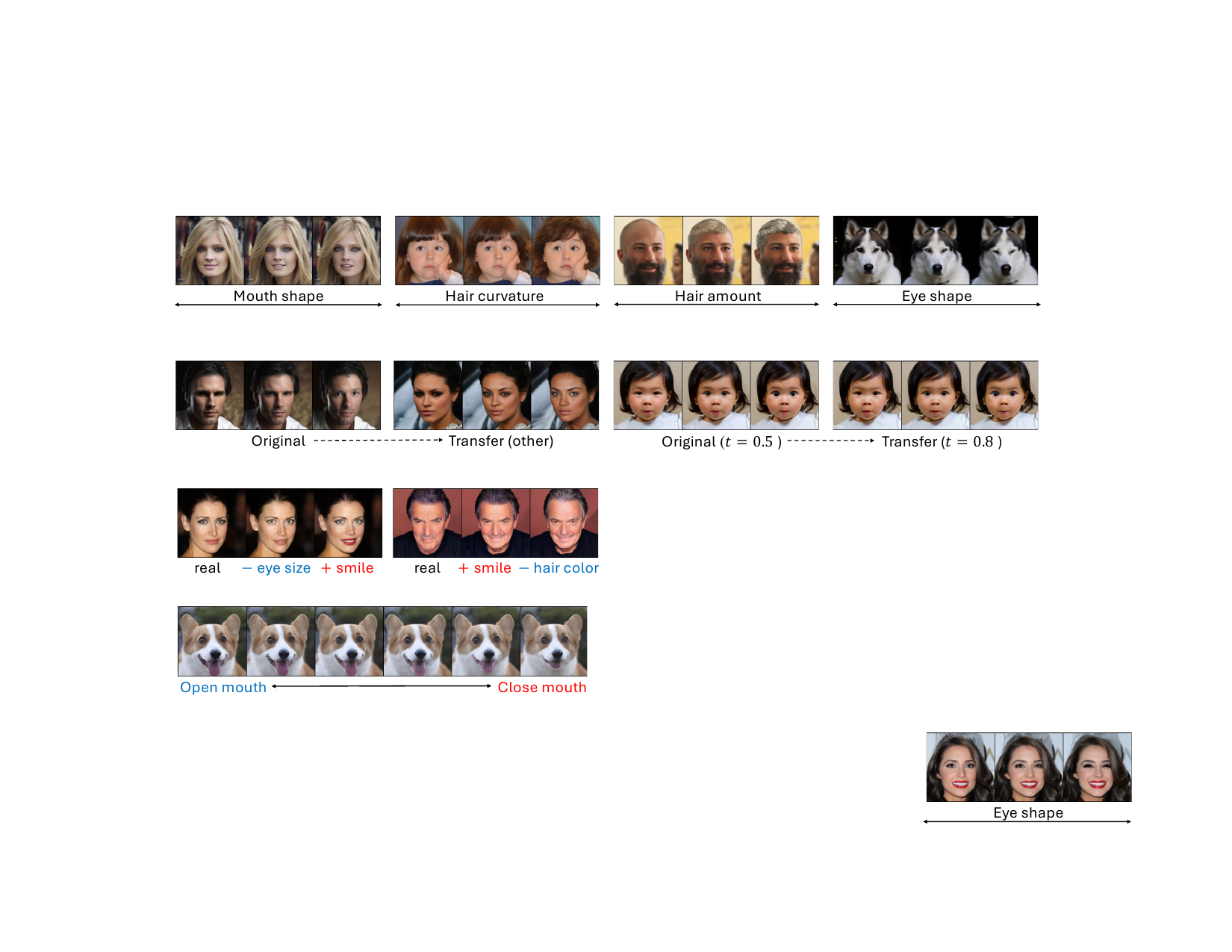}%
        \vspace{-0.2cm}
        \caption{\textbf{Precise and localized image editing.}}
         \label{fig:precise}
     \end{subfigure}
     \begin{subfigure}[b]{\textwidth}
         \centering
        \includegraphics[width=\linewidth]{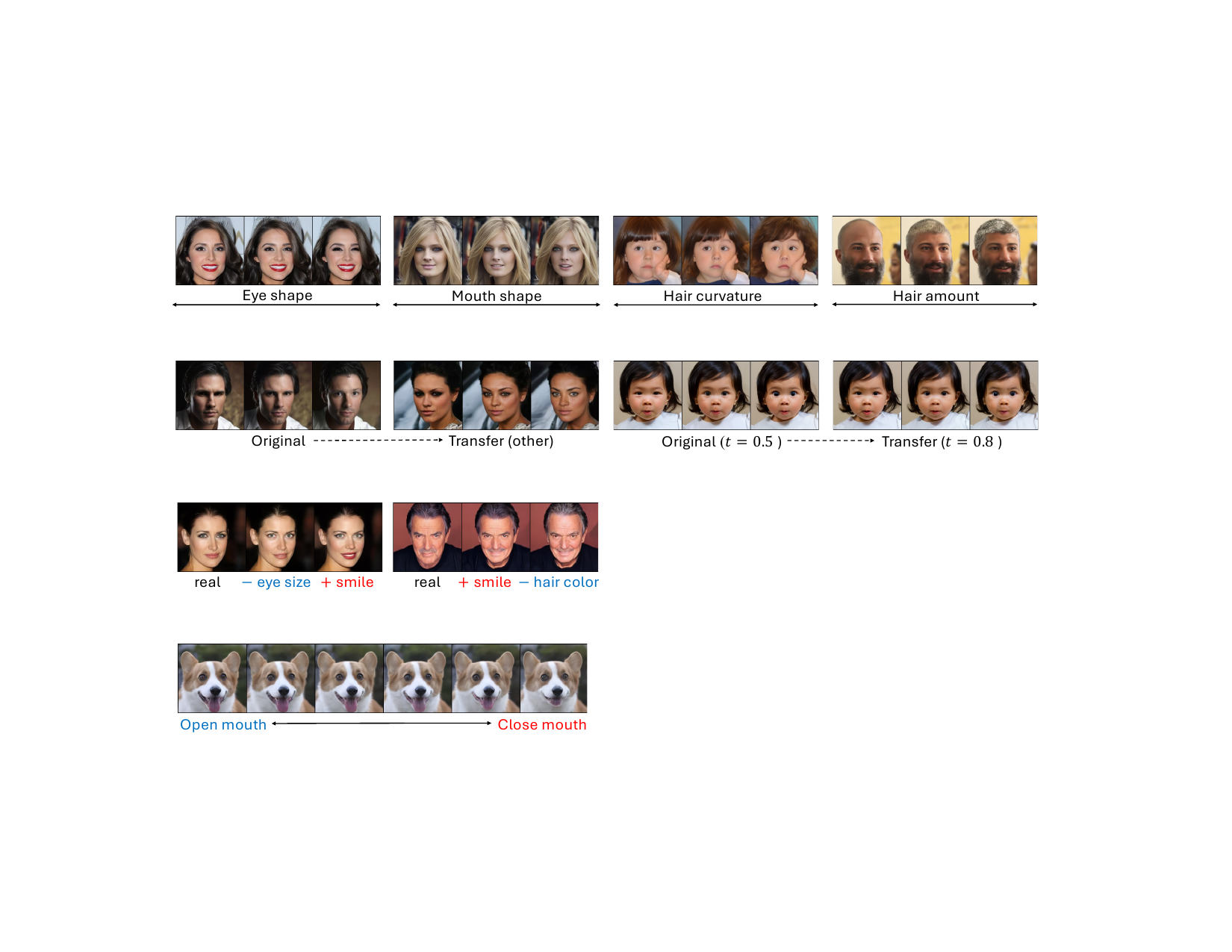}%
        \vspace{-0.2cm}
        \caption{\textbf{Homogeneity and transferability of the editing direction.}}
         \label{fig:homogeneous}
     \end{subfigure}
     \begin{subfigure}[b]{.505\textwidth}
         \centering
        \includegraphics[width=\linewidth]{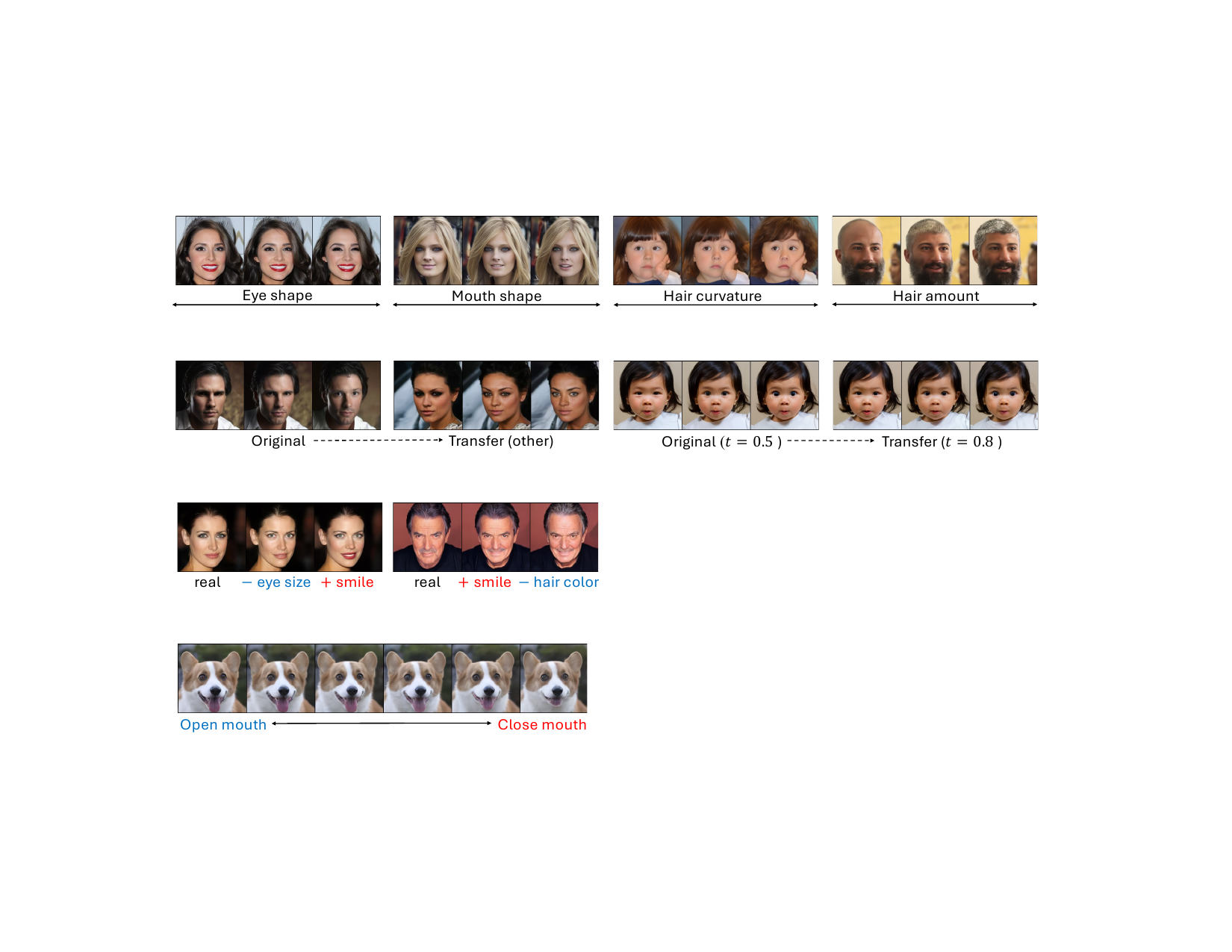}%
        \vspace{-0.2cm}
        \caption{\textbf{Composability of disentangled directions.}}
         \label{fig:composable}
     \end{subfigure}%
     \hspace{0.05cm}
     \begin{subfigure}[b]{.485\textwidth}
         \centering
        \includegraphics[width=\linewidth]{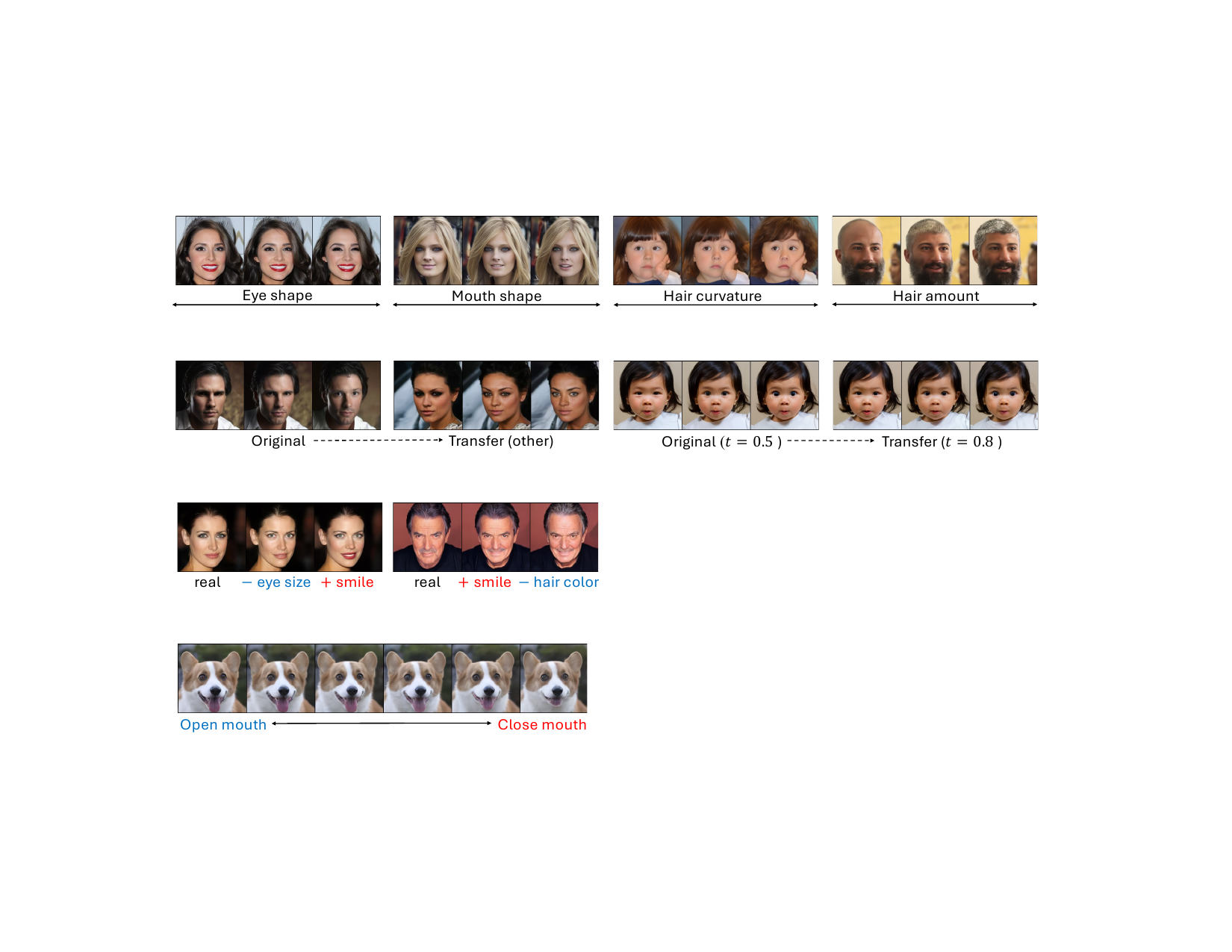}%
        \vspace{-0.2cm}
        \caption{\textbf{Linearity in the editing direction.}}
         \label{fig:linear}
     \end{subfigure}
     \vspace{-0.1in}
    \caption{\textbf{LOCO Edit.} (a) The proposed method can perform precise localized editing in the region of interest. The editing direction is (b) homogeneous, (c) composable, and (d) linear.}
    \label{fig:composition}
    \vspace{-0.2in}
\end{figure}

\newpage
\tableofcontents

\newpage

\section{Introduction}

Recently, diffusion models have emerged as a powerful new family of deep generative models
with remarkable performance in many applications such as image generation across various domains~\autocite{ddpm, song2020score, ddim, stablediffusion, zhang2024improving, alkhouri2023diffusion}, audio synthesis~\autocite{kong2020diffwave, chen2020wavegrad}, solving inverse problem~\autocite{chung2022mcg, song2023Pseudoinverse, dps, li2024decoupled, alkhouri2023robust, song2024solving}, and video generation~\autocite{PVDM, stablevideodiffusion, khachatryan2023text2video}. 
For example, recent advances in AI-based image generation, revolutionized by diffusion models such as Dalle-2~\cite{dalle2}, Imagen~\cite{DeepFloydIF}, and stable diffusion~\autocite{stablediffusion}, have taken the world of ``AI Art generation'', enabling the generation of images directly from descriptive text inputs. These models corrupt images by adding noise through multiple steps of forward process and then generate samples by progressive denoising through multiple steps of the reverse generative process.

Although modern diffusion models are capable of generating photorealistic images from text prompts, manipulating the generated content by diffusion models in practice has remaining challenges. Unlike generative adversarial networks~\autocite{StyleGAN}, the understanding of semantic spaces in diffusion models is still limited. Thus, achieving disentangled and localized control over content generation by direct manipulation of the semantic spaces remains a difficult task for diffusion models. Although effective, some existing editing methods in diffusion models often demand additional training procedures and are limited to global control of content generation~\autocite{controlnet, dreambooth, dalva2023noiseclr}. Some methods are training-free or localized but are still based upon heuristics, lacking clear mathematical interpretations, or for text-supervised editing only~\autocite{blendeddiffusion, kouzelis2024enabling, couairon2023diffedit, brack2023sega, wu2023uncovering}. Others provide analysis in diffusion models~\autocite{hspace, park2023understanding, zhu2023boundary, manor2024zeroshot, manor2024on}, but also have difficulty in local edits such as hair color.

In this study, we address the above problem by studying the low-rank semantic subspaces in diffusion models and proposing the LOw-rank COntrollable edit (LOCO Edit) approach. LOCO is the first local editing method that is single-step, training-free, requiring no text supervision, and having other intriguing properties (see \Cref{fig:composition} for an illustration). Our method is highly intuitive and theoretically grounded, originating from a simple while intriguing observation in the learned \emph{posterior mean predictor} (PMP) in diffusion models: for a large portion of denoising time steps,
\begin{center}
    \textbf{\emph{The PMP is a locally linear mapping between the noise image and the estimated clean image, and the singular vectors of its Jacobian reside within low-dimensional subspaces.}}
\end{center}
The empirical evidence in \Cref{fig:low_rank_linear} consistently shows that this phenomenon occurs when training diffusion models using different network architectures on a range of real-world image datasets.
Theoretically, we validated this observation by assuming a mixture of low-rank Gaussian distributions for the data. We then prove the local linearity of the PMP, the low-rank nature of its Jacobian, and that the singular vectors of the Jacobian span the low-dimensional subspaces.



By utilizing the linearity of the PMP, we can edit within the singular vector subspace of its Jacobian to achieve linear control of the image content with no label or text supervision. The editing direction can be efficiently computed using the generalized power method (GPM)~\autocite{park2023understanding,subspace-iteration}. Furthermore, we can manipulate specific regions of interest in the image along a disentangled direction through efficient nullspace projection, taking advantage of the low-rank properties of the Jacobian. 

\paragraph{Benefits of LOCO Edit.} Compared to existing editing methods (e.g.,~\autocite{hspace, riemannian, dalva2023noiseclr, blendeddiffusion}) based on diffusion models, the proposed LOCO Edit offers several benefits that we highlight below: 

\begin{itemize}[leftmargin=*]
    \item \textbf{Precise, single-step, training-free, and unsupervised editing.} LOCO enables precise \emph{localized} editing (\Cref{fig:precise}) in a single timestep without any training. Further, it requires no text supervision based on CLIP~\autocite{clip}, thus integrating no intrinsic biases or flaws from CLIP~\autocite{tong2024eyes}. LOCO is applicable to various diffusion models and datasets (\Cref{fig:diffdata}). 
    \item \textbf{Linear, transferable, and composable editing directions.}
The identified editing direction is linear, meaning that changes along this direction produce proportional changes in a semantic feature in the image space (\Cref{fig:linear}). These editing directions are homogeneous and can be transferred across various images and noise levels (\Cref{fig:homogeneous}). Moreover, combining disentangled editing directions leads to simultaneous semantic changes in the respective region, while maintaining consistency in other areas (\Cref{fig:composable}).
    \item \textbf{An intuitive and theoretically grounded approach.} Unlike previous works, by leveraging the local linearity of the PMP and the low-rankness of its Jacobian, our method is highly interpretable. The identified properties are well supported by both our empirical observation (\Cref{fig:low_rank_linear}) and theoretical justifications in \Cref{sec:verification}.  
\end{itemize} 

Moreover, LOCO Edit is generalizable to T-LOCO Edit for T2I diffusion models including DeepFloyd IF~\autocite{DeepFloydIF}, Stable Diffusion~\autocite{stablediffusion}, and Latent Consistency Models~\autocite{LCM}, with or without text supervision (\Cref{fig:t2i}). A more detailed discussion on the relationship with prior arts can be found in \Cref{app:related-works}.

\paragraph{Notations.} 
Throughout the paper, we use $\mathcal{X}_t \subseteq \mathbb R^d$ to denote the noise-corrupted image space at the time-step $t \in [0,1]$. In particular, $\mathcal{X}_0$ denotes the clean image space with distribution $p_{\text{data}}(\bm{x})$, and $\bm{x}_0 \in \mathcal{X}_0$ denote an image. $\mathcal{X}_{0, t}$ denote the posterior mean space at time-step $t \in (0,1]$. Here, $\mathbb S^{d-1}$ denotes a unit hypersphere in $\mathbb R^d$, and $\mathrm{St}(d, r):= \{ \bm Z \in \mathbb R^{d \times r} \mid \bm Z^\top \bm Z = \bm I_r \}$ denotes the Stiefel manifold. $\widetilde{\rank}(\bm A)$ denotes the numerical rank of $\bm A$. $\mathbb E_{\bm x_0 \sim p_{\rm data}(\bm x)}[\bm x_0| \bm x_t]$ denotes the posterior mean and is written as $\mathbb E[\bm x_0| \bm x_t]$. $\operatorname{range(\bm A)}$ denotes the span of the columns of $\bm A$. $\operatorname{null}(\bm A)$ denotes the set of solutions to $\bm A \bm x = 0$. $\operatorname{proj}_{\operatorname{null} \bm A}(\bm x)$ denotes the projection of $\bm x$ onto $\operatorname{null}(\bm A)$.


\section{Preliminaries on Diffusion Models}
\label{sec:preliminaries}

In this section, we start by reviewing the basics of diffusion models~\autocite{ddpm, song2020score, karras2022elucidating}, followed by several key techniques that will be used in our approach, such as Denoising Diffusion Implicit Models (DDIM)~\autocite{ddim} and its inversion~\autocite{null-textual-inversion}, T2I diffusion model, and classifier-free guidance~\autocite{cfg}.


\paragraph{Basics of Diffusion Models.} In general, diffusion models consist of two processes:

\begin{itemize}[leftmargin=*]
    \item \emph{The forward diffusion process.} The forward process progressively perturbs the original data $\bm{x}_0$ to a noisy sample $\bm{x}_{t}$ for $t\in [0, 1]$ with the Gaussian noise. As in~\autocite{ddpm}, this can be characterized by a conditional Gaussian distribution $p_{t}(\bm x_t|\bm x_0) = \mathcal{N}(\bm x_t;\sqrt{\alpha_t} \bm x_0, (1 - \alpha_t)\textbf{I}_d)$. Particularly, parameters $\{\alpha_t\}_{t=0}^1$ sastify: (\emph{i}) $\alpha_0 = 1$, and thus $p_0=p_{\rm data}$, and (\emph{ii}) $\alpha_1 = 0$, and thus $p_{1} = \mathcal{N}(\bm 0,\textbf{I}_d)$.   
    \item \emph{The reverse sampling process.} To generate a new sample, previous works~\autocite{ddpm, ddim, dpm-solver, edm} have proposed various methods to approximate the reverse process of diffusion models. Typically, these methods involve estimating the noise $\bm \epsilon_t$ and removing the estimated noise from $\bm x_t$ recursively to obtain an estimate of $\bm x_0$. Specifically, the sampling step from $\bm x_{t}$ to $\bm x_{t-\Delta t}$ with a small $\Delta t >0$ can be described as:
\begin{equation}
    \label{eq:ddim}
    \bm x_{t-\Delta t}  = \sqrt{\alpha_{t - \Delta t}} \left(\frac{\bm x_t - \sqrt{1 - \alpha_{t}} \bm \epsilon_{\bm \theta}(\bm x_t, t)}{\sqrt{\alpha_{t}}}\right) + \sqrt{1 - \alpha_{t - \Delta t}} \bm \epsilon_{\bm \theta}(\bm x_t, t),
\end{equation}
    where $\bm \epsilon_{\bm \theta}(\bm x_t, t)$ is parameterized by a neural network and trained to predict the noise at time $t$.
\end{itemize}

\paragraph{Denoiser and Posterior Mean Predictor (PMP).} According to~\autocite{ddpm}, the denoiser $\bm \epsilon_{\bm \theta}(\bm x_t, t)$ is optimized by solving the following problem: 
\begin{align*}
    \min_{\bm \theta} \ell (\bm \theta) \coloneq  \mathbb{E}_{t \sim [0, 1],\bm{x}_t \sim p_t(\bm{x}_t | \bm{x}_0), \bm \epsilon \sim \mathcal N (\textbf 0, \textbf I) } \brac{  \|\bm{\epsilon}_{\bm{\theta}}(\bm x_t, t) - \bm{\epsilon}\|_2^2},
\end{align*}
where $\bm \theta$ denotes the network parameters of the denoiser.
Once $\bm \epsilon_{\bm \theta}$ is well trained, recent studies~\autocite{reproducibility, luo2022understanding} show that the posterior mean $\mathbb E[\bm x_0|\bm x_t]$, i.e., predicted clean image at time $t$, can be estimated as follows: 
 \begin{align}\label{eqn:posterior-mean}
   \Hat{\bm{x}}_{0,t} = \bm f_{\bm \theta, t}(\bm x_t;t)  \coloneqq \frac{\bm x_t - \sqrt{1 - \alpha_{t}} \bm \epsilon_{\bm \theta}(\bm x_t, t)}{\sqrt{\alpha_{t}}},
\end{align}
Here, $\bm f_{\bm \theta, t}(\bm x_t;t)$ denotes the \emph{posterior mean predictor} (PMP)~\autocite{luo2022understanding, reproducibility}, and $\bm{\Hat{x}}_{0,t} \in \mathcal{X}_{0, t}$ denotes the estimated posterior mean output from PMP given $\bm x_t$ and $t$ as the input. 
For simplicity, we denote $\bm f_{\bm \theta, t}(\bm x_t;t)$ as $\bm f_{\bm \theta, t}(\bm x_t)$.

\paragraph{DDIM and DDIM Inversion.} 
Given a noisy sample $\bm x_t$ at time $t$, DDIM~\autocite{ddim} can generate clean images by multiple denoising steps. Given a clean sample $\bm x_0$, DDIM inversion~\autocite{ddim} can generate a noisy $\bm{x}_{t}$ at time $t$ by adding multiple steps of noise following the reversed trajectory of DDIM. DDIM inversion has been widely in image editing methods~\autocite{null-textual-inversion, diffusionclip, hspace, riemannian, discovering, couairon2023diffedit} to obtain $\bm x_t$ given the original $\bm x_0$ and then performing editing starting from $\bm x_t$. In our work, after getting $\bm x_t$ given $\bm x_0$ via DDIM inversion, we edit $\bm x_t$ to $\bm x'_t$ only at the single time step $t$ with the help of PMP, and then utilize DDIM to generate the edited image $\bm x'_0$.

For ease of exposition, for any $t_1$ and $t_2$ with $t_2>t_1$, we denote DDIM operator and its inversion as $\bm x_{t_1} = \DDIM(\bm x_{t_2}, t_1) \quad \text{and} \quad \bm x_{t_2} = \DDIMInv (\bm x_{t_1}, t_2).$

\paragraph{Text-to-image (T2I) Diffusion Models \& Classifier-Free Guidance.} So far, our discussion has only focused on unconditional diffusion models. Moreover, our approach can be generalized from unconditional diffusion models to T2I diffusion models~\autocite{LCM,stablediffusion,stablediffusionv3,DeepFloydIF}, where the latter enables controllable image generation $\bm x_0$ guided by a text prompt $c$. In more detail, when training T2I diffusion models, we optimize a conditional denoising function $\bm{\epsilon}_{\bm{\theta}}(\bm x_t, t, c)$. For sampling, we employ a technique called \textit{classifier-free guidance}~\autocite{cfg}, which substitutes the unconditional denoiser $\bm{\epsilon}_{\bm{\theta}}(\bm x_t, t)$ in \Cref{eq:ddim} with its conditional counterpart $\bm{\Tilde{\epsilon}}_{\bm{\theta}}(\bm x_t, t, c)$ that can be described as follows:
 \begin{equation}
     \label{eq:cfg}
     \bm{\Tilde{\epsilon}}_{\bm{\theta}}(\bm x_t, t, c) = \bm{\epsilon}_{\bm{\theta}}(\bm x_t, t, \varnothing) + \eta (\bm{\epsilon}_{\bm{\theta}}(\bm x_t, t, c) - \bm{\epsilon}_{\bm{\theta}}(\bm x_t, t, \varnothing)).
 \end{equation}
Here, $\varnothing$ denotes the empty prompt and $\eta>0$ denotes the strength for the classifier-free guidance.

\section{Exploring Linearity \& Low-Dimensionality for Image Editting}
\label{sec:method}

In this section, we formally introduce the identified low-rank subspace in diffusion models and the proposed LOCO Edit method with the underlying intuitions. In \Cref{subsec:properties}, we present the benign properties in PMP that our method utilizes. Followed by this, in \Cref{subsec:method} we provide a detailed description of our method.

\begin{figure}[t]
\centering\includegraphics[width=.8\linewidth]{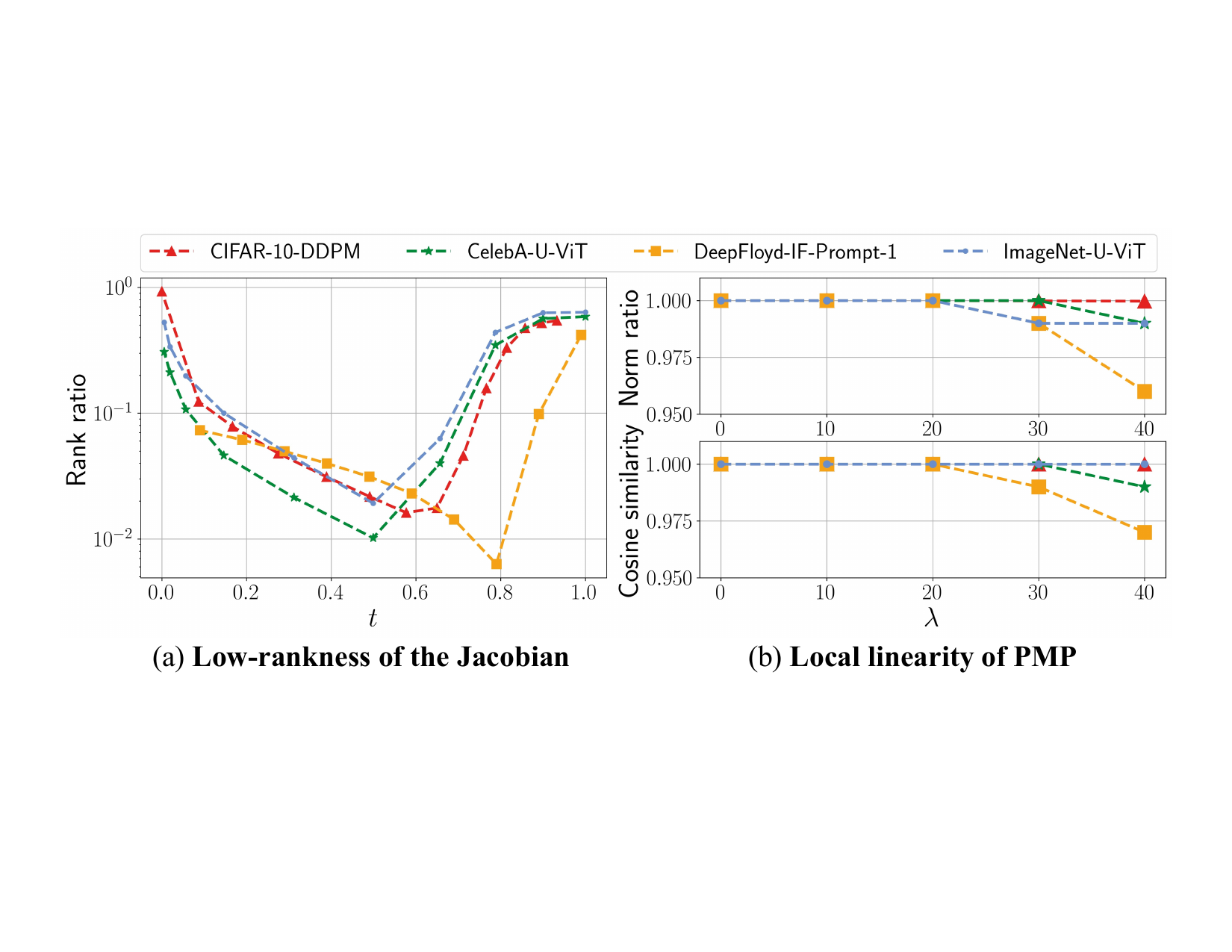}
    \caption{\textbf{Low-rankness of the Jacobian $\bm J_{\bm \theta, t}(\bm x_t)$ and \bf Local linearity of the PMP $\bm f_{\bm \theta, t}(\bm x_t)$}. We evaluated DDPM (U-Net~\autocite{unet}) on CIFAR-10 dataset~\autocite{cifar10}, U-ViT~\autocite{uvit} (Transformer) on CelebA~\autocite{CelebA}, ImageNet~\autocite{imagenet} datasets and DeepFloy IF~\autocite{DeepFloydIF} trained on LAION-5B~\autocite{schuhmann2022laion} dataset. (a) The rank ratio of $\bm J_{\bm \theta, t}(\bm x_t)$ against timestep $t$. (b) The norm ratio (Top) and cosine similarity (Bottom) between $\bm{f}_{\bm \theta,t}(\bm x_t+ \lambda \Delta \bm x)$ and $\bm l_{\bm \theta}(\bm x_t; \lambda \Delta \bm x)$ against step size $\lambda$ at timestep $t = 0.7$.}
    \label{fig:low_rank_linear}
\end{figure}

\subsection{Local Linearity and Intrinsic Low-Dimensionality in PMP}
\label{subsec:properties}

First, let us delve into the key intuitions behind the proposed LOCO Edit method, which lie in the benign properties of the PMP
$f_{\bm \theta, t}(\bm x_t)$. 
At one given timestep $t \in [0,1]$, let us consider the first-order Taylor expansion of $\bm f_{\bm \theta, t}(\bm x_t+ \lambda \Delta \bm x) $ at the point $\bm x_t$:
\begin{align}\label{eqn:linear-approx}
   \boxed{ \quad \bm l_{\bm \theta}(\bm x_t; \lambda \Delta \bm x) \; := \;   \bm f_{\bm \theta, t}(\bm x_t) + \lambda\bm J_{\bm \theta, t}(\bm x_t) \cdot \Delta \bm x,\quad }
\end{align}
where $\Delta \bm x \in \mathbb S^{d-1} $ is a perturbation direction with unit length, $\lambda \in \mathbb R$ is the perturbation strength, and $\bm J_{\bm \theta, t}(\bm x_t) = \nabla_{\bm x_t} \bm f_{\bm \theta, t}(\bm x_t)$ is the Jacobian of $\bm f_{\bm \theta, t}(\bm x_t)$. Interestingly, we discovered that within a certain range of noise levels, the learned PMP $\bm f_{\bm \theta, t}$ exhibits local linearity, and the singular subspace of its Jacobian $\bm J_{\bm \theta, t}$ is low rank. Notably, these properties are universal across various network architectures (e.g., UNet and Transformers) and datasets. 

We measure the low-rankness with rank ratio and the local linearity with norm ratio and cosine similarity. Specifically, (\emph{i}) \emph{rank ratio} is the ratio of $\widetilde{\rank}(\bm J_{\bm \theta, t}(\bm x_t))$ and the ambient dimension $d$; (\emph{ii}) \emph{norm ratio} is the ratio of $\|\bm f_{\bm \theta, t}(\bm x_t+ \lambda \Delta \bm x) \|_2$ and $\| \bm l_{\bm \theta}(\bm x_t; \lambda \Delta \bm x) \|_2$; (\emph{iii}) \emph{cosine similarity} is between $\bm f_{\bm \theta, t}(\bm x_t+ \lambda \Delta \bm x)$ and $ \bm l_{\bm \theta}(\bm x_t; \lambda \Delta \bm x)$. The detailed experiment settings are provided in \Cref{appendix:exp_setup_low_rank_linear}, and results are illustrated in \Cref{fig:low_rank_linear}, from which we observe:

\begin{itemize}[leftmargin=*]
    \item \textbf{Low-rankness of the Jacobian $\bm J_{\bm \theta, t}(\bm x_t)$.} 
    As shown in \Cref{fig:low_rank_linear}(a), the \emph{rank ratio} for $t\in[0,1]$ \emph{consistently} displays a U-shaped pattern across various network architectures and datasets: (\emph{i}) it is close to $1$ near either the pure noise $t=1$ or the clean image $t=0$, (\emph{ii}) $\bm J_{\bm \theta, t}(\bm x_t)$ is low-rank (i.e., rank ratio less than $10^{-1}$) for all diffusion models within the range $t \in [0.2, 0.7]$, (\emph{iii}) it achieves the lowest value around mid-to-late timestep, slightly differs depending on architecture and dataset. 
    \item \textbf{Local linearity of the PMP $\bm f_{\bm \theta, t}(\bm x_t)$.} Moreover, the mapping $\bm f_{\bm \theta, t}(\bm x_t)$ exhibits strong linearity across a large portion of the timesteps; see \Cref{fig:low_rank_linear}(b)  and \Cref{fig:linearity_more_results}. Specifically, in  \Cref{fig:low_rank_linear}(b), we evaluate the linearity of $\bm f_{\bm \theta, t}(\bm x_t)$ at $t=0.7$ where the rank ratio is close to the lowest value. We can see that $\bm f_{\bm \theta, t}(\bm x_t+ \lambda \Delta \bm x) \approx \bm l_{\bm \theta}(\bm x_t; \lambda \Delta \bm x)$ even when $\lambda = 40$, which is consistently true among different architectures trained on different datasets.
\end{itemize}

In addition to comprehensive experimental studies, we will also demonstrate in \Cref{sec:verification} that both properties can be theoretically justified.

\subsection{Key Intuitions for Our Image Editing Method} 
\label{sec:intuition}

The two benign properties offer valuable insights for image editing with precise control. Here, we first present the high-level intuitions behind our method, with further details postponed to \Cref{subsec:method}. Specifically, for any given time-step $t \in[0,1]$, let us denote the compact singular value decomposition (SVD) of the Jacobian $\bm J_{\bm \theta, t}(\bm x_{t})$ as 
\begin{align}
    \bm J_{\bm \theta, t}(\bm x_{t}) \;=\; \bm U \bm \Sigma \bm V^\top \;=\; \sum_{i=1}^{r} \sigma_{i} \bm u_{i} \bm v_{i}^\top,
\end{align}
where $r$ is the rank of $\bm J_{\bm \theta, t}(\bm x_{t})$, $\bm U = \begin{bmatrix}\bm u_{1} &\cdots & \bm u_{r}
\end{bmatrix} \in \mathrm{St}(d, r)$ and $\bm V = \begin{bmatrix}\bm v_{1} &\cdots & \bm v_{r}
\end{bmatrix} \in \mathrm{St}(d,r) $ denote the left and right singular vectors, and $\bm \Sigma = \diag( \sigma_{1},\cdots, \sigma_{r} ) $ denote the singular values. We write $\bm J_{\bm \theta, t}(\bm x_{t}) = \bm J_{\bm \theta, t}$ in short for a specific $\bm x_{t}$, and denote $\operatorname{range}(\bm J_{\bm \theta, t}^{\top})=\operatorname{span}(\bm V)$ and $\operatorname{null}(\bm J_{\bm \theta, t})=\{\bm w \mid \bm J_{\bm \theta, t} \bm w=0\}$.

\begin{itemize}[leftmargin=*]
    \item \textbf{Local linearity of PMP for one-step, training-free, and supervision-free editing.} Given the PMP $f_{\bm \theta, t}(\bm x_t)$ is locally linear at the $t$-th timestep, if we perturb $\bm x_t$ by $\Delta\bm x = \lambda \bm v_i$, using one right singular vector $\bm v_{i}$ of $\bm J_{\bm \theta, t}(\bm x_{t})$ as an example editing direction, then by orthogonality
    \begin{align}\label{eqn:linear-approx-2}
        \bm f_{\bm \theta, t}(\bm x_t+\lambda \bm v_i) \;\approx\; \bm f_{\bm \theta, t}(\bm x_t) + \bm J_{\bm \theta, t}(\bm x_{t}) \bm v_i \;=\; \bm f_{\bm \theta, t}(\bm x_t) + \lambda \sigma_{i}\bm u_{i} \;=\; \bm{\Hat{x}}_{0,t} +  \rho_{i} \bm u_{i}.
    \end{align}
    This implies we can achieve \emph{one-step editing} along the semantic direction $\bm u_{i}$. Notably, the method is training-free and supervision-free since $\bm v_i$ can be simply found via the SVD of $\bm J_{\bm \theta, t}(\bm x_{t})$.

    \item \textbf{Local linearity of PMP for linear, homogeneous, and composable image editing.}  (\emph{i}) First, the editing direction $\bm v= \bm v_{i}$ is \emph{linear}, where any linear $\lambda \in \mathbb R$ change along $\bm v_{i}$ results in a linear change $\rho_{i} = \lambda \sigma_{i}$ along $\bm u_{i}$ for the edited image. (\emph{ii}) Second, the editing direction $\bm v=\bm v_{i}$ is \emph{homogeneous} due to its independence of $\bm{\Hat{x}}_{0,t}$, where it could be applied on any images from the same data distribution and results in the same semantic editing. (\emph{iii}) Third, editing directions are \emph{composable}. Any linearly combined editing direction $\bm v = \sum_{i \in \mathcal I} \lambda_i \bm v_{i} \in \operatorname{range}\left(\boldsymbol{J}_{\theta, t}^{\top}\right)$ is a valid editing direction which would result in a composable change $\sum_{i \in \mathcal I} \rho_{i} \bm u_{i} $ in the edited image. On the contrary, $\bm w \in \operatorname{null}\left(\boldsymbol{J}_{\boldsymbol{\theta}, t}\right)$ results in no editing since $\bm f_{\bm \theta, t}(\bm x_t+\lambda \bm w) \approx \bm f_{\bm \theta, t}(\bm x_t)$.

    \item \textbf{Low-rankness of Jacobian for localized and efficient editing.} $\bm J_{\bm \theta, t}(\bm x_{t})$ is for the entire predicted clean image, thus $\bm J_{\bm \theta, t}(\bm x_{t})$ finds editing directions in the entire image. Denote $\bm{\Tilde{J}}_{\bm \theta, t}$ the Jacobian only for a certain region of interest (ROI), and $\bm{\bar{J}}_{\bm \theta, t}$ the Jacobian for regions outside ROI. Similarly, $\bm v \in \operatorname{range}\left(\tilde{\boldsymbol{J}}_{\theta, t}^{\top}\right)$ can edit mainly regions within the ROI, and $\operatorname{null}\left(\bar{\boldsymbol{J}}_{\theta, t}^{\top}\right)$ contain directions that do not edit regions outside of ROI. Further projection of $\bm v$ onto $\operatorname{null}\left(\bar{\boldsymbol{J}}_{\theta, t}^{\top}\right)$ can result in a more localized editing direction for ROI.
    To perform such nullspace projection, computing the full SVD can be very expensive. But we can highly reduce the computation by the low-rank estimation of Jacobians with rank $r'\ll d$. The estimation is efficient yet effective with $t \in [0.5, 0.7]$ when the rank of the Jacobian achieves the lowest value.  
\end{itemize}

\begin{figure}[t]
    \centering
    \includegraphics[width=.9\linewidth]{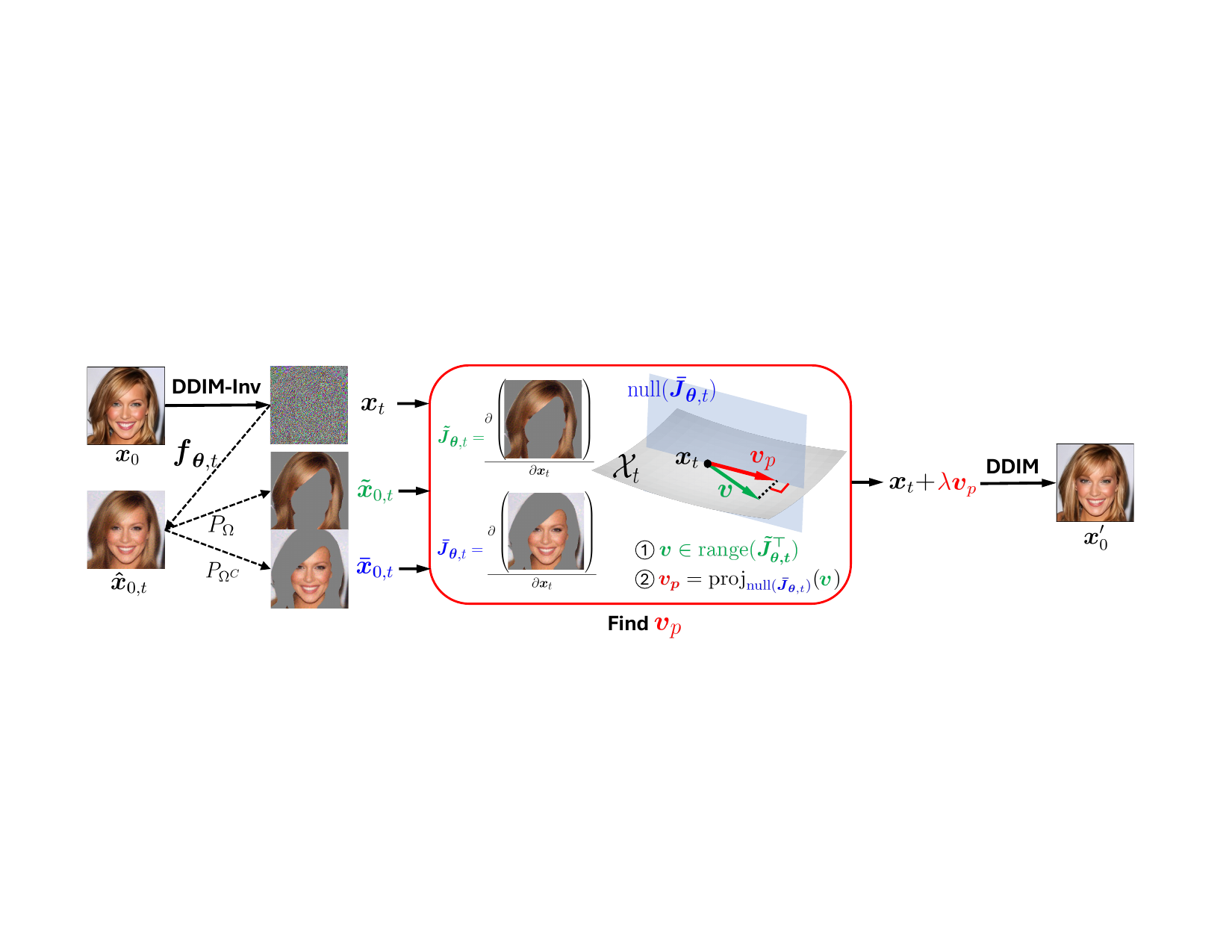}
    \caption{\textbf{Illustration of the unsupervised LOCO Edit for unconditional diffusion models.} Given an image $\bm x_0$, we perform \DDIMInv~until time $t$ to get $\bm x_t$, and estimate $\bm{\hat{x}_{0,t}}$ from $\bm x_t$. After masking to get the region of interest (ROI) \textcolor{ForestGreen}{$\bm{\Tilde{x}_{0,t}}$} and its counterparts \textcolor{blue}{$\bm{\bar{x}_{0,t}}$}, we find the edit direction \textcolor{red}{$\bm v_p$} via \textcolor{ForestGreen}{SVD} and \textcolor{blue}{nullspace projection} based on their Jacobians (\Cref{alg:uncond_algorithm}). By denoising $\bm x_t + \textcolor{red}{\lambda \bm v_p}$, an image $\bm x'_0$ with localized editing is generated. \emph{In this paper, the variables and notions related to \textcolor{ForestGreen}{ROI}, \textcolor{blue}{nullspace}, and \textcolor{red}{final direction} are respectively highlighted by \textcolor{ForestGreen}{green}, \textcolor{blue}{blue}, and \textcolor{red}{red} colors.}}
    \label{fig:method}
\end{figure}

\subsection{Low-rank Controllable Image Editing Method with Nullspace Projection}
\label{subsec:method}


In this subsection, we provide a detailed introduction to LOCO Edit, expanding on the discussion in \Cref{subsec:properties}.
We first introduce the supervision-free LOCO Edit, where we further enable localized image editing through nullspace projection with masks. Second, we generalize to T-LOCO Edit for T2I diffusion models w/wo text-supervision to define the semantic editing directions.

\paragraph{LOCO Edit.}
We first introduce the general pipeline of LOCO Edit. As illustrated in \Cref{fig:method}, given an original image $\bm x_0$, we first use $\bm x_t = \DDIMInv(\bm x_0,t)$
to generate a noisy image $\bm x_t$. In particular, we choose $t \in [0.5,0.7]$ so that the PMP $\bm f_{\bm \theta, t}(\bm x_t)$ is locally linear and its Jacobian $\bm J_{\bm \theta, t}(\bm x_{t})$ is close to its lowest rank. From \Cref{subsec:properties}, we know that we can edit the image by changing $\bm x_{t}' = \bm x_t+\lambda \bm v_p$, where $\bm v_p$ is the identified editing direction. After editing $\bm x_t$ to $\bm x_{t}'$, we use $\bm x_0' = \DDIM\paren{ \bm x_{t}',0}$ to generate the edited image.

\begin{algorithm}[t]
\caption{Unsupervised LOCO Edit}
\label{alg:uncond_algorithm}
\begin{algorithmic}[1]
\State \small{\textbf{Input}: original image $\bm x_0$, the mask $\Omega$, pretrained diffusion model $\bm \epsilon_{\bm \theta}$, editing strength $\lambda$, semantic index $k$, number of semantic directions $r$, editing timestep $t\in[0.5,0.7]$, the rank $r'=5$.

\State \small{\textbf{Output}: edited image $\bm x'_0$,}}

\State Generate $\bm x_{t} \gets \text{\DDIMInv}(\bm x_0, t) $
\Comment{noisy image at $t$-th timestep}

\State Compute the top-$r$ SVD $(\bm{\Tilde{U}},\bm{\Tilde{\Sigma}},\textcolor{ForestGreen}{\bm{\Tilde{V}}_{}})$ of $\textcolor{ForestGreen}{\bm{\Tilde{J}}_{\bm \theta, t}} =  \nabla_{ \bm x_t} P_{\Omega}(\bm f_{\bm \theta, t}(\bm x_t) )$ 
\State Compute the top-$r'$ SVD $(\bm{\bar{U}},\bm{\bar{\Sigma}},\textcolor{blue}{\bm{\bar{V}}_{}})$ of $\textcolor{blue}{\bm{\bar{J}}_{\bm \theta, t}} = \nabla_{ \bm x_t} P_{\Omega^C}(\bm f_{\bm \theta, t}(\bm x_t) ) $ 

\State Pick direction $\textcolor{ForestGreen}{\bm v} \gets \textcolor{ForestGreen}{\bm{\Tilde{V}}_{}}[:,i]$ \Comment{\circled{1} Pick the $k^{th}$ singular vector for the editing direction}

\State Compute $\textcolor{red}{\bm v_p} \gets (\bm I - \textcolor{blue}{\bm{\bar{V}} \bm{\bar{V}}^\top}) \cdot \textcolor{ForestGreen}{\bm v}$ \Comment{\circled{2} Nullspace projection for editing within the mask $\Omega$}

\State $\textcolor{red}{\bm v_p} \gets \textcolor{red}{\bm v_p}/ \| \textcolor{red}{\bm v_p} \|_2$ \Comment{Normalize the editing direction}

\State \textbf{Return:} $\bm {x'}_0 \gets \text{\DDIM}(\bm x_t + \textcolor{red}{\lambda \bm v_p},  0)$ 
\Comment{Editing with forward DDIM along the direction $\bm v_p$}
\end{algorithmic}
\end{algorithm}

In many practical applications, we often need to edit only specific \emph{local} regions of an image while leaving the rest unchanged. As discussed in \Cref{sec:intuition}, we can achieve this task by finding a precise local editing direction with localized Jacobians and nullspace projection. Overall, the complete method is in \Cref{alg:uncond_algorithm}. We describe the key details as follows.

\begin{itemize}[leftmargin=*]
    \item \textbf{Finding localized Jacobians via masking.} To enable local editing, we use a mask $\Omega$ (i.e., an index set of pixels) to select the region of interest,\footnote{For datasets that have predefined masks, we can use them directly. For other datasets that lack predefined masks as well as generate images, we can utilize Segment Anything (SAM) to generate masks~\autocite{sam}.} with $\mathcal P_{\Omega}(\cdot)$ denoting the projection onto the index set $\Omega$. For picking a local editing direction, we calculate the Jacobian of $\bm f_{\bm \theta, t}(\bm x_t)$ restricted to the region of interest, $\textcolor{ForestGreen}{\bm{\Tilde{J}}_{\bm \theta, t}} =  \nabla_{ \bm x_t} P_{\Omega}(\bm f_{\bm \theta, t}(\bm x_t) )  = \bm{\Tilde{U}} \bm{\Tilde{\Sigma}}\textcolor{ForestGreen}{\bm{\Tilde{V}}}^\top$, and select the localized editing direction $\textcolor{ForestGreen}{\bm v}$ from the top-$r$ singular vectors of $\textcolor{ForestGreen}{\bm{\Tilde{V}}}$ (e.g., $\textcolor{ForestGreen}{\bm v} = \textcolor{ForestGreen}{\bm{\Tilde{V}}[:,k] \in \operatorname{range}\tilde{\boldsymbol{J}}_{\theta, t}^{\top}}$ for some index $k\in [r]$). In practice, a top-$r$ rank estimation for $\textcolor{ForestGreen}{\bm{\Tilde{V}}}$ is calculated through the generalized power method (GPM) \Cref{alg:jacob_subspace} with $r=5$ to improve efficiency.
    \item \textbf{Better semantic disentanglement via nullspace projection.} However, the projection $\mathcal P_{\Omega}(\cdot)$ introduces extra \emph{nonlinearity} into the mapping $P_{\Omega}(\bm f_{\bm \theta, t}(\bm x_t) ) $, causing the identified direction to have semantic entanglements with the area $\Omega^C$ outside of the mask. Here, $\Omega^C$ denotes the complimentary set of $\Omega$. To address this issue, we can use the nullspace projection method~\autocite{banerjee2014linear, LowRankGAN}. Specifically, given $\textcolor{blue}{\bm{\bar{J}}_{\bm \theta, t}} =  \nabla_{ \bm x_t} P_{\Omega^C}(\bm f_{\bm \theta, t}(\bm x_t) ) = \bm{\bar{U}} \bm{\bar{\Sigma}}\textcolor{blue}{\bm{\bar{V}}}^\top$, nullspace projection projects $\textcolor{ForestGreen}{\bm v}$ onto $\textcolor{blue}{\operatorname{null}\left(\bar{\boldsymbol{J}}_{\theta, t}^{\top}\right)}$. The projection can be computed as $\textcolor{red}{\bm v_p} = \operatorname{proj}_{\textcolor{blue}{\operatorname{null}\left(\bar{\boldsymbol{J}}_{\bm \theta, t}\right)}}(\textcolor{ForestGreen}{\bm v}) = (\bm I - \textcolor{blue}{\bm{\bar{V}} \bm{\bar{V}}^\top}) \textcolor{ForestGreen}{\bm v}$ so that the modified $\textcolor{red}{\bm v_p}$ does not change the image in $\Omega^C$. In practice, we calculate a top-$r'$ rank estimation for $\textcolor{blue}{\bm{\bar{V}}}$ through the generalized power method (GPM) \Cref{alg:jacob_subspace} with $r'=5$.
\end{itemize}

\paragraph{T-LOCO Edit.} The unsupervised edit method can be seamlessly applied to T2I diffusion models with classifier-free guidance \eqref{eq:cfg} (\Cref{alg:t2i_uncond_algorithm}). Besides, we can further enable text-supervised image editing with an editing prompt (\Cref{alg:t2iv2_algorithm}). See results in \Cref{fig:t2i}(a). This is useful because the additional text prompt allows us to enforce a specified editing direction that \emph{cannot} be found easily in the semantic subspace of the vanilla Jacobian $\bm J_{\bm \theta, t}$. As illustrated in \Cref{fig:t2i}(b), this includes adding glasses or changing the curly hair of a human face. For simplicity, we introduce the key ideas of text-supervised T-LOCO Edit based upon DeepFloyd IF~\autocite{DeepFloydIF}. Similar procedures are also generalized to Stable Diffusion and Latent Consistency Models with an additional decoding step~\autocite{stablediffusion,LCM}. We discuss the key intuition below, see \Cref{app:t-loco-un} and \Cref{appendix:t2i-text} for method details. 

\begin{figure}[t]
    \centering
    \includegraphics[width=.9\linewidth]{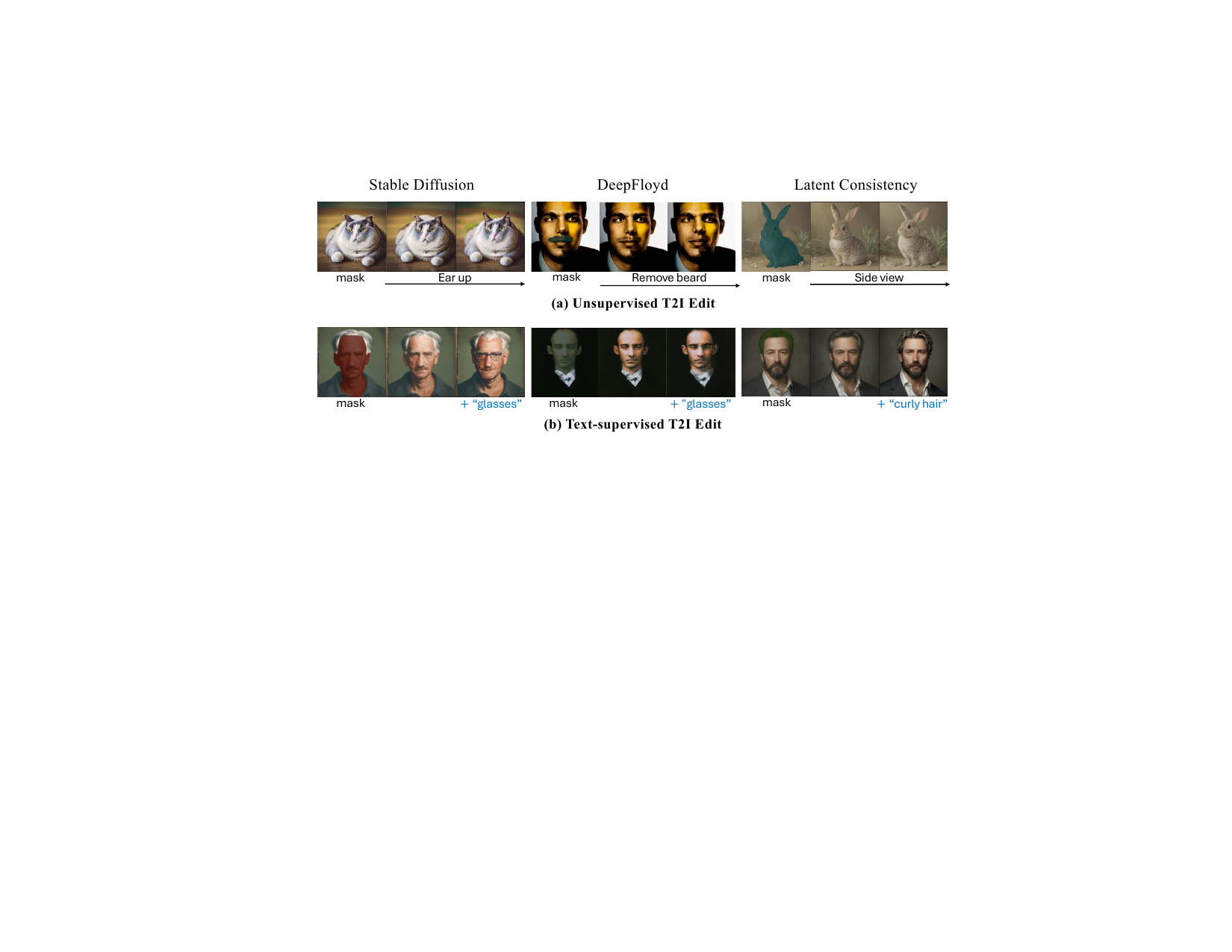}
    \caption{\textbf{T-LOCO Edit on T2I diffusion models.} (a) Unsupervised editing direction is found only via the given mask without editing prompt. (b) Text-supervised editing direction is found with both a mask and an editing prompt such as "with glasses". Experiment details can be found in \Cref{appendix:t2i_setup}.}
    \label{fig:t2i}

\end{figure}

We first introduce some notations. Let $c_o$ denote the original prompt, and $c_e$ denote the editing prompt. For example, in \Cref{fig:t2i}(b), $c_o$ can be ``portrait of a man'', while $c_e$ can be ``portrait of a man with glasses''. Correspondingly, given the noisy image $\bm x_t$ for the clean image $\bm x_0$ generated with $c_o$, let $\bm f_{\bm \theta, t}^o(\bm x_t)$ and $\bm J_{\bm \theta, t}^o(\bm x_t)$ be the estimated posterior mean and its Jacobian conditioned on the original prompt $c_o$, and let $\bm f_{\bm \theta, t}^e(\bm x_t)$ and $\bm J_{\bm \theta, t}^e(\bm x_t)$ be the estimated posterior mean and its Jacobian conditioned on both the editing prompt $c_e$ and $c_o$.

According to the classifier-free guidance \eqref{eq:cfg}, we can estimate the \emph{difference} of estimated posterior means caused by the editing prompt
as $ \textcolor{ForestGreen}{\bm d = \bm f_{\bm \theta, t}^e(\bm x_t) - \bm f_{\bm \theta, t}^o(\bm x_t)}$, and then set $\textcolor{ForestGreen}{\bm v = \bm J_{\bm{\theta},t}^e(\bm x_t)^\top  \bm d}$ as an initial estimator of the editing direction.\footnote{The idea is to identify the editing direction in the $\mathcal X_t$ space based on changes in the estimated posterior mean caused by the editing prompt. More details are provided in \Cref{appendix:t2i-text}.} Based upon this, to enable localized editing, similar to the unsupervised case, we can apply \textcolor{ForestGreen}{masks $\Omega$} to select \textcolor{ForestGreen}{ROI} in $\textcolor{ForestGreen}{\bm d}$ and calculate localized Jacobian to get $\textcolor{ForestGreen}{\bm v}$. After that, similarly, we can perform \textcolor{blue}{nullspace projection} of $\textcolor{ForestGreen}{\bm v}$ for better disentanglement to get the final editing direction $\textcolor{red}{\bm v_p}$.

\section{Justification of Local Linearity, Low-rankness, \& Semantic Direction}
\label{sec:verification}

In this section, we provide theoretical justification for the benign properties in \Cref{subsec:properties}. First, we assume that the image distribution $p_{\text{data}}$ follows \emph{mixture of low-rank Gaussians} defined as follows.

\begin{assumption}\label{assumption:data distrib}
The data $\bm x_0 \in \mathbb R^d$ generated distribution $p_{\text{data}}$ lies on a union of $K$ subspaces. The basis of each subspace $\Brac{\bm M_k \in \mathrm{St}(d,r_k)}_{k=1}^K$ are orthogonal to each other with $\bm M_i^\top \bm M_j = \bm 0$ for all $1\leq i \neq j\leq K$, and the subspace dimension $r_k$ is much smaller than the ambient dimension $d$. 
Moreover, for each $k \in [K]$, $\bm x_0$ follows degenerated Gaussian
with $\mathbb{P}\left(\bm x_0 = \bm M_k \bm a_k\right) = 1/K, \bm a_k \sim \mathcal N(\bm 0, \textbf{I}_{r_k}).$
Without loss of generality, suppose $\bm x_t$ is from the $h$-th class, that is $\bm x_t = \sqrt{\alpha_t} \bm x_0 + \sqrt{1 - \alpha_t} \bm \epsilon$ where $\bm x_0 \in \operatorname{range}(\bm M_h)$, i.e. $\bm x_0 = \bm M_h \bm a_h$. Both $||\bm x_0||_2, ||\bm \epsilon||_2$ is bounded.
\end{assumption}



Our data assumption is motivated by the intrinsic low-dimensionality of real-world image dataset~\autocite{pope2021intrinsic}.Additionally, Wang et al.~\autocite{wang2023hidden} demonstrated that images generated by an analytical score function derived from a mixture of Gaussians distribution exhibit conceptual similarities to those produced by practically trained diffusion models. Given that $\bm f_{\bm \theta, t}(\bm x_t)$ is an estimator of the posterior mean $\mathbb{E}[\bm x_0|\bm x_t]$, we show that the posterior mean $\mathbb{E}[\bm x_0|\bm x_t]$ can analytically derived as follows.

\begin{lemma}\label{lemma:posterior_mean}
    Under \Cref{assumption:data distrib}, for $t \in (0, 1]$, the posterior mean is 
    \begin{align}\label{eq:E MoG}
        \E\left[ \bm x_0\vert \bm x_t\right] = \sqrt{\alpha_t}  \frac{\sum_{k=1}^K \exp\left(\dfrac{\alpha_t}{2\left(1 - \alpha_t\right)} \|\bm M_k^\top \bm x_t\|^2\right) \bm M_k \bm M_k^\top \bm x_t}{\sum_{k=1}^K \exp\left(\dfrac{\alpha_t}{2\left(1 - \alpha_t\right)} \|\bm M_k^\top \bm x_t\|^2\right)}. 
    \end{align}    
\end{lemma}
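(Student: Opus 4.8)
The plan is to condition on the latent mixture component and apply the law of total expectation. Introduce a latent label $z \in [K]$ with $\mathbb{P}(z=k)=1/K$ recording which subspace $\bm x_0$ is drawn from, so that under \Cref{assumption:data distrib} we have $\bm x_0 = \bm M_z \bm a_z$ with $\bm a_z \sim \mathcal{N}(\bm 0, \mathbf{I}_{r_z})$ and $\bm x_t = \sqrt{\alpha_t}\,\bm x_0 + \sqrt{1-\alpha_t}\,\bm\epsilon$. Then write
\[
\mathbb{E}[\bm x_0 \mid \bm x_t] = \sum_{k=1}^K \mathbb{P}(z=k\mid \bm x_t)\,\mathbb{E}[\bm x_0 \mid \bm x_t, z=k],
\]
which reduces the claim to two computations: the per-component posterior mean $\mathbb{E}[\bm x_0 \mid \bm x_t, z=k]$ and the posterior mixture weights $\mathbb{P}(z=k\mid \bm x_t)$.

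For the per-component posterior mean I work in the intrinsic coordinates $\bm a_k$. Conditioned on $z=k$, the pair $(\bm a_k, \bm x_t)$ is jointly Gaussian, and writing the log of $p(\bm a_k)\,p(\bm x_t\mid\bm a_k)$ and using $\bm M_k^\top \bm M_k = \mathbf{I}_{r_k}$ to replace $\|\bm M_k \bm a_k\|^2$ by $\|\bm a_k\|^2$, completing the square in $\bm a_k$ produces a Gaussian posterior with covariance $(1-\alpha_t)\mathbf{I}_{r_k}$ and mean $\mathbb{E}[\bm a_k\mid \bm x_t, z=k] = \sqrt{\alpha_t}\,\bm M_k^\top \bm x_t$. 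Multiplying back by $\bm M_k$ gives $\mathbb{E}[\bm x_0 \mid \bm x_t, z=k] = \sqrt{\alpha_t}\,\bm M_k \bm M_k^\top \bm x_t$, which is exactly the per-component factor in the numerator of \eqref{eq:E MoG}.

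For the weights I evaluate the marginal likelihood $p(\bm x_t \mid z=k) = \int p(\bm x_t\mid \bm a_k)\,p(\bm a_k)\,\mathrm{d}\bm a_k$ with the same Gaussian integral. Splitting the exponent via the orthogonal decomposition $\mathbf{I}_d = \bm M_k\bm M_k^\top + (\mathbf{I}_d - \bm M_k\bm M_k^\top)$, the quadratic form becomes a term $-\|\bm x_t\|^2/(2(1-\alpha_t))$ that is identical across all $k$ plus a term $+\,\alpha_t\|\bm M_k^\top \bm x_t\|^2/(2(1-\alpha_t))$. Applying Bayes' rule with the uniform prior $1/K$, the common exponential factor and the prior cancel in the ratio, leaving $\mathbb{P}(z=k\mid \bm x_t) \propto \exp\!\big(\tfrac{\alpha_t}{2(1-\alpha_t)}\|\bm M_k^\top \bm x_t\|^2\big)$. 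Substituting this and the per-component mean into the total-expectation formula yields the claimed expression.

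The one place that needs care — the main subtlety rather than a genuine obstacle — is the Gaussian normalizing constant of $p(\bm x_t\mid z=k)$: because $\bm x_t$ is nondegenerate while $\bm x_0$ lives on an $r_k$-dimensional subspace, the marginal covariance $\alpha_t \bm M_k\bm M_k^\top + (1-\alpha_t)\mathbf{I}_d$ has determinant $(1-\alpha_t)^{d-r_k}$, contributing a factor $(1-\alpha_t)^{r_k/2}$ to the weight of component $k$. This factor cancels in the ratio exactly when the ranks $r_k$ are equal, so for the common-rank setting the formula holds as written; I would flag this dependence explicitly. I would also note that the cross-orthogonality $\bm M_i^\top \bm M_j = \bm 0$ and the norm bounds on $\bm x_0,\bm\epsilon$ are \emph{not} needed for this lemma (they are used in the subsequent linearity and low-rank arguments), so the derivation rests only on $\bm M_k^\top \bm M_k = \mathbf{I}_{r_k}$ and Gaussian conjugacy.
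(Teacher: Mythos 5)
Your proof is correct, but it takes a genuinely different route from the paper's. The paper never conditions on the latent component: it computes the marginal density $p_t(\bm x_t)=\frac{1}{K}\sum_{k}\mathcal N\bigl(\bm 0,\ \alpha_t\bm M_k\bm M_k^\top+(1-\alpha_t)\bm I_d\bigr)$, differentiates to get the score $\nabla_{\bm x_t}\log p_t(\bm x_t)$, and then invokes Tweedie's formula $\E[\bm x_0\mid\bm x_t]=\bigl(\bm x_t+(1-\alpha_t)\nabla_{\bm x_t}\log p_t(\bm x_t)\bigr)/\sqrt{\alpha_t}$, simplifying the quadratic forms with the Woodbury identity $\bigl(\alpha_t\bm M_k\bm M_k^\top+(1-\alpha_t)\bm I_d\bigr)^{-1}=\frac{1}{1-\alpha_t}\bigl(\bm I_d-\alpha_t\bm M_k\bm M_k^\top\bigr)$. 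Your decomposition $\E[\bm x_0\mid\bm x_t]=\sum_k\P{z=k\mid\bm x_t}\,\E[\bm x_0\mid\bm x_t,z=k]$ with Gaussian conjugacy in the intrinsic coordinates $\bm a_k$ is more elementary (no score function, no Tweedie) and makes the posterior mixture weights explicit; the paper's route stays closer to the diffusion-model formalism, which pays off downstream since the later lemmas (symmetry of the Jacobian via the Hessian of $\log p_t$, and the weight functions $\omega_k$ in the Jacobian formula) are built directly on the score representation.

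Your flag about the normalizing constants is a genuine catch, not a pedantic one: it exposes an implicit assumption in the paper's own proof. In the paper's derivation, the step that replaces each density value $\mathcal N\bigl(\bm 0,\ \alpha_t\bm M_k\bm M_k^\top+(1-\alpha_t)\bm I_d\bigr)$ by its exponential part silently discards the factor $(2\pi)^{-d/2}\det\bigl(\alpha_t\bm M_k\bm M_k^\top+(1-\alpha_t)\bm I_d\bigr)^{-1/2}=(2\pi)^{-d/2}(1-\alpha_t)^{-(d-r_k)/2}$, which depends on $k$ through $r_k$. Exactly as you observe, this cancellation in the ratio is valid if and only if all subspace dimensions $r_k$ coincide; \Cref{assumption:data distrib} does not require this, so in general the weights in \eqref{eq:E MoG} should carry the extra factor $(1-\alpha_t)^{r_k/2}$ you derived. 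The statement and both proofs are exact in the common-rank setting, and the discrepancy vanishes as $t\to 1$, but your version of the weights is the correct general formula and the dependence deserves to be stated. Your remark that the cross-orthogonality $\bm M_i^\top\bm M_j=\bm 0$ and the norm bounds are not needed here is also accurate; they enter only in the subsequent low-rankness and linearity arguments of \Cref{thm:1}.
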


\Cref{lemma:posterior_mean} shows that the posterior mean $\E\left[ \bm x_0\vert \bm x_t\right]$ could be viewed as a convex combination of $\bm M_k \bm M_k^\top \bm x_t$, i.e. $\bm x_t$ projected onto each subspace $\bm M_k$. This lemma leads to the following theorem:

\begin{thm}\label{thm:1}
    Based upon \Cref{assumption:data distrib}, we can show the following three properties for the posterior mean $\E[\bm x_0\vert \bm x_t]$:

\begin{itemize}[leftmargin=*]
    \item The Jacobian of posterior mean satisfies
    $\mathrm{rank}\left(\nabla_{\bm x_t}\E[\bm x_0\vert \bm x_t]\right) \leq r := \sum\limits_{k=1}^K r_k$ for all $t \in (0,1]$.
    \item The posterior mean $\E[\bm x_0\vert \bm x_t]$ has local linearity such that
    \begin{align}\label{eq:local_linear}
        \norm{\E\left[ \bm x_0\vert \bm x_t + \lambda \Delta \bm x\right] - \E\left[ \bm x_0\vert \bm x_t\right] - \lambda \nabla_{\bm x_t}\E[\bm x_0\vert \bm x_t] \cdot \Delta \bm x}{} = \lambda \dfrac{\alpha_t}{\left(1 - \alpha_t\right)} \mathcal{O}(\lambda),
    \end{align}
    where $\Delta \bm x \in \mathbb{S}^{d-1}$ and $\lambda\in \mathbb R$ is the step size.
    \item $\nabla_{\bm x_t}\E[\bm x_0\vert \bm x_t]$ is symmetric and the full SVD of $\nabla_{\bm x_t}\E[\bm x_0\vert \bm x_t]$ could be written as $\nabla_{\bm x_t}\E[\bm x_0\vert \bm x_t] = \bm U_{t} \bm \Sigma_{t} \bm V_{t}^\top$, where $\bm U_t = \left[\bm u_{t, 1}, \bm u_{t, 2}, \ldots, \bm u_{t, d}\right] \in \mathrm{St}(d, d)$, $\bm \Sigma_t = \mathrm{diag}(\sigma_{t,1},\dots,\sigma_{t, r}, \dots, 0)$ with $\sigma_{t,1} \ge \dots \ge \sigma_{t, r} \ge 0$ and $\bm V_t = \left[\bm v_{t, 1}, \bm v_{t, 2}, \ldots, \bm v_{t, d}\right] \in \mathrm{St}(d, d)$. Let $\bm U_{t, 1} := \left[\bm u_{t, 1}, \bm u_{t, 2}, \ldots, \bm u_{t, r}\right]$ and $\bm M := \left[\bm M_1, \bm M_2, \ldots, \bm M_K\right]$. It holds that $\lim_{t \rightarrow 1}\norm{\left(\bm I_d - \bm U_{t, 1} \bm U_{t, 1}^\top\right) \bm M}{F} = 0.$
\end{itemize}
\end{thm}

The proof is deferred to \Cref{appendix:proof}. Admittedly, there are gap between our theory and practice, such as the approximation error between $\bm f_{\bm \theta, t}(\bm x_t)$ and $\mathbb{E}[\bm x_0 | \bm x_t]$, assumptions about the data distribution, and the high rankness of $\bm J_{\bm \theta, t}$ for $t < 0.2$ and $t > 0.9$ in \Cref{fig:low_rank_linear}. Nonetheless, \Cref{thm:1} largely supports our empirical observation in \Cref{sec:method} that we discuss below:

\begin{itemize}[leftmargin=*]
    \item \textbf{Low-rankness of the Jacobian.} The first property in \Cref{thm:1} demonstrates that the rank of $\nabla_{\bm x_t}\E[\bm x_0\vert \bm x_t]$ is always no greater than the intrinsic dimension of the data distribution. Given that the intrinsic dimension of the real data distribution is usually much lower than the ambient dimension~\autocite{pope2021intrinsic}, the rank of $\bm J_{\bm \theta, t}$ on the real dataset should also be low. The results align with our empirical observations in \Cref{fig:low_rank_linear} when $t \in [0.2, 0.7]$.
    \item \textbf{Linearity of the posterior mean.} The second property in \Cref{thm:1} shows that the linear approximation error is within the order of $\lambda {\alpha_t}/{\left(1 - \alpha_t\right)}\cdot \mathcal{O}(\lambda)$. This implies that when $t$ approaches 1, ${\alpha_t}/{\left(1 - \alpha_t\right)}$ becomes small, resulting in a small approximation error even for large $\lambda$. Empirically, \Cref{fig:low_rank_linear} shows that the linear approximation error of $\bm f_{\bm \theta, t}(\bm x_t)$ is small when $t = 0.7$ and $\lambda = 40$, whereas \Cref{fig:linearity_more_results} shows a much larger error for $t = 0.0$ under the same $\lambda$. These observations align well with our theory.
    \item \textbf{Low-dimensional semantic subspace.} The third property in \Cref{thm:1} shows that, when $t$ is close to 1, left singular vectors associated with the top-$r$ singular values form the basis of the image distribution. Since the editing direction consists of basis, the edited image remains within the image distribution. This explains why $\bm u_i$ found in \Cref{eqn:linear-approx-2} is a semantic direction for image editing.
\end{itemize}

\section{Experiments}
\label{sec:exp}

In this section, we perform extensive experiments to demonstrate the effectiveness and efficiency of LOCO Edit. We first showcase LOCO Edit has strong \emph{localized} editing ability across a variety of datasets in \Cref{sec:exp_data}. Moreover, we conduct comprehensive comparisons with other methods to show the superiority of the LOCO Edit method in \Cref{sec:exp_compare}. Besides, we provide ablation studies on multiple components in our method in \Cref{sec:exp_ablation}, and analyze the editing directions in \Cref{sec:exp_visualize_direction}, with extra experimental details postponed to \Cref{appendix:exp_setup}.

\subsection{Demonstration on Localized Editing and Other Benign Properties} 
\label{sec:exp_data}

\begin{figure}[t]
    \centering
    \includegraphics[width=\linewidth]{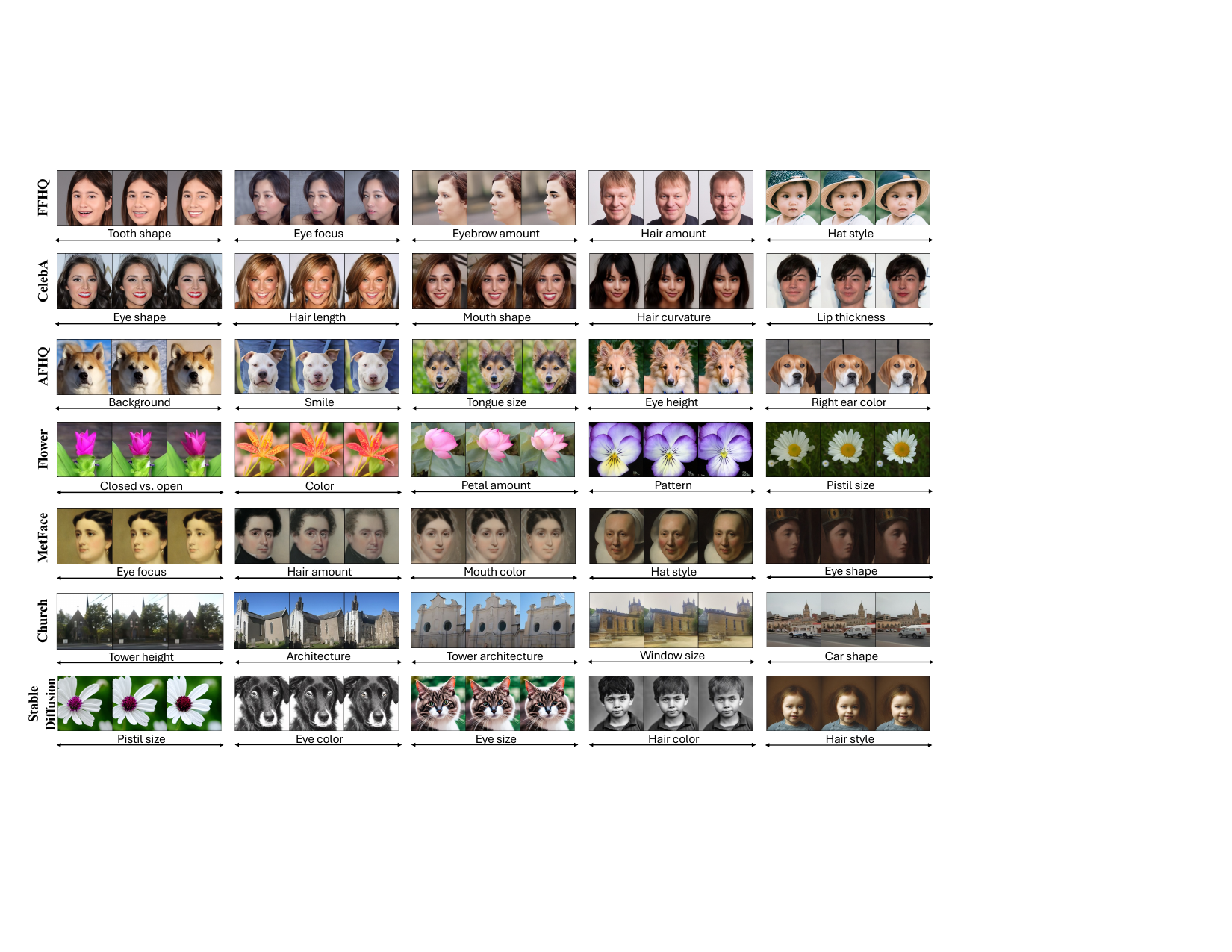}
    \caption{\textbf{Benchmarking LOCO Edit across various datasets.} For each group of three images, in the center is the original image, and on the left and right are edited images along the negative and the positive directions accordingly.}
    \label{fig:diffdata}
\end{figure}

First, we demonstrate benign properties of LOCO Edit in \Cref{alg:uncond_algorithm} on a variety of datasets, including LSUN-Church~\autocite{LSUN}, Flower~\autocite{Flowers}, AFHQ~\autocite{AFHQ}, CelebA-HQ~\autocite{CelebA}, and FFHQ~\autocite{FFHQ}. 

As shown in \Cref{fig:diffdata} and \Cref{fig:precise}, our method enables editing specific localized regions such as eye size/focus, hair curvature, length/amount, and architecture, while preserving the consistency of other regions. Besides the ability of precise local editing, \Cref{fig:composition}  demonstrates the benign properties of the identified editing directions and verify our analysis in \Cref{sec:verification}:
\begin{itemize}[leftmargin=*]
    \item \textbf{Linearity.} As shown \Cref{fig:composition}(d), the semantic editing can be strengthened through larger editing scales and can be flipped by negating the scale.
    \item \textbf{Homogeneity and transferability.} As shown \Cref{fig:composition}(b), the discovered editing direction can be transferred across samples and timesteps in $\mathcal X_t$.
    \item \textbf{Composability.} As shown \Cref{fig:composition}(c), the identified disentangled editing directions in the low-rank subspace allow direct composition without influencing each other.
\end{itemize}

\subsection{Comprehensive Comparison with Other Image Editing Methods}
\label{sec:exp_compare}

We compare LOCO Edit with several notable and recent image editing techniques, including Asyrp~\autocite{hspace}, Pullback~\autocite{park2023understanding}, NoiseCLR~\autocite{dalva2023noiseclr}, and BlendedDifusion~\autocite{blendeddiffusion}. We also compare with an unexplored method using the Jacobians $\frac{\partial \bm \epsilon_t}{\partial \bm x_t}$ to find the editing direction, named as $\frac{\partial \bm \epsilon_t}{\partial \bm x_t}$.

\paragraph{Metrics.} We evaluate our method using the below metrics and summarize the results in \Cref{tab:compare}. Besides the image generation quality, we also compared other attributes such as the local edit ability, efficiency, the requirement for supervision, and theoretical justifications.
\begin{itemize}[leftmargin=*]
    \item \emph{Local Edit Success Rate} evaluates whether the editing successfully changes the target semantics and preserves unrelated regions by human evaluators.
    \item \emph{LPIPS}~\autocite{LPIPS} and \emph{SSIM}~\autocite{SSIM} measure the consistency between edited and original images.
    \item \emph{Transfer Success Rate} measures whether the editing transferred to other images successfully changes the target semantics and preserves unrelated regions by human evaluators.  
    \item \emph{Learning time} to measure the time required to identify the edit directions.
    \item\emph{Transfer Edit Time} to measure the time required to transfer the editing to other images directly.
    \item \emph{\#Images for Learning} measures the number of images used to find the editing directions.
    \item \emph{One-step Edit}, \emph{No Additional Supervision}, \emph{Theoretically Grounded}, and \emph{Localized Edit} are attributes of the editing methods, where each of them measures a specific property for the method.
\end{itemize}

\begin{table}[h]
\footnotesize
\centering
\begin{tabular}{@{}ccccccc@{}}
\toprule
Method Name                & Pullback                 & $\partial \bm \epsilon_{t} / \partial \bm x_t$                   & NoiseCLR                 & Asyrp                    & BlendedDiffusion        & \textbf{LOCO (Ours)}     \\ \midrule
Local Edit Success Rate↑   & 0.32                     & 0.37                     & 0.32                     & 0.47                     & \textbf{0.55}            & \textbf{0.80}            \\
LPIPS↓                     & 0.16                     & 0.13                     & 0.14                     & 0.22                     & \textbf{0.03}            & \textbf{0.08}            \\
SSIM↑                      & 0.60                     & 0.66                     & 0.68                     & 0.68                     & \textbf{0.94}            & \textbf{0.71}            \\
Transfer Success Rate↑     & 0.14                     & 0.24                     & \textbf{0.66}            & 0.58                     & Can’t Transfer           & \textbf{0.91}            \\
Transfer Edit Time↓        & 4s                       & \textbf{2s}              & 5s                       & 3s                       & Can’t Transfer           & \textbf{2s}              \\
\#Images for Learning      & \textbf{1}               & \textbf{1}               & 100                      & 100                      & \textbf{1}               & \textbf{1}               \\
Learning Time↓             & \textbf{8s}              & \textbf{44s}             & 1 day                    & 475s                     & 120s                     & 79s                      \\
One-step Edit?             & {\color[HTML]{07EC39} ✓} & {\color[HTML]{07EC39} ✓} & {\color[HTML]{FF0000} ✗} & {\color[HTML]{FF0000} ✗} & {\color[HTML]{FF0000} ✗} & {\color[HTML]{07EC39} ✓} \\
No Additional Supervision? & {\color[HTML]{07EC39} ✓} & {\color[HTML]{07EC39} ✓} & {\color[HTML]{07EC39} ✓} & {\color[HTML]{FF0000} ✗} & {\color[HTML]{FF0000} ✗} & {\color[HTML]{07EC39} ✓} \\
Theoretically Grounded?    & {\color[HTML]{FF0000} ✗} & {\color[HTML]{FF0000} ✗} & {\color[HTML]{FF0000} ✗} & {\color[HTML]{FF0000} ✗} & {\color[HTML]{FF0000} ✗} & {\color[HTML]{07EC39} ✓} \\
Localized Edit?            & {\color[HTML]{FF0000} ✗} & {\color[HTML]{FF0000} ✗} & {\color[HTML]{FF0000} ✗} & {\color[HTML]{FF0000} ✗} & {\color[HTML]{07EC39} ✓} & {\color[HTML]{07EC39} ✓} \\ \bottomrule
\end{tabular}
\caption{\textbf{Comparisons with existing methods.} Our LOCO Edit excels in localized editing, transferability and efficiency, with other intriguing properties such as one-step edit, supervision-free, and theoretically grounded.}
\label{tab:compare}

\end{table}
\begin{figure}[t]
    \centering
    \includegraphics[width=\linewidth]{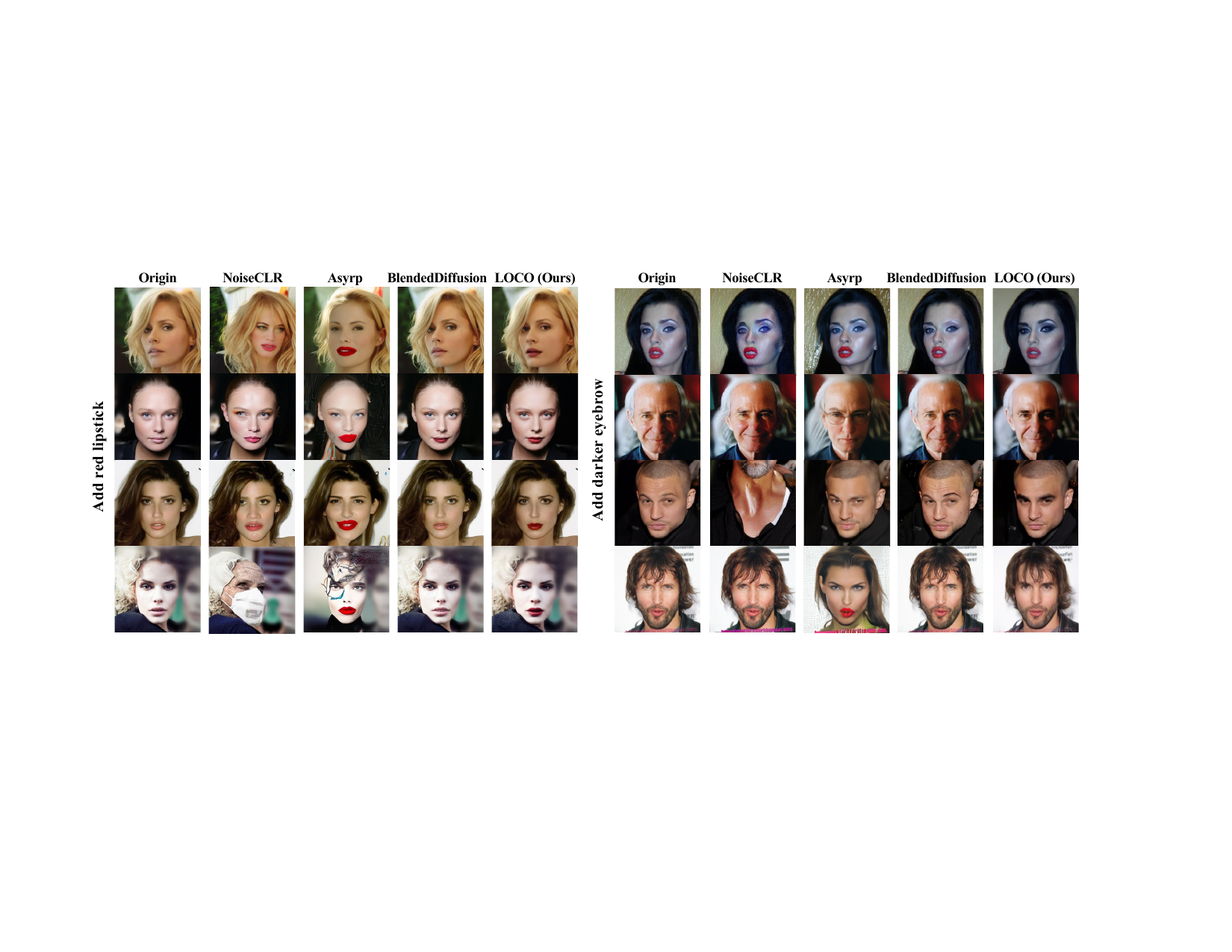}
    \caption{\textbf{Compare local edit ability with other works on non-cherry-picked images.} LOCO has consistent and accurate local edit ability, while other methods have wrong, global, or no edits.}
    \label{fig:compare}
\end{figure}

Moreover, we visualize the editing results on non-cherry-picked images in \Cref{fig:compare}. The detailed evaluation settings are provided in \Cref{subsec:compare_set}.

\paragraph{Benefits of Our Method.} Based upon the qualitative and quantitative comparisons, our method shows several clear advantages that we summarize as follows.

\begin{itemize}[leftmargin=*]
    \item \textbf{Superior local edit ability with one-step edit.} \Cref{tab:compare} shows LOCO Edit achieves the best Local Edit Success Rate. Such local edit ability only requires one-step edit at a specific time $t$. For LPIPS and SSIM, our method performs better than most methods but worse than BlendedDiffusion. However, BlendedDiffusion sometimes fails the edit within the masks (as visualized in \Cref{fig:compare}, rows 1, 3, 4, and 5). Other methods find semantic direction more globally, leading to worse performance in Local Edit Success Rate, LPIPS, and SSIM for localized edits.
    \item \textbf{Transferability and efficiency.} First, LOCO Edit requires less learning time than most of the other methods and requires learning only for a single time step with a single image. Moreover, LOCO Edit is highly transferable, having the highest Transfer Success Rate in Table A. In contrast, BlendedDiffusion cannot transfer and requires optimization for each individual image. NoiseCLR has the second-best yet lower transfer success rate, while other methods exhibit worse transferability.
    \item \textbf{Theoretically-grounded and supervision-free.} LOCO Edit is theoretically grounded. Besides, it is supervision-free, thus integrating no biases from other modules such as CLIP~\autocite{clip}.~\autocite{tong2024eyes} shows CLIP sometimes can't capture detailed semantics such as color. We can observe failures in capturing detailed semantics for methods that utilize CLIP guidance such as BlendedDiffusion and Asyrp in \Cref{fig:compare}, where there are no edits or wrong edits.
\end{itemize}

\section{Conclusion}
\label{sec:conclusion}

We proposed a new low-rank controllable image editing method, LOCO Edit, which enables precise, one-step, localized editing using diffusion models. Our approach stems from the discovery of the locally linear posterior mean estimator in diffusion models and the identification of a low-dimensional semantic subspace in its Jacobian, theoretically verified under certain data assumptions. The identified editing directions possess several beneficial properties, such as linearity, homogeneity, and composability. Additionally, our method is versatile across different datasets and models and is applicable to text-supervised editing in T2I diffusion models. Through various experiments, we demonstrate the superiority of our method compared to existing approaches.

\section*{Acknowledgement}
We acknowledge support from NSF CAREER CCF-2143904, NSF CCF-2212066, NSF CCF-2212326, NSF IIS 2312842, NSF IIS 2402950, ONR N00014-22-1-2529, a gift grant from KLA, an Amazon AWS AI Award, MICDE Catalyst Grant. The authors acknowledge valuable discussions with Mr. Zekai Zhang (U. Michigan), Dr. Ismail R. Alkhouri (U. Michigan and MSU), Mr. Jinfan Zhou (U. Michigan), and Mr. Xiao Li (U. Michigan).

\printbibliography

\appendix

\section{Future Direction}
\label{appendix:limitation}

We identify several future directions and limitations of the current work. The current theoretical framework explains mainly the unsupervised image editing part. A more solid and thorough analysis of text-supervised image editing is of significant importance in understanding T2I diffusion models, which is yet a difficult open problem in the field. For example, there is still a lack of geometric analysis of the relationship between subspaces under different text-prompt conditions~\autocite{stablediffusion, DeepFloydIF, LCM, wang2024a}. Based on such understandings, it may be possible to further discover benign properties of editing directions in T2I diffusion models, or design more efficient fine-tuning~\autocite{hu2022lora, yaras2024compressible} accordingly. Besides, the current method has the potential to be extended for combining coarse to fine editing across different time steps. Furthermore, it is worth exploring the direct manipulation of semantic spaces in flow-matching diffusion models and transformer-architecture diffusion models. Lastly, it is possible to connect the current finding to image or video representation learning in diffusion models~\autocite{fuest2024diffusion, xiangpaper, wang2023understanding, vididi}, extend to 3D editing of pose or shape~\autocite{instructnerf2023, plane3d}, or utilize the low-rank structures to build dictionaries~\autocite{luo2024pace}.

\section{Discussion on Related Works}
\label{app:related-works}

\paragraph{Study of Latent Semantic Space in Generative Models.} Although diffusion models have demonstrated their strengths in state-of-the-art image synthesis, the understanding of diffusion models is still far behind the other generative models such as Generative Adversarial Networks (GAN)~\autocite{GAN, LowRankGAN}, the understanding of which can provide tools as well as inspiration for the understanding of diffusion models. Some recent works have identified such gaps, discovered latent semantic spaces in diffusion models~\autocite{hspace}, and further studied the properties of the latent space from a geometrical perspective~\autocite{park2023understanding}. These prior arts deepen our understanding of the latent semantic space in diffusion models, and inspire later works to study the structures of information represented in diffusion models from various angles. However, their semantic space is constrained to diffusion models using UNet architecture, and can not represent localized semantics. Our work explores an alternative space to study the semantic expression in diffusion models, inspired by our observation of the low-rank and locally linear Jacobian of the denoiser over the noisy images. We provide a theoretical framework for demonstrating and understanding such properties, which can deepen the interpretation of the learned data distribution in diffusion models.

\paragraph{Image Editing in Unconditional Diffusion Models.} 
Recent research has significantly improved the understanding of latent semantic spaces in diffusion models, enabling global image editing through either training-free methods~\autocite{hspace, park2023understanding, zhu2023boundary} or by incorporating an additional lightweight model~\autocite{park2023understanding, Haas2023DiscoveringID}. However, these methods result in poor performance for localized edit. In contrast, our approach achieves localized editing without requiring supervised training. For localized edits,~\autocite{kouzelis2024enabling} builds on~\autocite{park2023understanding}, enabling local edits by altering the intermediate layers of UNet. However, these approaches are restricted to UNet-based architectures in diffusion models and have largely ignored intrinsic properties like linearity and low-rankness. In comparison, our work provides a rigorous theoretical analysis of low-rankness and local linearity in diffusion models, and we are the first to offer a principled justification of the semantic significance of the basis used for editing. Moreover, our method is independent of specific network architectures.

Other recent works, such as~\autocite{manor2024zeroshot}, introduce training-free global audio and image editing based on a theoretical understanding of the posterior covariance matrix~\autocite{manor2024on}, also independent of UNet architectures. However, our approach offers a distinct perspective, providing complementary insights and new findings. We explore the low-rank nature and local linearity in PMP, offering rigorous theoretical analyses. Based on this, our proposed LOCO Edit method allows unsupervised and localized editing, which enables several advantageous properties including transferability, composability, and linearity -- benign features that have not been explored in prior work. Further, we extend the method to unsupervised and text-supervised editing in various text-to-image models. Additionally, while~\autocite{blendeddiffusion} supports localized editing, it requires supervision from CLIP, lacks a theoretical basis, and is time-consuming for editing each image. In contrast, our method is more efficient, theoretically grounded, and free from failures or biases in CLIP. The CLIP-supervised may also exhibit a bias toward the CLIP score, leading to suboptimal editing results, as shown in \Cref{fig:compare}. In comparison, our method consistently enables high-quality edits without such bias.


\paragraph{Image Editing in T2I Diffusion Models.} 
T2I image editing usually requires much more complicated sampling and training procedures, such as providing certainly learned guidance in the reverse sampling process~\autocite{dps}, training an extra neural network~\autocite{controlnet}, or fine-tuning the models for certain attributes~\autocite{dreambooth}. Although effective, these methods often require extra training or even human intervention. Some other T2I image editing methods are training-free~\autocite{diffusionclip, brack2023sega, wu2023uncovering}, and further enable editing with identifying masks~\autocite{diffusionclip}, or optimizing the soft combination of text prompts~\autocite{wu2023uncovering}. These methods involve a continuous injection of the edit prompt during the generation process to gradually refine the generated image to have the target semantics. Though effective, all of the above methods (either training-free or not) as well as instruction-guided ones~\autocite{Hertz2022PrompttoPromptIE, Wang2023InstructEditIA, Brooks2022InstructPix2PixLT, Li2023ZONEZI} lack clear mathematical interpretations and requires text supervision.~\autocite{dalva2023noiseclr} discovers editing directions in T2I diffusion models through contrastive learning without text supervision, but is not generalizable to editing with text supervision.~\autocite{park2023understanding} has some theoretical basis and extends to an editing approach in T2I diffusion models with text supervision, but such supervision is only for unconditional sampling. In contrast, our extended T-LOCO Edit, which originated from the understanding of diffusion models, is the first method exploring single-step editing with or without text supervision for conditional sampling. 
\section{More Experiment Results on LOCO-Edit}
\subsection{Ablation Studies}
\label{sec:exp_ablation}

\begin{figure}[t]
    \centering
     \begin{subfigure}[b]{.497\textwidth}
         \centering
        \includegraphics[width=\linewidth]{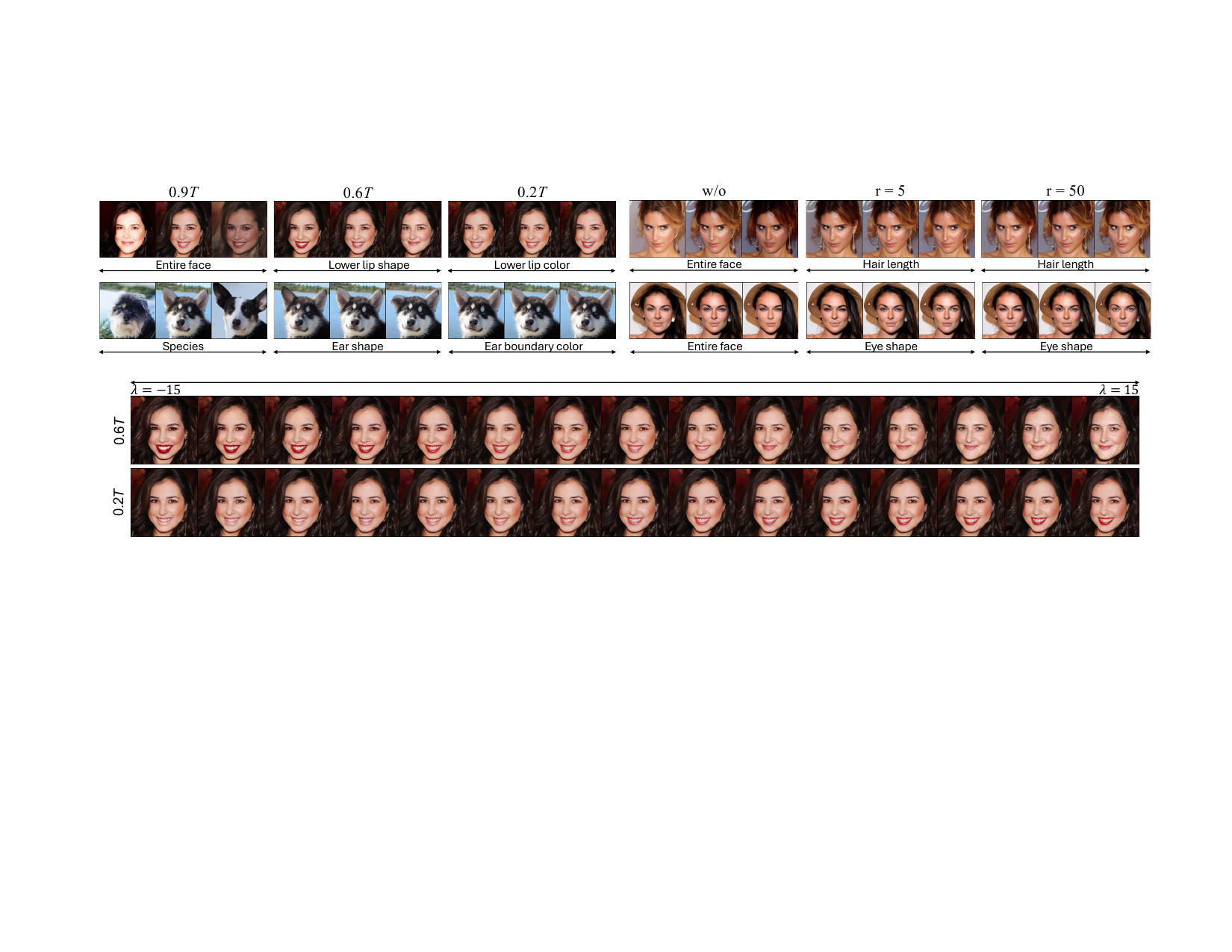}%
        \caption{\textbf{Ablation on time step.}}
         \label{fig:ab_t}
     \end{subfigure}%
     \begin{subfigure}[b]{.5\textwidth}
         \centering
        \includegraphics[width=\linewidth]{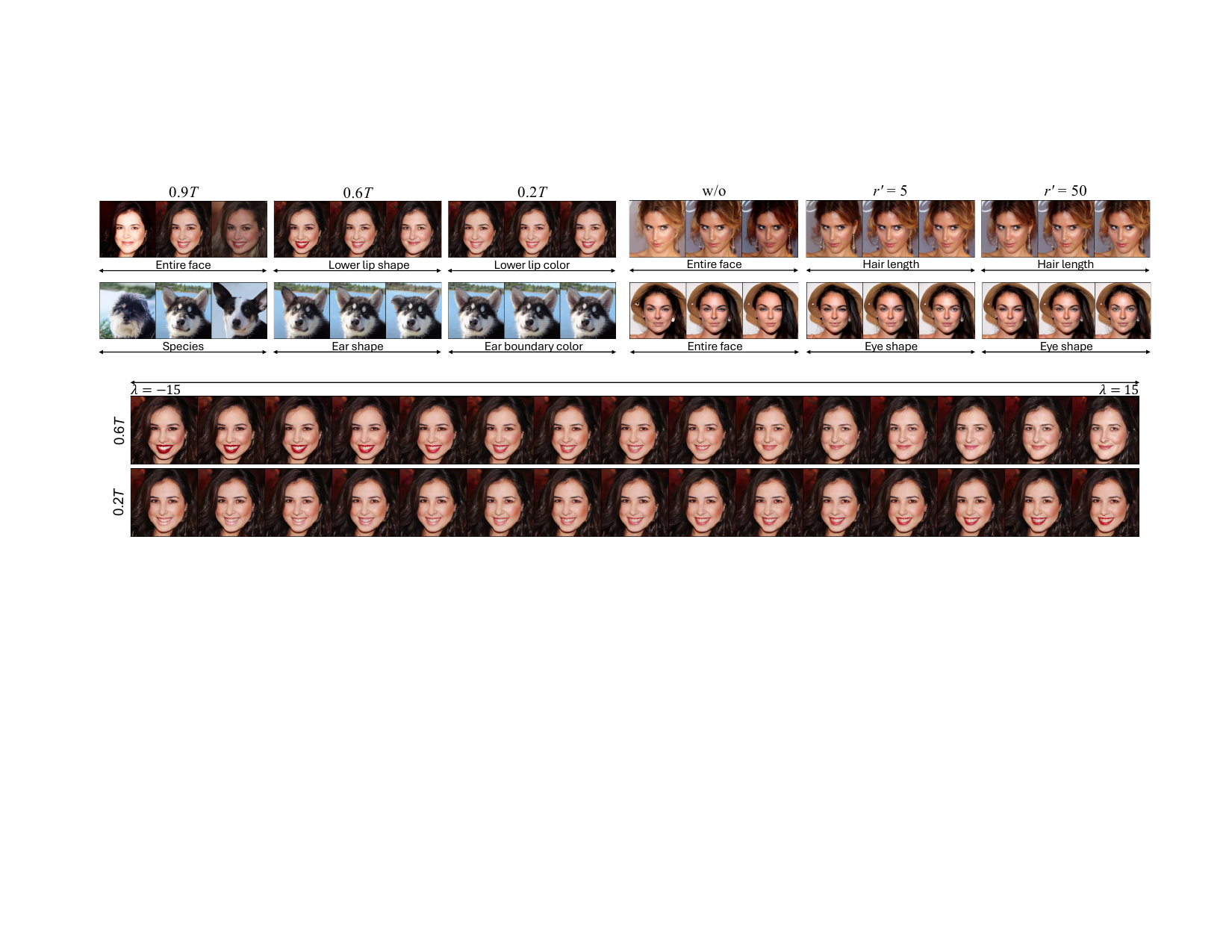}%
        \caption{\textbf{Ablation on nullspace and rank.}}
         \label{fig:ab_r}
     \end{subfigure}
    \begin{subfigure}[b]{\textwidth}
         \centering
        \includegraphics[width=.95\linewidth]{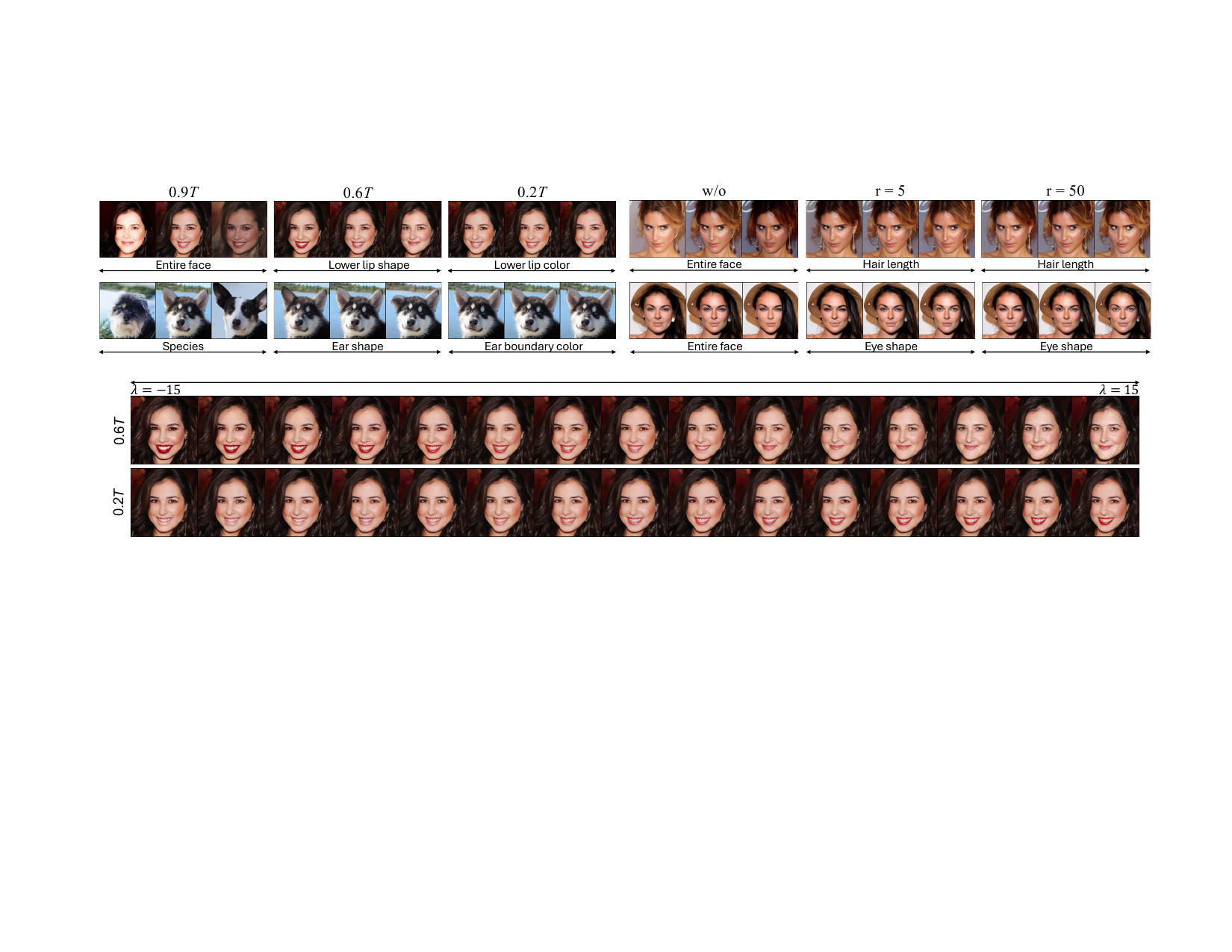}%
        \caption{\textbf{Ablation on edit strengths.}}
         \label{fig:ab_lam}
     \end{subfigure}
    \caption{\textbf{Ablation Study.} (a) Effects of one-step edit time. (b)Effects of using nullspace projection and rank. (c)Effects of editing strengths.}
    \label{fig:ablation}
\end{figure}

We conduct several important ablation studies on noise levels, the rank of nullspace projection, and editing strength, which demonstrates the robustness of our method.
\begin{itemize}[leftmargin=*]
    \item \textbf{Noise levels (i.e., editing time step $t$).} We conducted an ablation study on different noise levels, with representative examples shown in \Cref{fig:ab_t}. The key observations are summarized as follows: (a) Larger noise levels (i.e., edit on $x_t$ with larger $t$) perform more coarse edit while small noise levels perform finer edit; (b) LOCO Edit is applicable to a generally large range of noise levels ([0.2T, 0.7T]) for precise edit.
    \item \textbf{Rank of nullspace projection $r'$.} Ablation study on nullspace projection is in \Cref{fig:ab_r} (definition of $r'$ is in \Cref{alg:uncond_algorithm}). We present the key observations: (a) the local edit ability with no nullspace projection is weaker than that with nullspace projection; (b) when conducting nullspace projection, an effective low-rank estimation with $r'=5$ can already achieve good local edit results.
    \item \textbf{Editing strength $\lambda$.} The linearity with respect to editing strengths is visualized in \Cref{fig:ab_lam}, with the key observations in addition to linearity: LOCO Edit is applicable to a generally wide range of editing strengths ([-15, 15]) to achieve localized edit.
\end{itemize}

\begin{figure}[t]
    \centering
    \includegraphics[width=\linewidth]{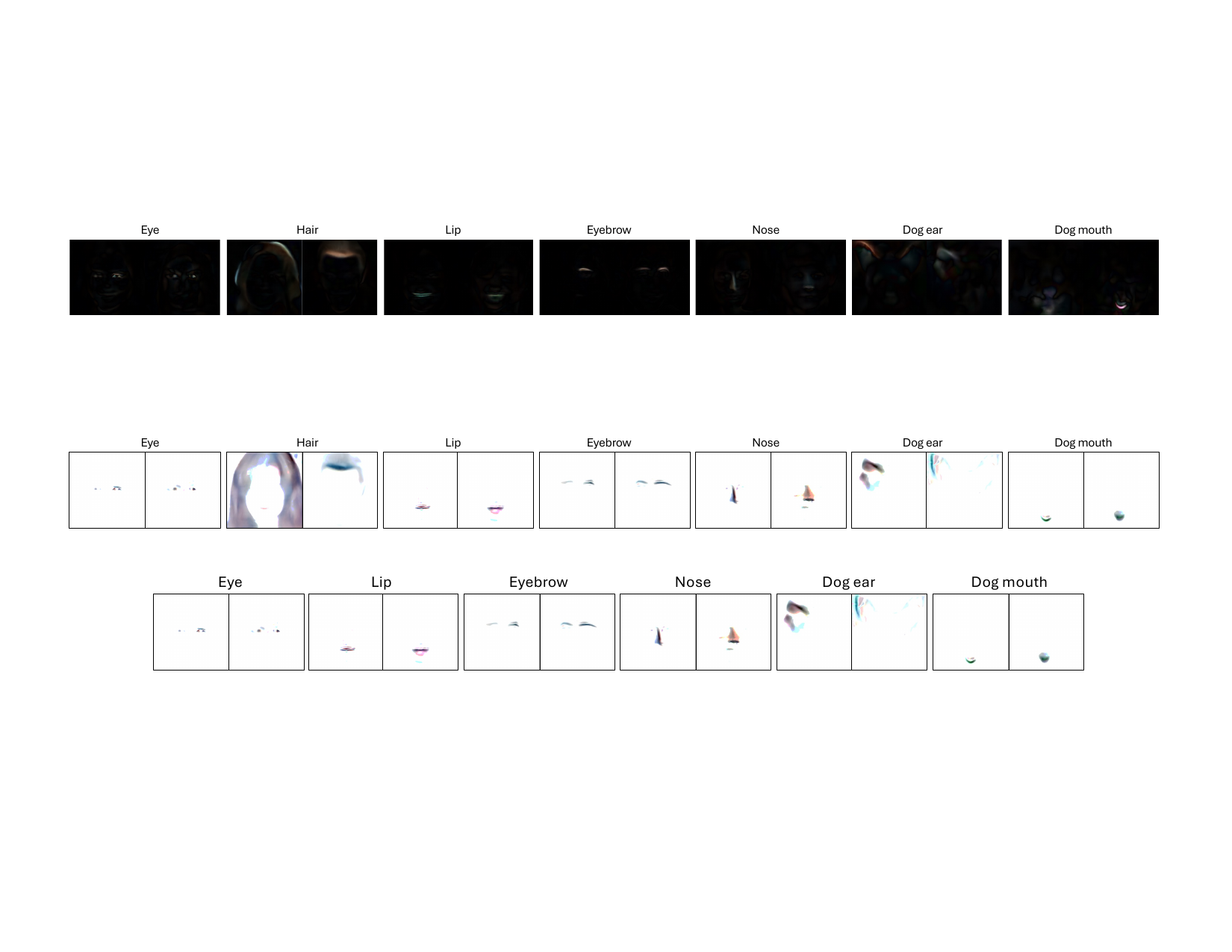}
    \caption{\textbf{Visualizing edit directions identified via LOCO Edit.} The edit directions are semantically meaningful.}
    \label{fig:vis_edit}
\end{figure}

\subsection{Visualization and Analysis of Editing Directions}
\label{sec:exp_visualize_direction}

We visualize the identified editing direction $\bm v_p$ (see \Cref{alg:uncond_algorithm}) in \Cref{fig:vis_edit}. The editing directions are semantically meaningful to the region of interest for editing. For example, the editing directions for eyes, lips, nose, etc., have similar shapes to eyes, lips, nose, etc.

\begin{figure}[t]
    \centering
    \includegraphics[width=.8\linewidth]{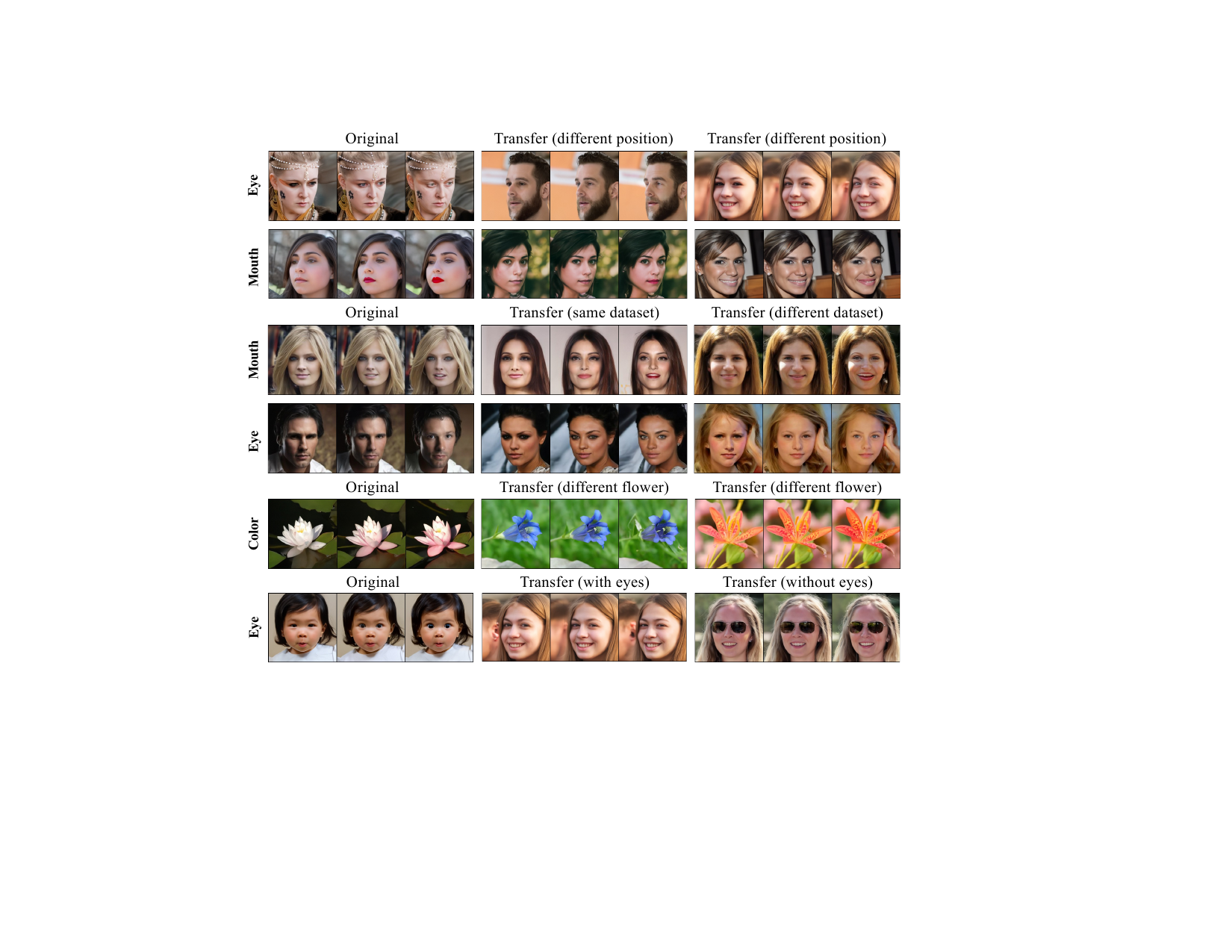}
    \caption{\textbf{Analyzing transferability of edit directions} to objects with different positions and shapes, images from different datasets, or images with no corresponding semantics.}
    \label{fig:robust_edit}
\end{figure}

Further, since the objects in datasets Flower, AFHQ, CelebA-HQ, and FFHQ are usually positioned at the center, the identified editing directions also tend to be at the center. Besides, objects could have different shapes, and semantics in some images do not exist in other images. To further study the robustness of transferability for the editing directions, we transfer editing directions to images with objects at different positions, from different datasets, with different shapes, and with no corresponding semantics. We present the results in \Cref{fig:robust_edit}, with key observations that: (a) the edit directions are generally robust to gender differences, shape differences, moderate position differences, and dataset differences, illustrated in the first five rows of \Cref{fig:robust_edit} (b) transferring editing direction to images without corresponding semantics results in almost no editing (shown in the last row of \Cref{fig:robust_edit}). Therefore, in practical applications, meaningful transfer editing scenarios for LOCO Edit occur when the transferred editing directions correspond to existing semantics in the target image (e.g., transferring the editing direction of "eyes" is effective only if the target image also contains eyes).
\section{More Empirical Study on Low-rankness \& Local Linearity}
\label{subsec:exp-property}

\subsection{Experiment Setup for Section~\ref{subsec:properties}}
\label{appendix:exp_setup_low_rank_linear}

We evaluate the numerical rank of the denoiser function $\bm x_{\bm \theta}(\bm x_t, t)$ for DDPM (U-Net~\autocite{unet} architecture) on CIFAR-10 dataset~\autocite{cifar10} ($d = 32 \times 32 \times 3$), U-ViT~\autocite{uvit} (Transformer based networks) on CelebA~\autocite{CelebA} ($d = 64 \times 64 \times 3$), ImageNet~\autocite{imagenet} datasets ($d = 64 \times 64 \times 3$) and DeepFloy IF~\autocite{DeepFloydIF} trained on LAION-5B~\autocite{schuhmann2022laion} dataset ($d = 64 \times 64 \times 3$). Notably, U-ViT architecture uses the autoencoder to compress the image $\bm x_0$ to embedding vector $\bm z_0 = \texttt{Encoder} (\bm x_0)$, and adding noise to $\bm z_t$ for the diffusion forward process; and the reverse process replaces $\bm x_t, \bm x_{t - \Delta t}$ with $\bm z_t, \bm z_{t - \Delta t}$ in \Cref{eq:ddim}. And the generated image $\bm x_0 = \texttt{Decoder} (\bm z_0)$. The PMP defined for U-ViT is:
 \begin{align}\label{eqn:posterior-mean-uvit}
   \bm{\Hat{x}}_{0,t} = f_{\bm \theta, t}(\bm z_t;t)\coloneqq \texttt{Decoder}\left(\frac{\bm z_t - \sqrt{1 - \alpha_{t}} \bm \epsilon_{\bm \theta}(\bm z_t, t)}{\sqrt{\alpha_{t}}}\right).
\end{align}

The $\bm J_{\bm \theta, t}(\bm z_t;t) = \nabla_{\bm z_t}f_{\bm \theta, t}(\bm z_t;t)$ for $f_{\bm \theta, t}(\bm z_t;t)$ defined above. For DeepFloy IF, there are three diffusion models, one for generation and the other two for super-resolution. Here we only evaluate $\bm J_{\bm \theta, t}(\bm z_t;t)$ for diffusion generating the images. 

Given a random initial noise $\bm x_T$, diffusion model $\bm x_{\bm \theta}$ generate image sequence $\{\bm x_t\}$ follows reverse sampler \Cref{eq:ddim}. Along the sampling trajectory $\{\bm x_t\}$, for each $\bm x_t$, we calculate $\bm J_{\bm \theta, t}(\bm z_t;t)$ and compute its numerical rank via \begin{equation}
    \label{eq:numerical_rank}
    \widetilde{\rank}(\bm J_{\bm \theta, t}(\bm x_t)) = \arg \min\limits_r\left\{r: \frac{\sum_{i = 1}^{r} \sigma_i^2 \left(\bm J_{\bm \theta, t}(\bm x_t;t)\right)}{\sum_{i = 1}^{n} \sigma_i^2 \left(\bm J_{\bm \theta, t}(\bm x_t;t)\right)}> \eta^2 \right\},
\end{equation}
where $\sigma_i(\bm A)$ denotes the $i$th largest singular value of $\bm A$. In our experiments, we set $\eta = 0.99$. We random generate $15$ initialize noise $\bm x_t$ ($\bm z_t$ for U-ViT). We only use one prompt for DeepFloyd IF. We use DDIM with 100 steps for DDPM and DeepFloyd IF, DPM-Solver with 20 steps for U-ViT, and select some of the steps to calculate $\rank(\bm J_{\bm \theta, t}(\bm x_t;t))$, reported the averaged rank in \Cref{fig:low_rank_linear}. To report the norm ratio and cosine similarity, we select the closest $t$ to 0.7 along the sampling trajectory and reported in \Cref{fig:low_rank_linear}, i.e. $t = 0.71$ for DDPM, $t=0.66$ for U-ViT and $t=0.69$ for DeepFloyd IF. The norm ratio and cosine similarity are also averaged over 15 samples. 

\begin{figure}[t]
    \centering
    \begin{subfigure}[b]{0.46\textwidth}
         \centering
        \includegraphics[width=\linewidth]{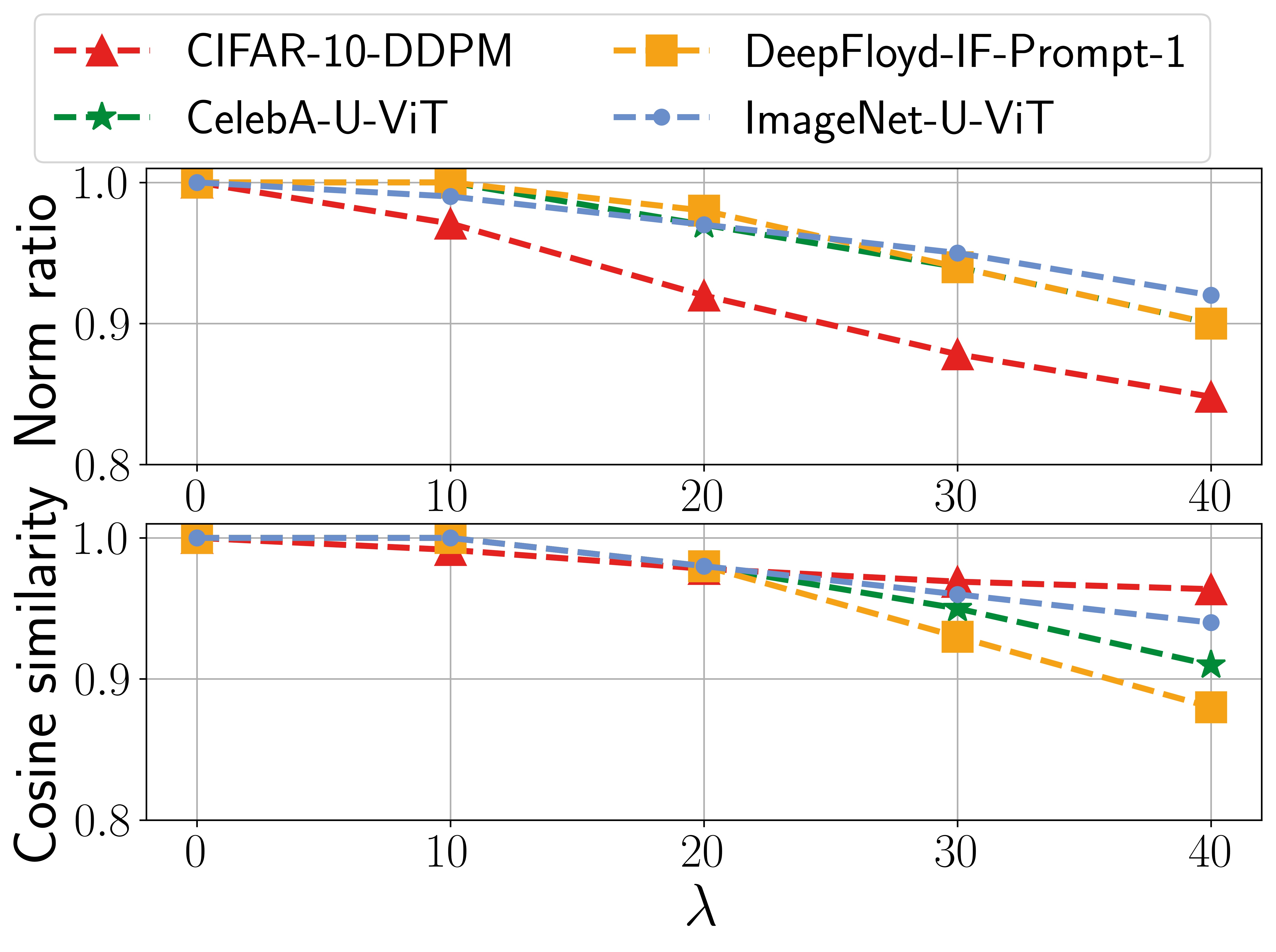}
        \caption{\textbf{t = 0.0}}
     \end{subfigure}
     \begin{subfigure}[b]{0.48\textwidth}
         \centering
        \includegraphics[width=\linewidth]{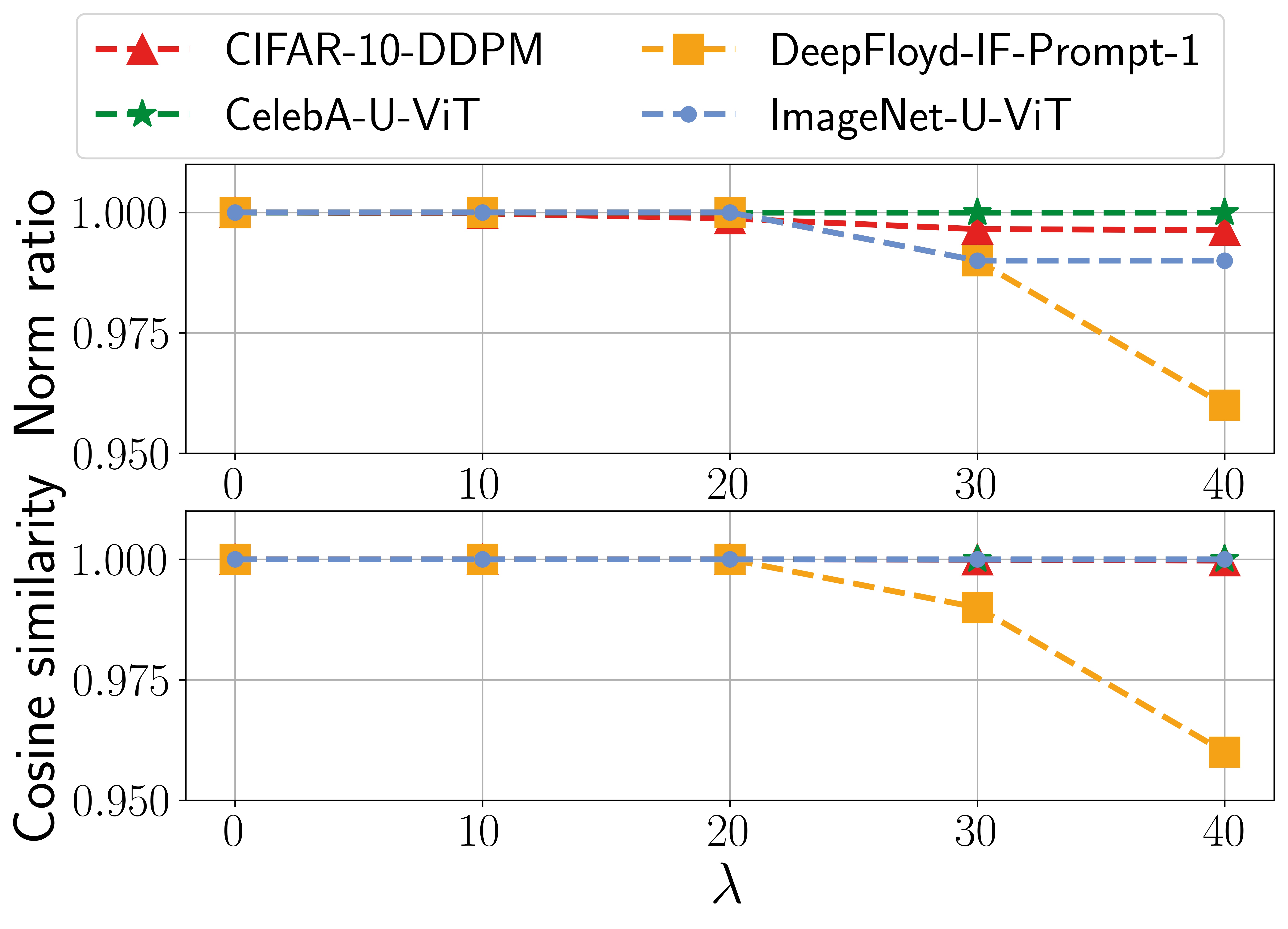}
        \caption{\textbf{t = 0.5}}
     \end{subfigure}
    \caption{\textbf{More results on the linearity of $\bm f_{\bm \theta, t}(\bm x_t, t)$.}}
     \label{fig:linearity_more_results}
\end{figure}

\begin{figure}[t]
    \centering\includegraphics[width=\linewidth]{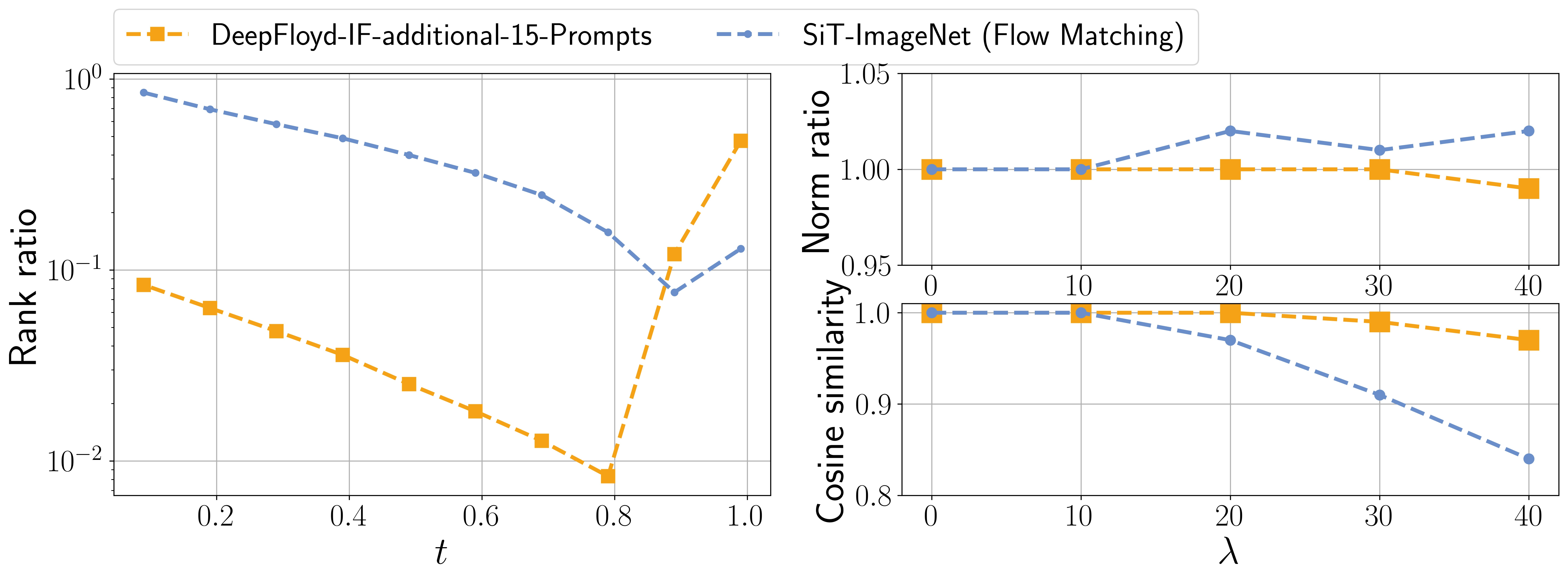}
    \caption{\textbf{More empirical study on low-rankness and local linearity on more prompts and models trained with flow-matching objectives.}}
     \label{fig:new_linear_rank}
\end{figure}

\subsection{More Experiments for \Cref{subsec:properties}}
\label{appendix:more_exp_low_rank_linear}

We illustrated the norm ratio and cosine similarity for more timesteps in \Cref{fig:linearity_more_results}, more text prompts, and flow-matching-based diffusion model in \Cref{fig:new_linear_rank}. More specifically, for the plot of $t = 0.0$, we exactly use $t = 0.04$ for DDPM, $t=0.005$ for U-ViT and $t=0.09$ for DeepFloyd IF; for the plot of $t = 0.5$, we exactly use $t = 0.49$ for DDPM, $t=0.50$ for U-ViT and $t=0.49$ for DeepFloyd IF.  The results aligned with our results in \Cref{thm:1} that when $t$ is closer the 1, the linearity of $\bm f_{\bm \theta, t}(\bm x_t, t)$ is better.

\subsection{Comparison for  Low-rankness \& Local Linearity for Different Manifold}
\label{appendix:low_rank_linear_different_manifold}

\begin{figure}[t]
    \centering\includegraphics[width=\linewidth]{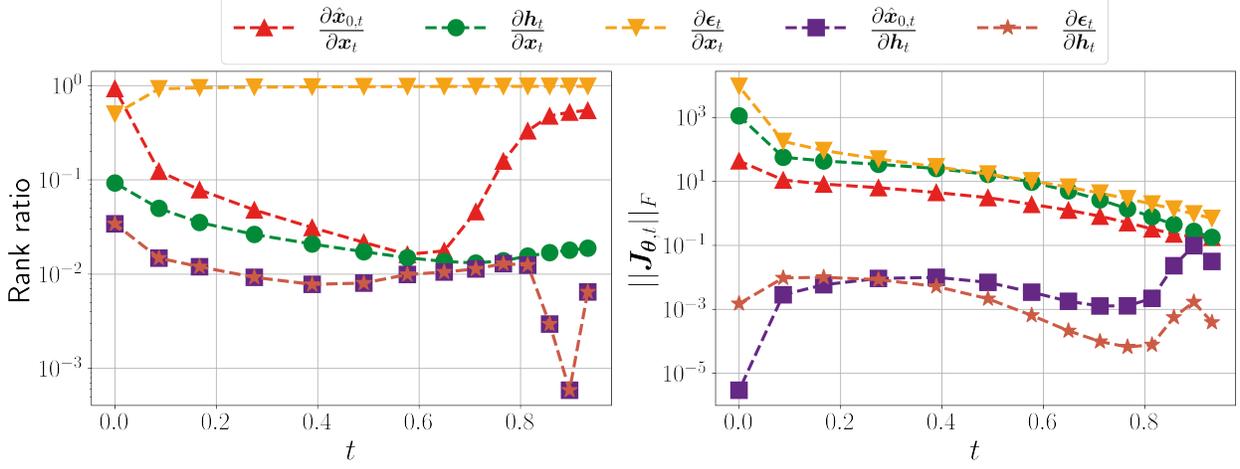}
    \caption{(Left) \textbf{Numerical rank of different jacobian $\bm{J}$ at different timestep $t$.} (Right) {\bf Frobenius norm of different jacobian $\bm{J}$  at different timestep $t$}}
     \label{fig:jacobian_rank_norm_diff_manifold}
\end{figure}

\begin{figure}[t]
    \centering\includegraphics[width=\linewidth]{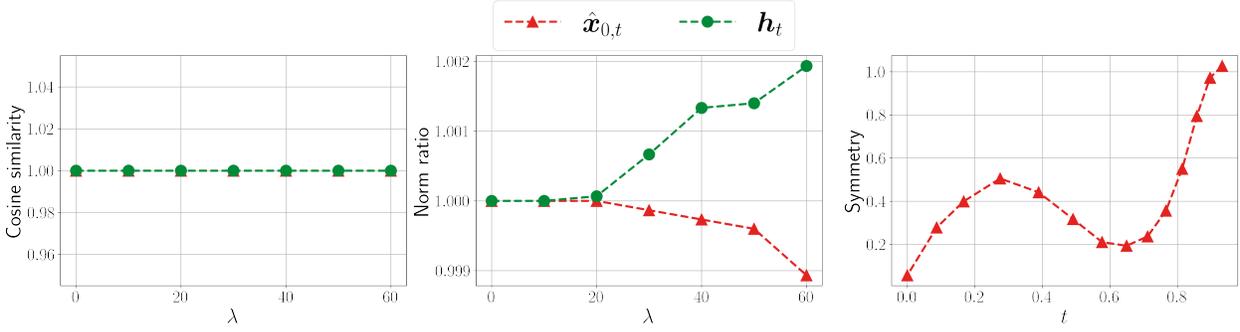}
    \caption{(Left, Middle) \textbf{Cosine similarity and norm ration of different mappings with respect to $\lambda$.} (Right) {\bf Symmetric property of $\dfrac{\partial \hat{\bm x_{0, t}}}{\partial \bm x_t}$ with respect to timestep $t$.}}
    \label{fig:jacobian_linear_symmetric_diff_manifold}
\end{figure}

This section is an extension of \Cref{subsec:properties}. We study the low rankness and local linearity of more mappings between spaces of diffusion models. The sampling process of diffusion model involved the following space: $\bm x_{t} \in \mathcal{X}_t$, $\hat{\bm x}_{0, t} \in \mathcal{X}_{0, t}$, $\bm h_{t} \in \mathcal{H}_t$, $\bm \epsilon_{t} \in \mathcal{E}_{t}$, where $\mathcal{H}_t$ is the h-space of U-Net's bottleneck feature space~\autocite{hspace} and $\mathcal{E}_{t}$ is the predict noise space. First, we explore the rank ratio of Jacobian $\bm J_{\bm \theta, t}$ and Frobenius norm $||\bm J_{\bm \theta, t}||_F$ for: $\dfrac{\partial \bm h_t}{\partial \bm x_t} , \dfrac{\partial \bm \epsilon_t}{\partial \bm h_t} , \dfrac{\partial \hat{\bm x}_{0, t}}{\partial\bm  h_t} , \dfrac{\partial \bm \epsilon_t}{\partial \bm x_t}, \dfrac{\partial \bm \hat{\bm x}_{0, t}}{\partial \bm x_t}$. We use DDPM with U-Net architecture, trained on CIFAR-10 dataset, and other experiment settings are the same as \Cref{appendix:exp_setup_low_rank_linear}, results are shown in \Cref{fig:jacobian_rank_norm_diff_manifold}. The conclusion could be summarized as :

\begin{itemize}[leftmargin=*]
    \item \emph{$\dfrac{\partial \bm h_t}{\partial \bm x_t} , \dfrac{\partial \bm \epsilon_t}{\partial \bm h_t} , \dfrac{\partial \hat{\bm x}_{0, t}}{\partial\bm  h_t} , \dfrac{\partial \bm \hat{\bm x}_{0, t}}{\partial \bm x_t}$ are low rank jacobian when $t \in [0.2, 0.7]$.}  As shown in the left of \Cref{fig:jacobian_rank_norm_diff_manifold}, rank ratio for $\dfrac{\partial \bm h_t}{\partial \bm x_t} , \dfrac{\partial \bm \epsilon_t}{\partial \bm h_t} , \dfrac{\partial \hat{\bm x}_{0, t}}{\partial\bm  h_t} , \dfrac{\partial \bm \hat{\bm x}_{0, t}}{\partial \bm x_t}$ is less than 0.1. It should be noted that:
    \begin{itemize}
        \item $\widetilde{\rank}(\dfrac{\partial \bm \epsilon_t}{\partial \bm x_t}) \geq d - \widetilde{\rank}(\dfrac{\partial \bm \hat{\bm x}_{0, t}}{\partial \bm x_t})$. This is because 
        \begin{align*}
            \widetilde{\rank}(\frac{\sqrt{1-\alpha_t}}{\sqrt{\alpha_t}} \dfrac{\partial \bm \epsilon_t}{\partial \bm x_t}) \geq \widetilde{\rank}(\frac{1}{\sqrt{\alpha_t}} \bm{I}_d) - \widetilde{\rank}(\dfrac{\partial \bm \hat{\bm x}_{0, t}}{\partial \bm x_t}).
        \end{align*}
        Therefore, $\dfrac{\partial \bm \epsilon_t}{\partial \bm x_t}$ is high rank when $\dfrac{\partial \bm \hat{\bm x}_{0, t}}{\partial \bm x_t}$ is low rank.
        \item $\widetilde{\rank}(\dfrac{\partial \bm \hat{\bm x}_{0, t}}{\partial \bm h_t}) = \widetilde{\rank}(\dfrac{\partial \bm \hat{\bm x}_{0, t}}{\partial \bm x_t})$ This is because $\Hat{\bm{x}}_{0,t} = \dfrac{\bm x_t - \sqrt{1 - \alpha_{t}} \bm \epsilon_{\bm \theta}(\bm x_t, t)}{\sqrt{\alpha_{t}}}$ and $\dfrac{\partial \bm x_t }{\partial \bm h_t} = 0$
    \end{itemize}
    \item \emph{When $\bm x_t$ fixed, $\bm \hat{\bm x}_{0, t}, \bm \epsilon_t$ will change little when changing $\bm  h_t$.} As shown in the right of \Cref{fig:jacobian_rank_norm_diff_manifold}, $||\dfrac{\partial \hat{\bm x}_{0, t}}{\partial\bm  h_t}||_F \ll ||\dfrac{\partial \hat{\bm x}_{0, t}}{\partial\bm  x_t}||_F$ and $\dfrac{\partial \bm \epsilon_t}{\partial \bm h_t} \ll \dfrac{\partial \bm \epsilon_t}{\partial \bm x_t}$. This means when $\bm x_t$ fixed, $\bm \hat{\bm x}_{0, t}, \bm \epsilon_t$ will change little when changing $\bm  h_t$.
\end{itemize}

Then, we also study the linearity of $\bm h_t$ and $\hat{\bm x}_{0, t}$ given $\bm x_t$, using DDPM with U-Net architecture trained on CIFAR-10 dataset. We change the step size $\lambda$ defined in \Cref{eqn:linear-approx}. Results are shown in \Cref{fig:jacobian_linear_symmetric_diff_manifold}, \emph{both $\bm h_t$ and $\hat{\bm x}_{0, t}$ have good linearity with respect to $\bm x_t$.}. 

In \Cref{thm:1}, the jacobian $\nabla_{\bm x_t}\mathbb{E}[\bm x_0| \bm x_t]$ is a symmetric matrix. Therefore, we also verify the symmetry of the jacobian over the PMP $\bm J_{\bm \theta, t}$. We use DDPM with U-Net architecture trained on CIFAR-10 dataset. At different timestep $t$, we measure $||\bm J_{\bm \theta, t} - \bm J^{\top}_{\bm \theta, t}||_F$. Results are shown on the right of \Cref{fig:jacobian_linear_symmetric_diff_manifold}. $\bm J_{\bm \theta, t}$ has good symmetric property when $t < 0.1$ and $t \in [0.6, 0.7]$. Additionally, $\bm J_{\bm \theta, t}$ is low rank when $t \in [0.6, 0.7]$. So $\bm J_{\bm \theta, t}$ aligned with \Cref{thm:1} $t \in [0.6, 0.7]$.


To the end, we want to based on the experiments in \Cref{fig:jacobian_rank_norm_diff_manifold} and \Cref{fig:jacobian_linear_symmetric_diff_manifold} to select the best space for out image editing method. $\dfrac{\partial \bm \epsilon_t}{\partial \bm x_t}$ is the high-rank matrix, not suitable for efficiently estimate the nullspace; $\dfrac{\partial \bm \epsilon_t}{\partial \bm h_t}$ and $ \dfrac{\partial \hat{\bm x}_{0, t}}{\partial\bm  h_t}$ has too small Frobenius norm to edit the image. Therefore, only $\dfrac{\partial \bm h_t}{\partial \bm x_t}$ and $\dfrac{\partial \bm \hat{\bm x}_{0, t}}{\partial \bm x_t}$ are low-rank and linear for image editing. What's more, $h_t$ space is restricted to UNet architecture, but the property of the $\dfrac{\partial \bm \hat{\bm x}_{0, t}}{\partial \bm x_t}$ does not depend on the UNet architecture and is verified in diffusion models using transformer architectures. Additionally, we could only apply masks on $\bm \hat{\bm x}_{0, t}$ but cannot on $\bm h_t$. \textbf{Therefore, the PMP $\bm f_{\bm \theta, t}$ is the best mapping for image editing.}


\section{Extra Details of LOCO Edit and T-LOCO Edit}\label{app:more-results}

\subsection{Generalized Power Method}
The Generalized Power Method~\autocite{subspace-iteration, park2023understanding} for calculating the op-$t$ singular vectors of the Jacobian is summarized in \Cref{alg:jacob_subspace}. It efficiently computes the top-$k$ singular values and singular vectors of the Jacobian with a randomly initialized orthonormal $\bm V \in \mathbb{R}^{d\times k}$.

\begin{algorithm}[t]
\caption{Generalized Power Method}
\label{alg:jacob_subspace}
\begin{algorithmic}[1]
\State \small{\textbf{Input}}: $\bm{f}: \mathbb{R}^{d} \rightarrow \mathbb{R}^{d}$, $\bm{x} \in \mathbb{R}^{d}$ and $\bm{V} \in \mathbb{R}^{d\times k}$
\State \small{\textbf{Output}}: $\left(\bm{U}, \bm{\Sigma}, \bm{V}^{\top}\right)-k$ top singular values and vectors of the Jacobian $\dfrac{\partial \bm{f}}{\partial \bm{x}}$
\State  $\bm{y} \leftarrow \bm{f}(\bm{x})$
\If{$\bm{V}$ is empty}
\State $\bm{V} \leftarrow$ i.i.d. standard Gaussian samples
\EndIf
\State $\bm{Q}, \bm{R} \leftarrow \mathrm{QR}(\bm{V})$ \Comment{Reduced $\mathrm{QR}$ decomposition}
\State $\bm{V} \leftarrow \bm{Q}$ \Comment{Ensures $\bm{V}^{\top}\bm{V}=\bm{I}$}
\While{stopping criteria}
\State $\bm{U} \leftarrow \dfrac{\partial \bm{f}(\bm{x}+a \bm{V})}{\partial a}$ at $a=0$ \Comment{Batch forward}
\State $\hat{\bm{V}} \leftarrow \dfrac{\partial\left(\bm{U}^{\top} \bm{y}\right)}{\partial \bm{x}}$
\State $\bm{V}, \bm{\Sigma}^{2}, \bm{R} \leftarrow \operatorname{SVD}(\hat{\bm{V}})$ \Comment{Reduced SVD}
\EndWhile
\State Orthonormalize $\bm{U}$
\end{algorithmic}
\end{algorithm}

\subsection{Unsupervised T-LOCO Edit}
\label{app:t-loco-un}

The overall method for DeepFloyd is summarized in \Cref{alg:t2i_uncond_algorithm}. For T2I diffusion models in the latent space such as Stable Diffusion and Latent Consistency Model, at time $t$, we additionally decode $\bm{\hat{z}}_0$ into the image space $\bm{\hat{x}}_0$ to enable masking and nullspace projection. The editing is still in the space of $\bm z_t$.

\begin{algorithm}[ht]
\caption{Unsupervised T-LOCO Edit for T2I diffusion models}
\label{alg:t2i_uncond_algorithm}
\begin{algorithmic}[1]
\State \small{\textbf{Input}: Random noise $\bm x_T$, the mask $\Omega$, edit timestep $t$, pretrained diffusion model $\bm \epsilon_{\bm \theta}$, editing scale $\lambda$, noise scheduler $\alpha_t, \sigma_t$, selected semantic index $k$, nullspace approximate rank $r$, original prompt $c_o$, null prompt $c_n$, classifier free guidance scale $s$.
\State \small{\textbf{Output}: Edited image $\bm x'_0$,}}
\State $\bm x_{t} \gets \text{\DDIM}(\bm x_T, 1, t, \bm\epsilon_{\bm \theta} (\bm x_T, t, c_n) + s (\bm\epsilon_{\bm \theta} (\bm x_T, t, c_o) - \bm\epsilon_{\bm \theta} (\bm x_T, t, c_n)))$

\State $\bm{\hat{x}}_{0,t} \gets \bm f^o_{\bm{\theta},t}(\bm x_t)$

\State Masking by $\textcolor{ForestGreen}{\bm{\Tilde{x}}_{0,t}} \gets \mathcal P_{\Omega}(\bm{\hat{x}}_{0,t})$ and $\textcolor{blue}{\bm{\bar{x}}_{0,t}} \gets \bm{\hat{x}}_{0,t} - \bm{\Tilde{x}}_{0,t} $ \Comment{Use the mask for local image editing}

\State The top-$k$ SVD $(\bm{\Tilde{U}}_{t,k},\bm{\Tilde{\Sigma}}_{t,k},\textcolor{ForestGreen}{\bm{\Tilde{V}}_{t,k}})$ of $\textcolor{ForestGreen}{\bm{\Tilde{J}}_{\bm \theta, t}} = \dfrac{\partial \textcolor{ForestGreen}{\bm{\Tilde{x}}_{0,t}}}{\partial \bm x_t} $ \Comment{Efficiently computed via generalized power method} 
\State The top-$r$ SVD $(\bm{\bar{U}}_{t,r},\bm{\bar{\Sigma}}_{t,r},\textcolor{blue}{\bm{\bar{V}}_{t,r}})$ of $\textcolor{blue}{\bm{\bar{J}}_{\bm \theta, t}} = \dfrac{\partial \textcolor{blue}{\bm{\bar{x}}_{0,t}}}{\partial \bm x_t}$ \Comment{Efficiently computed via generalized power method}


\State Pick direction $\textcolor{ForestGreen}{\bm v} \gets \textcolor{ForestGreen}{\bm{\Tilde{V}}_{t,k}}[:,i]$ \Comment{Pick the $i^{th}$ singular vector for editing within the mask $\Omega$}
\State Compute $\textcolor{red}{\bm v_p} \gets (\bm I - \textcolor{blue}{\bm{\bar{V}}_{t,r} \bm{\bar{V}}_{t,r}^\top}) \cdot \textcolor{ForestGreen}{\bm v}$ \Comment{Nullspace projection for editing within the mask $\Omega$}

\State $\textcolor{red}{\bm v_p} \gets \frac{\textcolor{red}{\bm v_p}}{\| \textcolor{red}{\bm v_p} \|_2}$ \Comment{Normalize the editing direction}

\State $\bm x_t^{\prime} \gets \bm x_t + \textcolor{red}{\lambda \bm v_p}$
\State $\bm {x^{\prime}_0} \gets \text{\DDIM}(\bm x_t^{\prime}, t, 0, \bm\epsilon_{\bm \theta} (\bm x_t, t, c_n) + s (\bm\epsilon_{\bm \theta} (\bm x_t, t, c_o) - \bm\epsilon_{\bm \theta} (\bm x_t, t, c_n)))$
\end{algorithmic}
\end{algorithm}

\subsection{Text-suprvised T-LOCO Edit}
\label{appendix:t2i-text}

Before introducing the algorithm, we define:
\begin{equation}
    \bm f^o_{\bm{\theta},t}(\bm x_t) =  \dfrac{\bm x_{t} - \alpha_{t}\sigma_{t} (\bm\epsilon_{\bm \theta} (\bm x_t, t, c_n) + s (\bm\epsilon_{\bm \theta} (\bm x_t, t, c_o) - \bm\epsilon_{\bm \theta} (\bm x_t, t, c_n)))}{\alpha_{t}},
\end{equation}
and
\begin{equation}
    \bm f^e_{\bm{\theta},t}(\bm x_t) = \bm f^o_{\bm{\theta},t}(\bm x_t) + \dfrac{m (\bm\epsilon_{\bm \theta} (\bm x_t, t, c_e) - \bm\epsilon_{\bm \theta} (\bm x_t, t, c_n)) )}{\alpha_{t}},
\end{equation}
to be the posterior mean predictors when using classifier-free guidance on the original prompt $c_o$, and both the original prompt $c_o$ and the edit prompt $c_e$ accordingly.

\paragraph{Algorithm.} The overall method for DeepFloyd is summarized in \Cref{alg:t2iv2_algorithm}. For T2I diffusion models in the latent space such as Stable Diffusion and Latent Consistency Model, at time $t$, we additionally decode $\bm{\hat{z}}_0$ into the image space $\bm{\hat{x}}_0$ to enable masking and nullspace projection. The editing is in the space of $\bm z_t$ for Stable Diffusion and Latent Consistency Model. The proposed method is not proposed as an approach beating other T2I editing methods, but as a way to both understand semantic correspondences in the low-rank subspaces of T2I diffusion models and utilize subspaces for semantic control in a more interpretable way. We hope to inspire and open up directions in understanding T2I diffusion models and utilize the understanding in versatile applications.

Here, we want to find a specific change direction $\bm v_p$ in the $\bm x_t$ space that can provide target edited images in the space of $\bm x_0$ by directly moving $\bm x_t$ along $\bm v_p$: the whole generation is not conditioned on $c_e$ at all, except that we utilize $c_e$ in finding the editing direction $\bm v_p$. This is in contrast to the method proposed in~\autocite{park2023understanding}, where additional semantic information is injected via indirect x-space guidance conditioned on the edit prompt at time $t$. We hope to discover an editing direction that is expressive enough by itself to perform semantic editing.

\paragraph{Intuition.}
Let $\bm{\hat{x}}_{0,t}^{o}$ be the estimated posterior mean conditioned on the original prompt $c_o$, and $\bm{\hat{x}}_{0,t}^{e}$ be the estimated posterior mean conditioned on both the original prompt $c_o$ and the edit prompt $c_e$. Let $\bm J_{\bm{\theta},t}^{o}$ and $\bm J_{\bm{\theta},t}^{e}$ be their Jacobian over the noisy image $\bm x_t$ accordingly. The key intuition inspired by the unconditional cases are: i) the target editing direction $\bm v$ in the $\bm x_t$ space is homogeneous between the subspaces in $\bm J_{\bm{\theta},t}^{o}$ and $\bm J_{\bm{\theta},t}^{e}$; ii) the founded editing direction $\bm v$ can effectively reside in the direction of a right singular vector for both $\bm J_{\bm{\theta},t}^{o}$ and $\bm J_{\bm{\theta},t}^{e}$; iii) $\bm{\hat{x}}_{0,t}^{e}$ and $\bm{\hat{x}}_{0,t}^{o}$ are locally linear.

Define $\bm{\hat{x}}_{0,t}^{e} - \bm{\hat{x}}_{0,t}^{o} = \bm d$ as the change of estimated posterior mean. Let $\bm J_{\bm{\theta},t}^{e} = \bm U_t^{e} \bm S_t^{e} \bm V_t^{{e}^T}$, then $\bm v = \pm \bm v^e_i$ for some $i$. Besides, we have $\bm{\hat{x}}_{0,t}^{e} = \bm{\hat{x}}_{0,t}^{o} + \lambda^o \bm J_{\bm{\theta},t}^{o} \bm v$ and $\bm{\hat{x}}_{0,t}^{o} = \bm{\hat{x}}_{0,t}^{e} + \lambda^e \bm J_{\bm{\theta},t}^{e} \bm v$ due to homogeneity and linearity. Hence, $\bm d = -\lambda^e \bm J_{\bm{\theta},t}^{e} \bm v = \pm \lambda^e s_i^e\bm u_i^e$ and then $\bm J_{\bm{\theta},t}^{{e}^T} \bm d = \pm \lambda^e s_i^e s_i^e \bm v_i^e = \pm \lambda^e s_i^e s_i^e \bm v$, which is along the desired direction $\bm v$. And this $\bm v$ identified through the subspace in $\bm J_{\bm{\theta},t}^{e}$ can be effectively transferred in $\bm J_{\bm{\theta},t}^{o}$ for controlling the editing of target semantics. We further apply nullspace projection based on $\bm J_{\bm{\theta},t}^{o}$ to obtain the final editing direction $\bm v_p$.

\begin{algorithm}[t]
\caption{Text-supervised T-LOCO Edit for T2I diffusion models}
\label{alg:t2iv2_algorithm}
\begin{algorithmic}[1]
\State \small{\textbf{Input}: Random noise $\bm x_T$, the mask $\Omega$,, edit timestep $t$, pretrained diffusion model $\bm \epsilon_{\bm \theta}$, editing scale $\lambda$, noise scheduler $\alpha_t, \sigma_t$, selected semantic index $k$, nullspace approximate rank $r$, original prompt $c_o$, edit prompt $c_e$, null prompt $c_n$, classifier free guidance scale $s$.
\State \small{\textbf{Output}: Edited image $\bm x'_0$,}}
\State $\bm x_{t} \gets \text{\DDIM}(\bm x_T, 1, t, \bm\epsilon_{\bm \theta} (\bm x_T, t, c_n) + s (\bm\epsilon_{\bm \theta} (\bm x_T, t, c_o) - \bm\epsilon_{\bm \theta} (\bm x_T, t, c_n)))$

\State $\bm{\hat{x}}_{0,t}^{o} \gets \bm f^o_{\bm{\theta},t}(\bm x_t)$

\State $\bm{\hat{x}}_{0,t}^{e} \gets \bm f^e_{\bm{\theta},t}(\bm x_t)$

\State $\textcolor{ForestGreen}{\bm d}  \gets \mathcal P_{\Omega} \left( \bm{\hat{x}}_{0,t}^{e} - \bm{\hat{x}}_{0,t}^{o} \right)$

\State $\textcolor{ForestGreen}{\bm{\Tilde{x}}_{0,t}} \gets \mathcal P_{\Omega} (\bm{\hat{x}}_{0,t}^{e})$

\State $\textcolor{ForestGreen}{\bm v \gets \dfrac{\partial ( \bm d^{\top} \bm{\Tilde{x}}_{0,t})}{\partial \bm x_t}}$ \Comment{Get text-supervised editing direction within the mask}

\State $\textcolor{blue}{\bm{\bar{x}}_{0,t}} \gets \bm{\hat{x}}_{0,t}^o - \mathcal P_{\Omega}(\bm{\hat{x}}_{0,t}^o)$


\State The top-$r$ SVD $(\bm{\bar{U}}_{t,r},\bm{\bar{\Sigma}}_{t,r},\textcolor{blue}{\bm{\bar{V}}_{t,r}})$ of $\textcolor{blue}{\bm{\bar{J}}_{\bm \theta, t}} = \dfrac{\partial \textcolor{blue}{\bm{\bar{x}}_{0,t}}}{\partial \bm x_t}$ \Comment{Efficiently computed via generalized power method}

\State $\textcolor{red}{\bm v_p} \gets (\bm I - \textcolor{blue}{\bm{\bar{V}}_{t,r} \bm{\bar{V}}_{t,r}^{\top}}) \cdot \textcolor{ForestGreen}{\bm v}$ \Comment{nullspace projection for editing within the mask}

\State $\textcolor{red}{\bm v_p} \gets \frac{\textcolor{red}{\bm v_p}}{\| \textcolor{red}{\bm v_p} \|_2}$ \Comment{Normalize the editing direction}

\State $\bm x_t^{\prime} \gets \bm x_t + \textcolor{red}{\lambda \bm v_p}$
\State $\bm {x^{\prime}_0} \gets \text{\DDIM}(\bm x_t^{\prime}, t, 0, \bm\epsilon_{\bm \theta} (\bm x_t, t, c_n) + s (\bm\epsilon_{\bm \theta} (\bm x_t, t, c_o) - \bm\epsilon_{\bm \theta} (\bm x_t, t, c_n)))$
\end{algorithmic}
\end{algorithm}
\section{Proofs in Section~\ref{sec:verification}}
\label{appendix:proof}


\subsection{Proofs of Lemma~\ref{lemma:posterior_mean}}
\begin{proof}[Proof of \Cref{lemma:posterior_mean}]
Under the \Cref{assumption:data distrib}, we could calculate the noised distribution $p_t(\bm x_t)$ at any timestep $t$,
\begin{align*}
    p_t(\bm x_t) 
    &= \frac{1}{K} \sum_{k = 1}^{K}p_t (\bm x_t| "\bm x_0 \text{ belongs to class } k" ) \\
    &= \frac{1}{K} \sum_{k = 1}^{K}\int p_t (\bm x_t| \bm x_0 = \bm M_k \bm a_k, "\bm x_0 \text{ belongs to class } k") \mathcal{N}(\bm a_k; \bm 0, \bm I_{r_k}) d\bm a_k. \\
\end{align*}
Because $a_k \sim \mathcal{N}(\bm 0, \bm I_{r_k})$, $p_t (\bm x_t| \bm x_0 = \bm M_k \bm a_k, "\bm x_0 \text{ belongs to class } k") \sim \mathcal{N}(\sqrt{\alpha_t} \bm M_k \bm a_k, (1 - \alpha_t)\bm I_d)$. From the relationship between conditional Gaussian distribution and marginal Gaussian distribution, it is easy to show that $p_t (\bm x_t| "\bm x_0 \text{ belongs to class } k" ) \sim \mathcal{N}(\bm 0, \alpha_t \bm M_k \bm M_k^\top + (1 - \alpha_t) \bm I_d)$

Then, we have
\begin{align*}
    p_t(\bm x_t) 
    &= \frac{1}{K} \sum_{k=1}^K \mathcal{N}(\bm 0, \alpha_t \bm M_k \bm M_k^\top + (1 - \alpha_t) \bm I_d).
\end{align*}

Next, we compute the score function as follows:
\begin{align*}
    \nabla_{\bm x_t} \text{log} p_t(\bm x_t) 
    &= \frac{\nabla_{\bm x_t} p_t(\bm x_t)}{p_t(\bm x_t)} \\
    &= \frac{\sum_{k=1}^K \mathcal{N}(\bm 0, \alpha_t \bm M_k \bm M_k^\top + (1 - \alpha_t) \bm I_d) \left(- \dfrac{1}{1 -\alpha_t}  \bm x_t + \dfrac{\alpha_t}{1 - \alpha_t} \bm M_k\bm M_k^\top \bm x_t\right)}{\sum_{k=1}^K \mathcal{N}(\bm 0, \alpha_t \bm M_k \bm M_k^\top + (1 - \alpha_t) \bm I_d)} \\
    & = - \dfrac{1}{1 -\alpha_t}  \bm x_t + \dfrac{\alpha_t}{1 - \alpha_t} \frac{\sum_{k=1}^K \mathcal{N}(\bm 0, \alpha_t \bm M_k \bm M_k^\top + (1 - \alpha_t) \bm I_d) \bm M_k\bm M_k^\top \bm x_t}{\sum_{k=1}^K \mathcal{N}(\bm 0, \alpha_t \bm M_k \bm M_k^\top + (1 - \alpha_t) \bm I_d)}. 
\end{align*}

Based on Tweedie's formula~\autocite{luo2022understanding, tweedie}, the relationship between the score function and posterior is 
\begin{equation}
    \label{eq:Tweedie_formula}
    \mathbb E[\bm x_0 | \bm x_t] = \frac{\bm x_t + (1 - \alpha_t) \nabla_{\bm x_t} \text{log} p_t(\bm x_t)}{\sqrt{\alpha_t}}.
\end{equation}

Therefore, the posterior mean is 
\begin{align*}
    \mathbb E[\bm x_0 | \bm x_t]
    &= \sqrt{\alpha_t}\frac{\sum_{k=1}^K \mathcal{N}(\bm 0, \alpha_t \bm M_k \bm M_k^\top + (1 - \alpha_t) \bm I_d) \bm M_k\bm M_k^\top \bm x_t}{\sum_{k=1}^K \mathcal{N}(\bm 0, \alpha_t \bm M_k \bm M_k^\top + (1 - \alpha_t) \bm I_d)} \\
    &= \sqrt{\alpha_t} \frac{\sum_{k=1}^K \exp\left(-\dfrac{1}{2} \bm x_t^\top \left(\alpha_t \bm M_k \bm M_k^\top + (1 - \alpha_t) \bm I_d\right)^{-1} \bm x_t\right) \bm M_k\bm M_k^\top \bm x_t}{\sum_{k=1}^K \exp\left(-\dfrac{1}{2} \bm x_t^\top \left(\alpha_t \bm M_k \bm M_k^\top + (1 - \alpha_t) \bm I_d\right)^{-1} \bm x_t\right)} \\
    & = \sqrt{\alpha_t} \frac{\sum_{k=1}^K \exp\left(-\dfrac{1}{2(1 - \alpha_t)} \left(\|\bm x_t\|^2 - \alpha_t \|\bm M_k^\top\bm x_t\|^2\right)\right) \bm M_k\bm M_k^\top \bm x_t}{\sum_{k=1}^K \exp\left(-\dfrac{1}{2(1 - \alpha_t)} \left(\|\bm x_t\|^2 - \alpha_t \|\bm M_k^\top\bm x_t\|^2\right)\right)} \\
    & = \sqrt{\alpha_t} \frac{\sum_{k=1}^K \exp\left(\dfrac{\alpha_t}{2(1 - \alpha_t)} \|\bm M_k^\top\bm x\|^2\right) \bm M_k\bm M_k^\top \bm x_t}{\sum_{k=1}^K \exp\left(\dfrac{\alpha_t}{2(1 - \alpha_t)} \|\bm M_k^\top\bm x\|^2\right)},
\end{align*}
where the third equation is obtained by Woodbury formula~\autocite{matrix_inversion_lemma} $(\alpha_t \bm M_k \bm M_k^\top + (1 - \alpha_t)\bm I_d)^{-1} = \dfrac{1}{1 - \alpha_t}\left(\bm I_d - \alpha_t \bm M_k \bm M_k^\top\right)$.
\end{proof}

\subsection{Proofs of Theorem~\ref{thm:1}}

\begin{lemma}\label{lemma:jacobian of posterior mean}
     The jacobian of the poster mean is 
     \begin{equation}
        \begin{aligned}\label{eq:jacobian}
            \nabla_{\bm x_t}\E\left[ \bm x_0\vert \bm x_t\right] 
            &= \sqrt{\alpha_t} \underbrace{\sum_{k = 1}^{K}\omega_k(\bm x_t) \bm M_k \bm M_k^\top}_{\bm A \coloneqq}\\
            &+ \dfrac{\alpha_t \sqrt{\alpha_t}}{\left(1 - \alpha_t\right)} \underbrace{\sum_{k = 1}^{K}\omega_k(\bm x_t) \bm M_k \bm M_k^\top\bm x_t \bm x^\top_t \bm M_k \bm M_k^\top}_{\bm B \coloneqq} \\
            &- \dfrac{\alpha_t \sqrt{\alpha_t}}{\left(1 - \alpha_t\right)} \underbrace{\left(\sum_{k=1}^K \omega_k (\bm x_t) \bm M_k \bm M_k^\top\right) \bm x_t \bm x_t^\top  \left(\sum_{k=1}^K \omega_k (\bm x_t) \bm M_k \bm M_k^\top\right)^\top}_{\bm C \coloneqq},
        \end{aligned}           
     \end{equation}
    where $\omega_k(\bm x_t) \coloneqq \dfrac{\exp\left(\dfrac{\alpha_t}{2\left(1 - \alpha_t\right)} \|\bm M_k^\top\bm x_t\|^2\right)}{\sum_{l=1}^K \exp\left(\dfrac{\alpha_t}{2(1 - \alpha_t)} \|\bm M_l^\top\bm x\|^2\right)}$
\end{lemma}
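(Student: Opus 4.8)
The plan is to differentiate the closed form of the posterior mean from \Cref{lemma:posterior_mean} directly, treating it as a softmax-weighted combination of orthogonal projectors. Writing $\bm P_k := \bm M_k \bm M_k^\top$ (which is symmetric, being an orthogonal projector) and setting $g_k(\bm x_t) := \exp\left(\frac{\alpha_t}{2(1-\alpha_t)}\|\bm M_k^\top \bm x_t\|^2\right)$ so that $\omega_k(\bm x_t) = g_k(\bm x_t)/\sum_l g_l(\bm x_t)$, \Cref{lemma:posterior_mean} reads
\begin{align*}
\E[\bm x_0 \mid \bm x_t] = \sqrt{\alpha_t} \sum_{k=1}^K \omega_k(\bm x_t)\, \bm P_k \bm x_t.
\end{align*}
First I would apply the product rule to each summand, obtaining the Jacobian as $\sqrt{\alpha_t}\sum_k \left[ \omega_k \bm P_k + \bm P_k \bm x_t\, (\nabla_{\bm x_t}\omega_k)^\top \right]$. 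The first piece already assembles into $\sqrt{\alpha_t}\,\bm A$, so the entire remaining content of the lemma lives in the outer-product terms $\bm P_k \bm x_t (\nabla_{\bm x_t}\omega_k)^\top$.

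The key computation is the gradient of the softmax weight $\omega_k$. Using $\nabla_{\bm x_t}\|\bm M_k^\top \bm x_t\|^2 = 2\bm P_k \bm x_t$ gives $\nabla_{\bm x_t} g_k = \frac{\alpha_t}{1-\alpha_t}\, g_k\, \bm P_k \bm x_t$, and the quotient rule for $\omega_k = g_k/\sum_l g_l$ then yields the standard softmax form
\begin{align*}
\nabla_{\bm x_t}\omega_k = \frac{\alpha_t}{1-\alpha_t}\,\omega_k\left( \bm P_k \bm x_t - \sum_{l=1}^K \omega_l \bm P_l \bm x_t \right).
\end{align*}
Substituting this into $\bm P_k \bm x_t (\nabla_{\bm x_t}\omega_k)^\top$ and invoking $\bm P_k^\top = \bm P_k$ splits each term into a ``diagonal'' outer product $\omega_k \bm P_k \bm x_t \bm x_t^\top \bm P_k$ and a ``cross'' contribution $\omega_k \bm P_k \bm x_t \bm x_t^\top \big(\sum_l \omega_l \bm P_l\big)$, each carrying the prefactor $\frac{\alpha_t}{1-\alpha_t}$.

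Finally I would sum over $k$. The diagonal outer products collect into $\bm B = \sum_k \omega_k \bm P_k \bm x_t \bm x_t^\top \bm P_k$, while the cross terms factor as $\big(\sum_k \omega_k \bm P_k\big)\bm x_t \bm x_t^\top\big(\sum_l \omega_l \bm P_l\big) = \bm C$ (the transpose appearing in the statement being immaterial since $\bm A = \sum_k \omega_k \bm P_k$ is symmetric). Multiplying through by $\sqrt{\alpha_t}$ reproduces the claimed $\sqrt{\alpha_t}\,\bm A + \frac{\alpha_t\sqrt{\alpha_t}}{1-\alpha_t}\,\bm B - \frac{\alpha_t\sqrt{\alpha_t}}{1-\alpha_t}\,\bm C$. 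The only real obstacle is bookkeeping rather than analysis: one must fix the Jacobian convention (rows indexed by output, columns by input) so that $\nabla_{\bm x_t}\omega_k$ enters transposed in the outer product, and must repeatedly use $\bm P_k = \bm P_k^\top$ to collapse $\bm M_k \bm M_k^\top \bm x_t \bm x_t^\top \bm M_k \bm M_k^\top$ into the form of $\bm B$. No difficulty beyond careful matrix algebra is expected.
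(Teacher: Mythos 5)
Your proposal is correct and follows essentially the same route as the paper's proof: both differentiate the closed form of $\E[\bm x_0 \mid \bm x_t]$ from \Cref{lemma:posterior_mean} via the product rule, compute the softmax-weight gradient $\nabla_{\bm x_t}\omega_k = \frac{\alpha_t}{1-\alpha_t}\omega_k\bigl(\bm M_k\bm M_k^\top\bm x_t - \sum_l \omega_l \bm M_l\bm M_l^\top\bm x_t\bigr)$, and collect the resulting outer products into $\bm B$ and $\bm C$. The only difference is a transposed Jacobian convention (you write $\bm P_k\bm x_t(\nabla\omega_k)^\top$ where the paper writes $\nabla\omega_k\,\bm x_t^\top\bm P_k$), which, as you note, is immaterial by symmetry of the projectors and of $\bm C$.
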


\begin{proof}[Proof of \Cref{lemma:jacobian of posterior mean}]
    Let $\omega_k(\bm x_t) \coloneqq \dfrac{\exp\left(\dfrac{\alpha_t}{2\left(1 - \alpha_t\right)} \|\bm M_k^\top\bm x_t\|^2\right)}{\sum_{l=1}^K \exp\left(\dfrac{\alpha_t}{2(1 - \alpha_t)} \|\bm M_l^\top\bm x\|^2\right)}$, so we have:
    \begin{align*}
        \E\left[ \bm x_0\vert \bm x_t\right] &= \sqrt{\alpha_t}  \sum_{k=1}^K \omega_k (\bm x_t) \bm M_k \bm M_k^\top \bm x_t \\
        \nabla_{\bm x_t}\omega_k(\bm x_t) &= \dfrac{\alpha_t}{\left(1 - \alpha_t\right)} \omega_k(\bm x_t) \left[\bm M_k \bm M_k^\top\bm x_t - \sum_{l=1}^K  \omega_l (\bm x_t) \bm M_l \bm M_l^\top \bm x_t\right]
    \end{align*}
    So:
    \begin{align*}
    \nabla_{\bm x_t}\E\left[ \bm x_0\vert \bm x_t\right] 
    &= \sqrt{\alpha_t} \sum_{k = 1}^{K}\omega_k(\bm x_t) \bm M_k \bm M_k^\top + \sqrt{\alpha_t} \sum_{k = 1}^{K} \nabla_{\bm x_t} \omega_k(\bm x_t) \bm x_t^\top \bm M_k \bm M_k^\top \\
    &= \sqrt{\alpha_t} \sum_{k = 1}^{K}\omega_k(\bm x_t) \bm M_k \bm M_k^\top\\
    &+ \dfrac{\alpha_t \sqrt{\alpha_t}}{\left(1 - \alpha_t\right)} \sum_{k = 1}^{K}\omega_k(\bm x_t) \bm M_k \bm M_k^\top\bm x_t \bm x^\top_t \bm M_k \bm M_k^\top \\
    &- \dfrac{\alpha_t \sqrt{\alpha_t}}{\left(1 - \alpha_t\right)}\left(\sum_{k=1}^K \omega_k (\bm x_t) \bm M_k \bm M_k^\top\right) \bm x_t \bm x_t^\top  \left(\sum_{k=1}^K \omega_k (\bm x_t) \bm M_k \bm M_k^\top\right)^\top.
    \end{align*}
\end{proof}

\begin{lemma}
    \label{lemma:symmetric_posterior_mean}
    Assume second-order partial derivatives of $p_t(\bm x_t)$ exist for any $\bm x_t$, then the posterior mean $\nabla_{\bm x_t}\E\left[ \bm x_0\vert \bm x_t\right]$ satisfied $\nabla_{\bm x_t}\E\left[ \bm x_0\vert \bm x_t\right] = \nabla_{\bm x_t}\E^\top\left[ \bm x_0\vert \bm x_t\right]$.
\end{lemma}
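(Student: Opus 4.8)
The plan is to avoid the explicit Gaussian-mixture computation from \Cref{lemma:jacobian of posterior mean} entirely and instead derive the symmetry directly from Tweedie's formula \eqref{eq:Tweedie_formula}, which expresses the posterior mean through the score function. Starting from
\begin{align*}
\E[\bm x_0 \vert \bm x_t] = \frac{\bm x_t + (1 - \alpha_t)\,\nabla_{\bm x_t}\log p_t(\bm x_t)}{\sqrt{\alpha_t}},
\end{align*}
I would differentiate both sides with respect to $\bm x_t$ to obtain
\begin{align*}
\nabla_{\bm x_t}\E[\bm x_0 \vert \bm x_t] = \frac{1}{\sqrt{\alpha_t}}\Bigl(\bm I_d + (1 - \alpha_t)\,\nabla^2_{\bm x_t}\log p_t(\bm x_t)\Bigr),
\end{align*}
where $\nabla^2_{\bm x_t}\log p_t(\bm x_t)$ denotes the Hessian of the scalar function $\log p_t$.

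The key observation is that the Jacobian of the score field $\nabla_{\bm x_t}\log p_t$ is precisely the Hessian of $\log p_t$, whose $(i,j)$ entry is $\partial^2 \log p_t / \partial x_i \partial x_j$. Since $p_t$ is a mixture of Gaussians it is everywhere strictly positive, so $\log p_t$ is well defined and inherits the regularity of $p_t$; under the assumed existence (and continuity) of the second-order partial derivatives, Clairaut's/Schwarz's theorem guarantees that the mixed partials commute, i.e. the Hessian is symmetric. The identity term $\bm I_d$ is trivially symmetric, and a scalar multiple of a symmetric matrix plus a symmetric matrix is again symmetric; hence $\nabla_{\bm x_t}\E[\bm x_0 \vert \bm x_t]$ equals its own transpose, as claimed.

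I expect essentially no obstacle here beyond the smoothness bookkeeping: the only genuine subtlety is that equality of mixed partials requires continuity (not merely existence) of the second derivatives, so I would either strengthen the hypothesis to continuous second-order partials or simply note that for a Gaussian mixture $\log p_t$ is $C^\infty$, which makes the Hessian automatically symmetric. As an independent sanity check consistent with the paper's explicit formula, one can verify term-by-term from \Cref{lemma:jacobian of posterior mean} that each summand is symmetric: $\bm A = \sum_k \omega_k \bm M_k \bm M_k^\top$ is a sum of symmetric projectors; each summand of $\bm B$ is $\omega_k \bm z_k \bm z_k^\top$ with $\bm z_k = \bm M_k \bm M_k^\top \bm x_t$, hence symmetric rank-one; and $\bm C = \bm S \bm x_t \bm x_t^\top \bm S^\top$ with $\bm S = \sum_k \omega_k \bm M_k \bm M_k^\top$ symmetric, so $\bm C^\top = \bm C$. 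This confirms the general argument in the concrete setting.
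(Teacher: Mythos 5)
Your proposal follows essentially the same route as the paper's own proof: differentiate Tweedie's formula \eqref{eq:Tweedie_formula} and invoke the symmetry of the Hessian $\nabla^2_{\bm x_t}\log p_t(\bm x_t)$, which is exactly the argument given in the appendix (where the paper likewise notes the result holds without \Cref{assumption:data distrib}). Your added remarks --- that Schwarz's theorem needs continuity of the second partials (or simply that a Gaussian mixture makes $\log p_t$ smooth), and the term-by-term sanity check against \Cref{lemma:jacobian of posterior mean} --- are welcome refinements but do not change the approach.
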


\begin{proof}[Proof of \Cref{lemma:symmetric_posterior_mean}] 
    By taking the gradient of \Cref{eq:Tweedie_formula} with respect to $\bm x_t$ for both side, because the second-order partial derivatives of $p_t(\bm x_t)$ exist for any $\bm x_t$, we have:
    \begin{equation*}
        \nabla_{\bm x_t}\mathbb E[\bm x_0 | \bm x_t] = \frac{\bm I + (1 - \alpha_t) \nabla^2_{\bm x_t} \text{log} p_t(\bm x_t)}{\sqrt{\alpha_t}}.
    \end{equation*}
    
    The hessian of $\text{log} p_t(\bm x_t)$ is symmetric, so we have: 
    \begin{equation*}
        \nabla_{\bm x_t}\mathbb E^\top[\bm x_0 | \bm x_t] = \frac{\bm I + (1 - \alpha_t) \left(\nabla^2_{\bm x_t} \text{log} p_t(\bm x_t)\right)^\top}{\sqrt{\alpha_t}} = \frac{\bm I + (1 - \alpha_t) \nabla^2_{\bm x_t} \text{log} p_t(\bm x_t)}{\sqrt{\alpha_t}} = \nabla_{\bm x_t}\mathbb E[\bm x_0 | \bm x_t].
    \end{equation*}

    Notably, the symmetric of $\nabla_{\bm x_t}\mathbb E[\bm x_0 | \bm x_t]$ holds without the \Cref{assumption:data distrib}.
\end{proof}

\begin{proof}[Proof of \Cref{thm:1}] 

\textbf{First, let's prove the low-rankness of the posterior mean}. From \Cref{lemma:jacobian of posterior mean}, 
\begin{align*}\label{eq:jacobian}
    \nabla_{\bm x_t}\E\left[ \bm x_0\vert \bm x_t\right] 
    &= \sqrt{\alpha_t} \bm A + \dfrac{\alpha_t \sqrt{\alpha_t}}{\left(1 - \alpha_t\right)} \bm B - \dfrac{\alpha_t \sqrt{\alpha_t}}{\left(1 - \alpha_t\right)} \bm C \\
    &= \sum_{k=1}^{K} \bm M_k \bm M_k^\top \left(\sqrt{\alpha_t} \bm A + \dfrac{\alpha_t \sqrt{\alpha_t}}{\left(1 - \alpha_t\right)} \bm B - \dfrac{\alpha_t \sqrt{\alpha_t}}{\left(1 - \alpha_t\right)} \bm C\right),
\end{align*}

where the second equation is obtained due to the fact that $\sum_{k=1}^{K} \bm M_k \bm M_k^\top \bm A = \bm A, \sum_{k=1}^{K} \bm M_k \bm M_k^\top \bm B = \bm B, \sum_{k=1}^{K} \bm M_k \bm M_k^\top \bm C = \bm C$. Therefore, we have:
\begin{equation}
    \begin{aligned}
    \label{eq:low_rank}
    rank\left(\nabla_{\bm x_t}\E\left[ \bm x_0\vert \bm x_t\right]\right) &= rank\left(\sum_{k=1}^{K} \bm M_k \bm M_k^\top \left(\sqrt{\alpha_t} \bm A + \dfrac{\alpha_t \sqrt{\alpha_t}}{\left(1 - \alpha_t\right)} \bm B - \dfrac{\alpha_t \sqrt{\alpha_t}}{\left(1 - \alpha_t\right)} \bm C\right)\right) \\
    &\leq rank\left(\sum_{k=1}^{K} \bm M_k \bm M_k^\top\right) = \sum_{k=1}^{K} r_k\\
\end{aligned}
\end{equation}

\textbf{Then, we prove the linearity}: 


\begin{align*}
        &\large{\raisebox{.5pt}{\textcircled{\tiny 1}}}  : ||\E\left[ \bm x_0\vert \bm x_t + \lambda \Delta \bm x\right] - \E\left[ \bm x_0\vert \bm x_t\right] - \lambda \nabla_{\bm x_t}\E[\bm x_0\vert \bm x_t] \Delta \bm x ||_2 \\
        =& ||\sqrt{\alpha_t} \sum_{k=1}^{K} \left(\omega_k(\bm x_t + \lambda \Delta \bm x) - \omega_k(\bm x_t)\right)\bm M_k \bm M_k^\top \left(\bm x_t +  \lambda \Delta \bm x\right) - \lambda \sum_{k=1}^{K} \nabla_{\bm x_t} \omega_k(\bm x_t) \bm x^\top_t \bm M_k \bm M^\top_k \Delta \bm x||_2 \\ 
        =& ||\sqrt{\alpha_t} \sum_{k=1}^{K} \left(\lambda \nabla^\top_{\bm x_t} \omega_k(\bm x_t + \lambda_1 \Delta \bm x) \Delta \bm x \right)\bm M_k \bm M_k^\top \left(\bm x_t +  \lambda \Delta \bm x\right) - \lambda \sum_{k=1}^{K} \nabla_{\bm x_t} \omega_k(\bm x_t) \bm x^\top_t \bm M_k \bm M^\top_k \Delta \bm x||_2 \\ 
        \leq& \lambda \left(  \sum_{k=1}^{K}  \sqrt{\alpha_t}\nabla^\top_{\bm x_t} \omega_k(\bm x_t +  \lambda_1 \Delta \bm x) \Delta \bm x || \bm M_k^\top \left(\bm x_t +  \lambda \Delta \bm x \right) ||_2 + \bm x^\top_t \bm M_k \bm M^\top_k \Delta \bm x ||\nabla^\top_{\bm x_t} \omega_k(\bm x_t)||_2 \right)\\
        \leq& \lambda  \sum_{k=1}^{K} \left( \sqrt{\alpha_t} ||\nabla_{\bm x_t} \omega_k(\bm x_t +  \lambda_1 \Delta \bm x)||_2 || \bm M_k^\top \left(\bm x_t +  \lambda \Delta \bm x\right) ||_2 + ||\nabla_{\bm x_t} \omega_k(\bm x_t)||_2 || \bm M_k^\top \bm x_t  ||_2  \right)
\end{align*}
where the first equation plugs in the formula of 
$$\nabla_{\bm x_t}\E\left[ \bm x_0\vert \bm x_t\right] = \sqrt{\alpha_t} \sum_{k = 1}^{K}\omega_k(\bm x_t) \bm M_k \bm M_k^\top + \sqrt{\alpha_t} \sum_{k = 1}^{K} \nabla_{\bm x_t} \omega_k(\bm x_t) \bm x_t^\top \bm M_k \bm M_k^\top $$
and the second equation use the mean value theorem $\omega_k(\bm x_t + \lambda \Delta \bm x) - \omega_k(\bm x_t)= \lambda \nabla^\top_{\bm x_t}\omega_k(\bm x_t + \lambda_1 \Delta \bm x)\Delta \bm x$, $\lambda_1 \in (0, \lambda)$.

\begin{align*}
    &\large{\raisebox{.5pt}{\textcircled{\tiny 2}}} : ||\nabla_{\bm x_t} \omega_k(\bm x_t +  \lambda_1 \Delta \bm x)||_2\\
    =&\dfrac{\alpha_t}{\left(1 - \alpha_t\right)} \omega_k||\bm M_k \bm M_k^\top(\bm x_t+  \lambda_1 \Delta \bm x) - \sum_{l=1}^K  \omega_l  \bm M_l \bm M_l^\top (\bm x_t+  \lambda_1 \Delta \bm x)||_2 \\
    \leq& \dfrac{\alpha_t}{\left(1 - \alpha_t\right)} \omega_k \left( ||\bm M_k^\top\bm x_t||_2 + \sum_{l=1}^K  \omega_l || \bm M_l^\top \bm x_t||_2 + \lambda_1|| \bm M_k^\top \Delta \bm x||_2 + \lambda_1\sum_{l=1}^K  \omega_l || \bm M_l^\top \Delta \bm x||_2\right) \\
    \leq& \dfrac{\alpha_t}{\left(1 - \alpha_t\right)} \omega_k \left(||\bm M_k^\top||_F||\bm x_t||_2 + \sum_{l=1}^K  \omega_l ||\bm M_l^\top||_F|| \bm x_t||_2 + \lambda_1 ||\bm M_k^\top||_F + \lambda_1\sum_{l=1}^K  \omega_l ||\bm M_l^\top||_F\right) \\
    \leq& \dfrac{\alpha_t}{\left(1 - \alpha_t\right)} \omega_k \left(r_k + \sum_{l=1}^K \omega_l r_l\right) \left(\sqrt{2} \max\{||\bm x_0||_2, ||\bm \epsilon||_2\} + \lambda_1\right) \\
    \leq& \dfrac{\alpha_t}{\left(1 - \alpha_t\right)} \omega_k(\bm x_t +  \lambda_1 \Delta \bm x) \underbrace{\cdot 2 \cdot \max_k{r_k} \cdot \left(\sqrt{2} \max\{||\bm x_0||_2, ||\bm \epsilon||_2\} + \lambda_1\right)}_{C_1 \coloneqq},
\end{align*}

where the third inequality use the fact that $||\bm x_t||_2 = ||\sqrt{\alpha_t} \bm x_0 + \sqrt{1- \alpha_t}\bm \epsilon||_2 \leq ||\sqrt{\alpha_t} \bm x_0||_2 + ||\sqrt{1- \alpha_t}\bm \epsilon||_2 \leq \sqrt{2} \max\{||\bm x_0||_2, ||\bm \epsilon||_2\}$, we simplified $\omega_k(\bm x_t +  \lambda_1 \Delta \bm x)$ as $\omega_k$ in this prove, and $C_1$ defined in the last inequality is independent of $t$. Similarly, we could prove that:

\begin{align*}
    &\large{\raisebox{.5pt}{\textcircled{\tiny 3}}} : || \bm M_k \bm M_k^\top \left(\bm x_t +  \lambda \Delta \bm x\right) ||_2 \leq \underbrace{ \max_k{r_k} \cdot \left(\sqrt{2} \max\{||\bm x_0||_2, ||\bm \epsilon||_2\} + \lambda\right)}_{C_2 \coloneqq}, 
    \\
    &\large{\raisebox{.5pt}{\textcircled{\tiny 4}}}: || \nabla_{\bm x_t} \omega_k (\bm x_t) ||_2 \leq \dfrac{\alpha_t}{\left(1 - \alpha_t\right)} \omega_k (\bm x_t) \underbrace{ 2\sqrt{2} \cdot \max_k{r_k} \cdot \max\{||\bm x_0||_2, ||\bm \epsilon||_2\}}_{C_3 \coloneqq}, 
    \\
    &\large{\raisebox{.5pt}{\textcircled{\tiny 5}}}: || \bm M_k \bm M_k^\top \bm x_t ||_2 \leq \underbrace{ \sqrt{2} \max_k{r_k} \cdot  \max\{||\bm x_0||_2, ||\bm \epsilon||_2\}}_{C_4 \coloneqq}. 
\end{align*}

Here, $C_1  = \mathcal{O}(\lambda), C_2  = \mathcal{O}(\lambda), C_3  = \mathcal{O}(\lambda), C_4  = \mathcal{O}(\lambda)$. 
After plugin $\large{\raisebox{.5pt}{\textcircled{\tiny 2}}}, \large{\raisebox{.5pt}{\textcircled{\tiny 3}}}, \large{\raisebox{.5pt}{\textcircled{\tiny 4}}}, \large{\raisebox{.5pt}{\textcircled{\tiny 5}}}$ to $\large{\raisebox{.5pt}{\textcircled{\tiny 1}}}$, we could obtain:

\begin{align*}
    &||\E\left[ \bm x_0\vert \bm x_t + \lambda \Delta \bm x\right] - \E\left[ \bm x_0\vert \bm x_t\right] - \lambda \nabla_{\bm x_t}\E[\bm x_0\vert \bm x_t] \Delta \bm x ||_2 \\
    \leq& \lambda \sqrt{\alpha_t}  \sum_{k=1}^{K}  \dfrac{\alpha_t}{\left(1 - \alpha_t\right)} \omega_k(\bm x_t +  \lambda_1 \Delta \bm x) C_1 C_2 + \lambda \sum_{k=1}^{K}  \dfrac{\alpha_t}{\left(1 - \alpha_t\right)} \omega_k(\bm x_t) C_3 C_4 \\
    =& \lambda \dfrac{\alpha_t}{\left(1 - \alpha_t\right)} \mathcal{O}(\lambda)
\end{align*}

\textbf{Finally, let's prove the property of the left singular vector of $\nabla_{\bm x_t}\E\left[ \bm x_0\vert \bm x_t\right]$}:

From \Cref{lemma:symmetric_posterior_mean}, the eigenvalue decomposition of $\nabla_{\bm x_t}\E\left[ \bm x_0\vert \bm x_t\right]$ could be written as $\nabla_{\bm x_t}\E\left[ \bm x_0\vert \bm x_t\right] = \bm U_t \Lambda_t \bm U^\top_t$, where $\Lambda_t = \mathrm{diag}(\lambda_{t,1},\dots,\lambda_{t, r}, \dots, 0)$, and the relation between eigenvalue decomposition and singular value decomposition of $\nabla_{\bm x_t}\E\left[ \bm x_0\vert \bm x_t\right]$ could be summarized as for all $i \in [r]$:
\begin{align*}
    \sigma_{t, i} = |\lambda_{t,i}|, \;\; \bm v_i = \text{sign}\left(\lambda_{t,i}\right) \bm u_i, 
\end{align*}
 where $\text{sign}\left(\cdot\right)$ is the sign function. Therefore, we have:
 \begin{equation}
    \label{eq:relation_left_right_singuar_vector}
     \bm U_{t, 1}\bm U^\top_{t, 1} = \bm V_{t, 1}\bm V^\top_{t, 1},
 \end{equation}
 given $\bm V_{t, 1} := \left[\bm v_{t, 1}, \bm v_{t, 2}, \ldots, \bm v_{t, r}\right]$. From \Cref{lemma:jacobian of posterior mean}, we define:
\begin{align*}
    \nabla_{\bm x_t}\E\left[ \bm x_0\vert \bm x_t\right]
    &= \sqrt{\alpha_t} \sum_{k = 1}^{K}\omega_k(\bm x_t) \bm M_k \bm M_k^\top + \underbrace{\sqrt{\alpha_t} \sum_{k = 1}^{K} \nabla_{\bm x_t} \omega_k(\bm x_t) \bm x_t^\top \bm M_k \bm M_k^\top}_{\bm \Delta_t := }.
\end{align*}

From the full singular value decomposition of $\nabla_{\bm x_t}\E\left[ \bm x_0\vert \bm x_t\right]$ and $\sqrt{\alpha_t} \sum_{k = 1}^{K}\omega_k(\bm x_t) \bm M_k \bm M_k^\top$:
\begin{align*}
    &\nabla_{\bm x_t}\E\left[ \bm x_0\vert \bm x_t\right] = \begin{bmatrix}\bm U_{t, 1} & \bm U_{t, 2} \\ \end{bmatrix}\begin{bmatrix}\bm \Sigma_{t, 1} & \bm 0 \\ \bm 0 & \bm \Sigma_{t, 2} \\ \end{bmatrix} \begin{bmatrix}\bm V_{t, 1} \\ \bm V_{t, 2} \end{bmatrix}^\top, \\
    &\sqrt{\alpha_t} \sum_{k = 1}^{K}\omega_k(\bm x_t) \bm M_k \bm M_k^\top = \begin{bmatrix}\Hat{\bm U}_{t, 1} & \Hat{\bm U}_{t, 2} \\ \end{bmatrix}\begin{bmatrix}\Hat{\bm \Sigma}_{t, 1} & \bm 0 \\ \bm 0 & \Hat{\bm \Sigma}_{t, 2} \\ \end{bmatrix} \begin{bmatrix}\Hat{\bm V}_{t, 1} \\ \Hat{\bm V}_{t, 2} \end{bmatrix}^\top.
\end{align*}
where:
\begin{align*}
    &\bm \Sigma_{t, 1} = \begin{bmatrix} \sigma_{t, 1} & & \\ &\ddots & \\ & & \sigma_{t, r}\end{bmatrix}, \bm \Sigma_{t, 2} = \begin{bmatrix} \sigma_{t, r + 1} & & \\ &\ddots & \\ & & \sigma_{t, n}\end{bmatrix}, \\ &\Hat{\bm \Sigma}_{t, 1} = \begin{bmatrix} \Hat{\sigma}_{t, 1} & & \\ &\ddots & \\ & & \Hat{\sigma}_{t, r}\end{bmatrix}, 
    \Hat{\bm \Sigma}_{t, 2} = \begin{bmatrix} \Hat{\sigma}_{t, r+1} & & \\ &\ddots & \\ & & \Hat{\sigma}_{t, n}\end{bmatrix}\\
    & \sigma_{t, 1} \geq \sigma_{t, 2} \geq \ldots \geq \sigma_{t, r} \geq \ldots \geq \sigma_{t, d}, \ \ \ \ \Hat{\sigma}_{t, 1} \geq \Hat{\sigma}_{t, 2} \geq \ldots \geq \Hat{\sigma}_{t, r} \geq \ldots \geq \Hat{\sigma}_{t, d}, \ \ r = \sum_{k = 1}^{K} r_k.\\
\end{align*}

From \Cref{eq:low_rank}, we know that $\sigma_{t, r+1} = \ldots = \sigma_{t, d} = 0$. It is easy to show that:
\begin{align*}
    \bm M \coloneqq \bm \Hat{V}_{t, 1} = \begin{bmatrix}  \bm M_{s_1} & \bm M_{s_2} & \ldots & \bm M_{s_K}  \end{bmatrix},
\end{align*}

where $\{s_1, s_2, \ldots, s_K\} = \{1, 2, \ldots, K\}$ satisfied $\omega_{s_1}(\bm x_t) \geq \omega_{s_2}(\bm x_t) \geq \ldots \geq \omega_{s_K}(\bm x_t)$. And $\Hat{\sigma}_{t, r} = \sqrt{\alpha_t} \omega_{s_K}(\bm x_t) = \sqrt{\alpha_t}\min_{k} \omega_{k}(\bm x_t)$. Based on the Davis-Kahan theorem~\autocite{davis1970rotation}, we have:

\begin{align*}
    ||\left(\bm I_d - \bm V_{t, 1} \bm V_{t, 1}^\top\right) \bm M||_F 
    &\leq \frac{||\bm \Delta_t||_F}{\min_{1 \leq i \leq r, r+1 \leq j \leq d}|\Hat{\sigma}_{t, i} - \sigma_{t, j}|} \\
    &= \dfrac{||\sqrt{\alpha_t} \sum_{k = 1}^{K} \nabla_{\bm x_t} \omega_k(\bm x_t) \bm x_t^\top \bm M_k \bm M_k^\top||_F}{\sqrt{\alpha_t} \min_{k} \omega_{k}(\bm x_t)} \\
    &\leq \dfrac{\sum_{k = 1}^{K}|| \nabla_{\bm x_t} \omega_k(\bm x_t)||_F || \bm x_t^\top \bm M_k \bm M_k^\top||_F}{\min_{k} \omega_{k}(\bm x_t)} \\
    &= \frac{\alpha_t}{1 - \alpha_t} \frac{C_3 C_4}{\min_{k} \omega_{k}(\bm x_t)}
\end{align*}.

Because $\lim_{t \rightarrow 1} \min_{k} \omega_{k}(\bm x_t) = \dfrac{1}{K}$, $\lim_{t \rightarrow 1} \dfrac{\alpha_t}{1 - \alpha_t} = 0$, so:
\begin{align*}
    \lim_{t \rightarrow 1}||\left(\bm I_d - \bm V_{t, 1} \bm V_{t, 1}^\top\right) \bm M||_F = 0.
\end{align*}

And from \Cref{eq:relation_left_right_singuar_vector}, we have:

\begin{align*}
    \lim_{t \rightarrow 1}||\left(\bm I_d - \bm U_{t, 1} \bm U_{t, 1}^\top\right) \bm M||_F = 0.
\end{align*}

\end{proof}

\section{Image Editing and Evaluation Experiment Details}
\label{appendix:exp_setup}

All the experiments can be conducted with a single A40 GPU having 48G memory.

\subsection{Editing in Unconditional Diffusion Models of Different Datasets}

\paragraph{Datasets.} We demonstrate the unconditional editing method in various dataset: FFHQ~\autocite{FFHQ}, CelebaA-HQ~\autocite{CelebA}, AFHQ~\autocite{AFHQ}, Flowers~\autocite{Flowers}, MetFace~\autocite{MetFaces}, and LSUN-church~\autocite{LSUN}. 

\paragraph{Models.} Following~\autocite{park2023understanding}, we use DDPM~\autocite{ddpm} for CelebaA-HQ and LSUN-church, and DDPM trained with P2 weighting~\autocite{P2} for FFHQ, AFHQ, Flowers, and MetFaces. We download the official pre-trained checkpoints of resolution $256 \times 256$, and keep all model parameters frozen. We use the same linear schedule including 
100 DDIM inversion steps~\autocite{ddim} as~\autocite{park2023understanding}. Further, we apply quanlity boosting after $t = 0.2$ as proposed in~\autocite{boost}.

\paragraph{Edit Time Steps.} We empirically choose the edit time step $t$ for different datasets in the range $[0.5, 0.8]$. In practice, we found time steps within the above range give similar editing results. In most of the experiments, the edit time steps chosen are: $0.5$ for FFHQ, $0.6$ for CelebaA-HQ and LSUN-church, $0.7$ for AFHQ, Flowers, and MetFace.

\paragraph{Editing Strength.} In the empirical study of local linearity, we observed that the local linearity is well-preserved even with a strength of $300$. In practice, we choose the edit strength $\lambda$ in the range of $[-15.0, 15.0]$, where a larger $\alpha$ leads to stronger semantic editing and a negative $\alpha$ leads to the change of semantics in the opposite direction.

\subsection{Comparing with Alternative Manifolds and Methods} 
\label{subsec:compare_set}

\paragraph{Existing Methods}
We compare with four existing methods: NoiseCLR~\autocite{dalva2023noiseclr}, BlendedDiffusion~\autocite{blendeddiffusion}, Pullback~\autocite{park2023understanding}, and Asyrp~\autocite{hspace}.

\paragraph{Alternative Manifolds.} There are two alternative manifolds where similar training-free approaches can be applied, and each of the alternative involves evaluation of the Jacobians $\dfrac{\partial \bm \epsilon_t}{\partial \bm h_t}$ (equivalently $\dfrac{\partial \bm{\hat{x}}_0}{\partial \bm h_t}$), and $\dfrac{\partial \bm \epsilon_t}{\partial \bm x_t}$ accordingly.
\begin{itemize}[leftmargin=*]
    \item $\dfrac{\partial \bm \epsilon_t}{\partial \bm h_t}$ (or equivalently $\dfrac{\partial \bm{\hat{x}}_{0,t}}{\partial \bm h_t}$ up to a scale) calculates the Jacobian of the noise residual $\bm \epsilon_t$ with respect to the bottleneck feature of $\bm x_t$.
    \item $\dfrac{\partial \bm \epsilon_t}{\partial \bm x_t}$ calculates the Jacobian of the noise residual $\bm \epsilon_t$ with respect to the input $\bm x_t$.  
\end{itemize}
Notably, $\dfrac{\partial \bm \epsilon_t}{\partial \bm h_t}$ has hardly notable editing results on images, and hence we present the editing results of $\dfrac{\partial \bm \epsilon_t}{\partial \bm x_t}$. Besides, with masking and nullspace projection, $\dfrac{\partial \bm \epsilon_t}{\partial \bm x_t}$ also leads to hardly notable changes on images, thus the final comparison is without masking and nullspace projection.


\paragraph{Evaluation Dataset Setup.} In human evaluation, for each method, we randomly select $15$ editing direction on $15$ images. Each direction is transferred to $3$ other images along  both the negative and positive directions, in total $90$ transferability testing cases. Learning time and transfer edit time are averaged over 100 examples. LPIPS~\autocite{LPIPS} and SSIM~\autocite{SSIM} are calculated over $400$ images for each method.

\paragraph{Human Evaluation Metrics.} We measure both Local Edit Success Rate and Transfer Success Rate via human evaluation on CelebA-HQ. i) Local Edit Success Rate: The subject will be given the source image with the edited one, if the subject judges only one major feature among \{"eyes", "nose", "hair", "skin", "mouth", "views", "Eyebrows"\} are edited, the subject will respond a success, otherwise a failure. ii) Transfer Success Rate: The subject will be given the source image with the edited one, and another image with the edited one via transferring the editing direction from the source image. The subject will respond a success if the two edited images have the same features changed, otherwise a failure. We calculate the average success rate among all subjects for both Local Edit Success Rate and Transfer Success Rate. Lastly, we have ensured no harmful contents are generated and presented to the human subjects.

\paragraph{Learning Time.} Learning time is a measure of the time it takes to compute local basis(training free approaches), to train an implicit function, or to optimize certain variables that help achieve editing for a specific edit method.

\subsection{Editing in T2I Diffusion Models}
\label{appendix:t2i_setup}

\paragraph{Models.} We generalize our method to three types of T2I diffusion models: DeepFloyd~\autocite{DeepFloydIF}, Stable Diffusion~\autocite{stablediffusion}, and Latent Consistency Model~\autocite{LCM}. We download the official checkpoints and keep all model parameters frozen. The same scheduling as that in the unconditional models is applied to DeepFloyd and Stable Diffusion, except that no quality boosting is applied. We follow the original schedule for Latent Consistency Model~\autocite{LCM} with the number of inference steps set as $4$.

\paragraph{Edit Time Steps.} We empirically choose the the edit time step $t$ as $0.75$ for DeepFloyd and $0.7$ for Stable DIffusin. As for Latent Consistency Model, image editing is performed at the second inference step. 

\paragraph{Editing Strength.} For unsupervised image editing, we choose $\lambda \in [-5.0, 5.0]$ in Stable Diffusion, $\lambda \in [-15.0, 15.0]$ in DeepFloyd, and $\lambda \in [-5.0, 5.0]$ in Latent Consistency Model. For text-supervised image editing, we choose $\lambda \in [-10.0, 10.0]$ in Stable Diffusion, $\lambda \in [-50.0, 50.0]$ in DeepFloyd, and $\lambda \in [-10.0, 10.0]$ in Latent Consistency Model. 

\section{Social Impacts and Safeguards}
\label{sec:safeguards}

The paper originally presents a new image manipulation method, with a theoretical framework to deepen the understanding of diffusion models. However, there exist potential social impacts that the proposed methods can be misused in generating and manipulating harmful content. Therefore, we will release our code and models with license and ethics commitments in the future. Besides, methods for identifying and preventing such harmful behaviors are of great significance in generative models.

\end{document}